\documentclass[11pt]{article}

\usepackage{amsmath,amssymb,amsthm}
\usepackage{mathtools}

\usepackage[noend]{algorithmic}
\usepackage[ruled,vlined]{algorithm2e}
\usepackage{placeins}
\usepackage{setspace}
\usepackage{url}
\usepackage{makeidx}
\usepackage[shortlabels]{enumitem}

\usepackage{mathrsfs}
\DeclareSymbolFont{rsfs}{U}{rsfs}{m}{n}
\DeclareSymbolFontAlphabet{\mathscrsfs}{rsfs}

\def\E{{\mathbb E}}

\def\hE{\hat{\mathbb E}}
\def\hR{\hat{R}}

\def\bfone{{\boldsymbol 1}}

\def\Ball{{\sf B}}

\def\btheta{{\boldsymbol \theta}}
\def\id{{\boldsymbol I}}
\def\bg{{\boldsymbol g}}
\def\bm{{\boldsymbol m}}
\def\bx{{\boldsymbol x}}
\def\by{{\boldsymbol y}}
\def\bD{{\boldsymbol D}}
\def\bX{{\boldsymbol X}}
\def\bY{{\boldsymbol Y}}
\def\bW{{\boldsymbol W}}
\def\bA{{\boldsymbol A}}

\def\bB{{\boldsymbol B}}

\def\normal{{\sf N}}

\def\<{{\langle}}
\def\>{{\rangle}}

\def\sML{\mbox{\tiny\rm ML}}

\def\cL{{\mathcal L}}

\def\eps{{\varepsilon}}

\def\bfone{{\boldsymbol 1}}

\newcommand{\Tr}{\mathrm{Tr}}

\newcommand{\argmin}{\mbox{argmin}}

\newcommand{\beq}{\begin{equation}}
\newcommand{\eeq}{\end{equation}}

\newcommand{\bff}{\mathrm{\bf f}}

\newcommand{\be}{\mathrm{\bf e}}
\newcommand{\bh}{\mathrm{\bf h}}
\newcommand{\br}{\mathrm{\bf r}}

\newcommand{\bu}{\mathrm{\bf u}}
\newcommand{\bv}{\mathrm{\bf v}}

\newcommand{\bz}{\mathrm{\bf z}}
\newcommand{\bZ}{\mathrm{\bf Z}}
\newcommand{\bC}{\mathrm{\bf C}}

\newcommand{\bF}{\mathrm{\bf F}}
\newcommand{\bK}{\mathrm{\bf K}}

\newcommand{\bM}{\mathrm{\bf M}}
\newcommand{\bH}{\mathrm{\bf H}}

\newcommand{\bV}{\mathrm{\bf V}}
\newcommand{\bQ}{\mathrm{\bf Q}}
\newcommand{\bR}{\mathrm{\bf R}}

\newcommand{\bb}{\text{\boldmath $b$}}

\newcommand{\bbeta}{\text{\boldmath $\beta$}}
\newcommand{\bdelta}{\text{\boldmath $\delta$}}
\newcommand{\bDelta}{\text{\boldmath $\Delta$}}

\newcommand{\bOmega}{\text{\boldmath $\Omega$}}

\newcommand{\bPhi}{\text{\boldmath $\Phi$}}

\newcommand{\cA}{\mathcal{A}}

\def\normal{{\sf N}}
\def\id{{\boldsymbol I}}

\def\de{{\rm d}}

\def\eps{\varepsilon}

\def\bx{{\boldsymbol x}}

\def\bfone{{\boldsymbol 1}}

\usepackage[top=1in, bottom=1.25in, left=0.8in, right=0.8in]{geometry}
\usepackage[utf8]{inputenc}

\usepackage{graphicx}
\usepackage{multirow}
\usepackage{bbm}
\usepackage{booktabs}
\usepackage{bm}
\usepackage{etoc}
\usepackage{float}
\usepackage{epsfig}
\usepackage{caption}
\usepackage{subcaption}
\usepackage[dvipsnames,svgnames,table]{xcolor}
\usepackage{epstopdf}
\usepackage{color}
\usepackage{adjustbox}

\usepackage[
  pagebackref,
  colorlinks=true,
  pdfpagemode=UseNone,
  citecolor=OliveGreen,
  linkcolor=BrickRed,
  urlcolor=BrickRed,
  pdfstartview=FitH
]{hyperref}

\usepackage{xr}
\usepackage{cancel}
\usepackage{calc}
\usepackage{scalerel,stackengine}
\usepackage[numbers,sort&compress]{natbib}
\stackMath
\newcommand\reallywidehat[1]{%
\savestack{\tmpbox}{\stretchto{%
  \scaleto{%
    \scalerel*[\widthof{\ensuremath{#1}}]{\kern.1pt\mathchar"0362\kern.1pt}%
    {\rule{0ex}{\textheight}}%
  }{\textheight}%
}{2.4ex}}%
\stackon[-6.9pt]{#1}{\tmpbox}%
}

\numberwithin{equation}{section}

\newtheoremstyle{myexample}
    {\topsep}
    {\topsep}
    {\rm}
    {}
    {\bf}
    {.}
    {.5em}
    {}

\newtheoremstyle{myremark}
    {\topsep}
    {\topsep}
    {\rm}
    {}
    {\bf}
    {.}
    {.5em}
    {}

\newtheorem{claim}{Claim}[section]
\newtheorem{lemma}[claim]{Lemma}

\newtheorem{assumption}{Assumption}

\newtheorem{theorem}{Theorem}
\newtheorem{proposition}[claim]{Proposition}
\newtheorem{corollary}[claim]{Corollary}

\theoremstyle{myremark}
\newtheorem{remark}{Remark}[section]

\theoremstyle{myexample}
\newtheorem{example}[remark]{Example}

\definecolor{darkgreen}{rgb}{0.0, 0.5, 0.0}

\def\de{{\rm d}}
\def\prob{{\mathbb P}}
\def\<{\langle}
\def\>{\rangle}
\def\top{\intercal}
\def\cL{{\mathcal L}}

\def\hE{{\widehat{\mathbb E}}}
\def\bpsi{\boldsymbol{\psi}}
\def\hR{\hat{R}}
\def\bff{{\boldsymbol f}}
\def\bPhi{{\boldsymbol \Phi}}
\def\tbPsi{{\tilde{\boldsymbol \Psi}}}
\def\hbh{\hat{\boldsymbol h}}
\def\hbm{\hat{\boldsymbol m}}
\def\bA{\boldsymbol{A}}
\def\bB{\boldsymbol{B}}
\def\bC{\boldsymbol{C}}
\def\bD{\boldsymbol{D}}
\def\bF{\mathbf{F}}
\def\bG{\boldsymbol{G}}
\def\bH{\boldsymbol{H}}
\def\bJ{\boldsymbol{J}}
\def\bL{\boldsymbol{L}}
\def\bQ{\boldsymbol{Q}}
\def\bS{\boldsymbol{S}}
\def\bV{\boldsymbol{V}}
\def\bW{\boldsymbol{W}}
\def\bX{\boldsymbol{X}}
\def\bY{\boldsymbol{Y}}
\def\bZ{\boldsymbol{Z}}
\def\bb{\boldsymbol{b}}
\def\bg{\boldsymbol{g}}
\def\bh{\boldsymbol{h}}
\def\bm{\boldsymbol{m}}
\def\bu{\boldsymbol{u}}
\def\bv{\boldsymbol{v}}
\def\bx{\boldsymbol{x}}
\def\by{\boldsymbol{y}}
\def\bz{\boldsymbol{z}}
\def\hbtheta{\hat{\boldsymbol \theta}}
\def\normal{{\mathsf{N}}}
\def\btheta{\boldsymbol{\theta}}
\def\bDelta{\boldsymbol{\Delta}}
\def\bOmega{\boldsymbol{\Omega}}
\def\bdelta{\boldsymbol{\delta}}
\def\hq{\hat{q}}
\def\id{{\boldsymbol I}}
\def\rP{{\rm P}}
\def\interior{{\rm int}}
\def\strain{\mbox{\tiny\rm train}}
\def\stest{\mbox{\tiny\rm test}}

\def\ssynth{\mbox{\tiny\rm synth}}
\def\sdiff{\mbox{\tiny\rm diff}}

\def\bd{\boldsymbol{d}}
\def\hP{\hat{P}}
\def\TV{{\rm TV}}
\def\cA{\mathcal{A}}
\def\oY{\overline{Y}}
\def\oX{\overline{X}}
\def\br{{\boldsymbol r}}
\def\bphi{{\boldsymbol \phi}}
\def\reals{{\mathbb R}}
\newcommand{\contdiff}{\text{DiffLTM}}
\newcommand{\archscore}[2]{#1\,{\tiny\normalfont\,(#2)}}

\title{DiffLTM: Continuous Diffusions Large\\
Tabular Models for Conditional Generation}

\author{
Alankrita Bhatt\textsuperscript{1}\thanks{Authors are listed in alphabetical order.}\quad
Mukur Gupta\textsuperscript{1}\quad
Andrea Montanari\textsuperscript{1,2}\quad
Pulkit Tandon\textsuperscript{1}\thanks{Current affiliation: Apple.}\\[0.75em]
\small\textsuperscript{1}Granica Computing Inc.\\
\small\textsuperscript{2}Stanford University
}

\date{}

\begin{document}

\maketitle

\begin{abstract}
Conditional generation is a key capability to be supported by Large Tabular Models, enabling a number of applications. Accurate conditional generation has to contend with two key challenges: $(i)$~Data comprise a mixture of numerical and categorical entries, the latter often with high cardinality and heavily unbalanced proportions; $(ii)$~Conditional generation becomes increasingly difficult for rare sub-populations. The best models in this class use a hybrid of continuous and discrete diffusions. In this paper, we design a continuous diffusion model with strong generation quality. An important element is the use of cross-entropy loss for categoricals, for which we provide a first theoretical justification. 
We introduce metrics that focus on rare subpopulations.  Further, we generalize conditional generation techniques
to enable new types of conditioning. Finally, we show that our model (trained on labor data) can capture classical insights from econometrics.

Code is available at \url{https://github.com/granica-ai/DiffLTM}.
\end{abstract}

\tableofcontents

\section{Introduction} \label{sec:intro}

The term Large Tabular Model (LTM) was coined by \cite{van2024position}
to denote generative models for tabular data with cross-dataset modeling capabilities,
enabling a variety of tasks. Among the key challenges outlined in \cite{van2024position},
we highlight three:
\begin{itemize}
\item[$(i)$] Tabular data present a mixture of categorical and numerical data, 
    and categoricals are often high-cardinality or heavily skewed.
\item[$(ii)$] Standard evaluation metrics (e.g., FID, precision-recall) are often hard to interpret or misleading in the tabular setting, and synthetic samples are hard to interpret.
\item[$(iii)$] Conditional generation is arguably the most important capability for such models, as it enables a number of applications: forecasting; scenario analysis; imputation, and so on. Accurate conditional generation is particularly important and challenging for infrequently occurring conditions.
\end{itemize}

In this paper, we address these challenges. As a result, we design a generative tabular model 
with state of the art conditional generation behavior (Figure~\ref{fig:cfg_acslabour_cond_shape_all}). We train the model on sizable census 
datasets (up to $7$M rows), and compare it with alternative approaches 
using newly designed faithfulness metrics. In the rest of this introduction we summarize
some of our innovations, contrasting them with earlier literature and providing pointers to the rest of the paper.

\begin{figure}[t]
    \centering
    \includegraphics[width=0.9\textwidth]{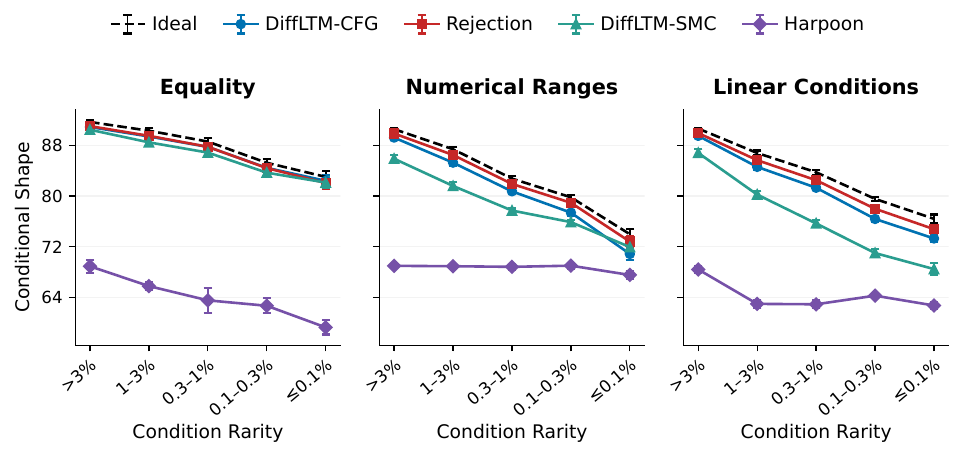}
    \caption{
    Conditional Shape score for conditional generation of ACS Labour data ($n_{\strain} \approx 4.8$M). For each condition $\mathcal{C}$, scores are evaluated on matched synthetic and test samples of size $n_{\ssynth}=n_{\stest}=\min(n_{\stest}^{\mathcal{C}},100{,}000)$ where $n_{\stest}^{\mathcal{C}}$ denotes the number of rows in the test set that satisfy $\mathcal{C}$. 
    \emph{\contdiff{}-CFG}  achieves accuracy very close to \emph{Rejection} sampling 
    (ideal conditioning using samples from \contdiff{}) and to \emph{Ideal}, outperforming
     \emph{\contdiff{}-SMC} and \emph{Harpoon}  (Section~\ref{sec:condgen};  Appendix~\ref{app:CondGenEval}).
    }
    \label{fig:cfg_acslabour_cond_shape_all}
\end{figure}

\paragraph{Metrics.} An important concern with tabular data is that humans 
do not have good intuition of when the generated data
`look good.' This is in sharp 
contrast with images or language models. We address this challenge in two ways.
\emph{First,} we define interpretable fidelity metrics that emphasize the 
accuracy on rare subpopulations (see Section~\ref{subsec:FidelityMetrics}). 
While in hindsight this is a natural idea, previous work uses metrics that are dominated by majority 
classes \citep{shi2025tabdiff, mueller2025continuous}.

\emph{Second,} we argue that an
LTM must correctly reproduce \emph{non-obvious statistical patterns} in 
the target population. To implement this idea, we consider two major
discoveries in econometrics: counterfactual analysis using 
synthetic controls
and Mincer equation on returns to education. 
In a sense, checking such patterns is the analogue of the `6-fingers test' for image generation.
We demonstrate that our LTM reproduces these patterns (see Figure~\ref{fig:AZ_PA_SC_result}).

\begin{figure}[t]
    \centering
    \includegraphics[width=0.9\textwidth]{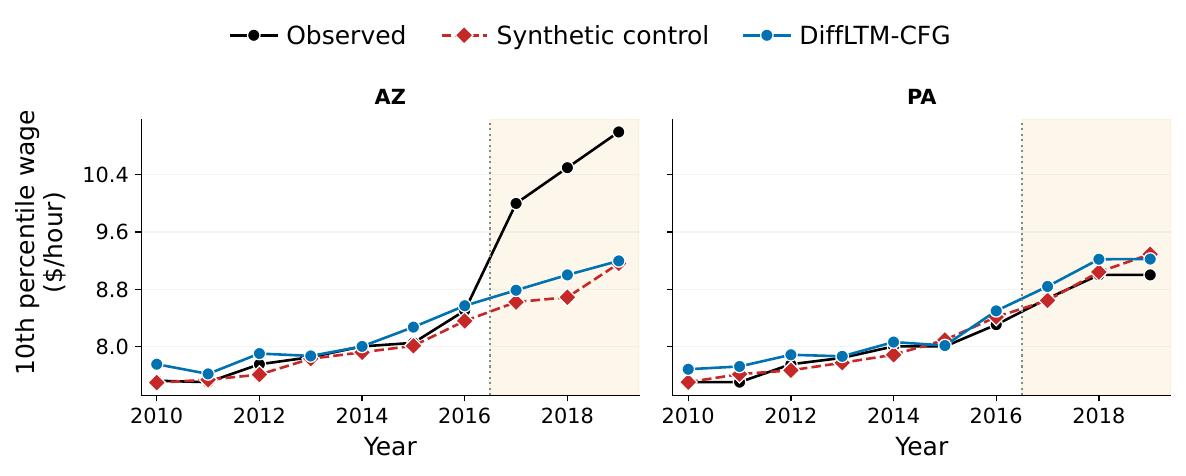}
    \caption{ Evolution of 10th percentile hourly wage in Arizona and Pennsylvania: 
    observed wages display the effect of the Arizona minimum wage increase post-2016.
    Comparison of counterfactuals based on synthetic controls and conditional generation.
      Trained without each state's post-2016 rows (shaded), \emph{\contdiff{}-CFG} predicts the held-out Pennsylvania wages 
      (placebo, right), and in Arizona (left) produces a no-policy counterfactual well below the observed post-2017 wages. (Section~\ref{subsec:Econ}; Appendix~\ref{app:AZMinwageDetails}).
    }
    \label{fig:AZ_PA_SC_result}
\end{figure}

\paragraph{Architecture.} We develop a diffusion LTM
(DiffLTM). Given the mixture of categorical and numerical data in tabular datasets, an important choice is whether to use discrete 
or continuous diffusion, or a hybrid. State of
the art methods such as TabDiff \citep{shi2025tabdiff} are hybrid,
while examples of purely continuous methods are \cite{mueller2025continuous, mueller2026cascaded, si2025tabrep}.

Inspired by recent results on theoretical and empirical superiority of continuous diffusions in the canonical setting of random constraint satisfaction problems \cite{BGKM2026} we demonstrate that excellent generative performance can be achieved by a
purely continuous model, provided embeddings and loss function are chosen carefully. 
We compare our model with continuous diffusion baselines  and demonstrate state of the art unconditional generation performance (Table \ref{tab:unconditional-baselines-vs-learnt-ce}). 

We devote significant attention to the choice of the loss for categoricals. 
A subset of earlier applied works  \citep{mueller2025continuous, mueller2026cascaded, eijkelboom2024variational},
has chosen cross-entropy (CE) loss even in conjunction with categoricals embeddings and continuous diffusions.
The motivation for this choice is not clear, and theoretical arguments appear to support score matching (a.k.a. MSE)
loss for continuous diffusions \citep{chewi2026ddpm}. We present empirical evidence that CE with learned embeddings
is preferable (Table~\ref{tab:unconditional-architectures}).

Further, we show mathematically that --under a simple 
misspecification scenario-- cross-entropy is indeed the optimal choice (Theorem \ref{thm:Overall-Limit-Gen-Informal}).
We also prove that continuous diffusions result in a bounded inefficiency in a simple discrete variable generation task 
(Theorem \ref{thm:Bounded-Suboptimality}).   

\paragraph{Conditional generation.} We study three types of conditions: $(i)$ \emph{equality conditions}: a subset of columns take specified values;  $(ii)$ \emph{range conditions}: numerical column $i$ 
lies in a range $[L_i,U_i]$;
$(iii)$ \emph{linear conditions}: A linear combination of the numerical columns $x_i$ satisfies  $L \leq \sum_i w_i x_i \leq U$.
A number of important applications can be modeled using conditions of 
these three types. We describe how to adapt classifier free guidance
(CFG) \citep{ho2022classifier} to enable these conditions, and show state-of-the art (and near-optimal) performances. 
We also adapt inference-time interventions (in particular 
Sequential Monte Carlo (SMC) \citep{doucet2001introduction}) and compare the results with CFG, showing that both 
of these approaches significantly outperform baselines in tabular conditional generation \citep{shankar2026harpoon} (Figure~\ref{fig:cfg_acslabour_cond_shape_all}).

\subsection{Further related work}

The literature on generative models for tabular data is rapidly
growing. We provide a few pointers beyond the ones 
mentioned above. 

Evaluation   is well recognized to be an outstanding challenge 
because of the heterogeneity of tabular data
 \citep{nafis2025critical}. Metrics based on statistical accuracy
\citep{alaa2022faithful}, downstream utility of synthetic data 
\citep{xu2019modeling},
latent causal relationships \citep{jiang2025well} 
have been developed.

A variety of techniques and ideas have been
brought to bear to develop generative models for tabular data.
Among others, generative adversarial networks \citep{xu2019modeling},
graph neural networks \citep{liu2023goggle},
based on autoregressive models \cite{borisov2022language},
latent space diffusions  \citep{zhang2024mixed},
variational flow matching \cite{guzman2025exponential}.

Finally, a small number of recent papers develop models for
relational databases beyond the single-table setting
\cite{hudovernik2025reldiff,ketata2026joint,scassola2026graph}.

\section{Metrics} \label{sec:Metrics}

\begin{table}[t]
    \centering
    \small
    \caption{Comparison of \contdiff{}, TabRep~\citep{si2025tabrep}, TabCascade~\citep{mueller2026cascaded} and TabDiff~\citep{shi2025tabdiff} on unconditional generation. Higher values indicate better performance for all metrics except the DCR Ratio 
    (mean (SD) over three training seeds).
    If multiple methods are within $0.3\%$ of the best mean, all are underlined and none is bold; otherwise the best is bold and the second-best underlined.
    }    
    \label{tab:unconditional-baselines-vs-learnt-ce}
    \begin{adjustbox}{width=\linewidth}
    \begin{tabular}{llrrrrrr}
    \hline
    Dataset & Method & Shape$\times 100$ & Trend$\times 100$ & CShape$\times 100$ & CShapeR$\times 100$ & MLE$\times 100$ & DCR ratio \\
    \hline
    \multirow{4}{*}{ACSLabour} & TabRep & \archscore{93.52}{0.12} & \archscore{87.03}{0.16} & \archscore{88.55}{0.13} & \archscore{87.10}{0.15} & \archscore{98.46}{0.06} & \archscore{1.001}{0.000} \\
     & TabCascade & \archscore{\underline{99.03}}{0.05} & \archscore{\underline{97.61}}{0.06} & \archscore{\underline{94.25}}{0.31} & \archscore{\underline{92.33}}{0.54} & \archscore{\underline{98.72}}{0.15} & \archscore{1.000}{0.000} \\
     & TabDiff & \archscore{98.64}{0.25} & \archscore{96.00}{0.36} & \archscore{92.78}{0.26} & \archscore{90.81}{0.20} & \archscore{97.29}{2.97} & \archscore{1.000}{0.000} \\
     & \contdiff & \archscore{\underline{99.04}}{0.04} & \archscore{\underline{97.66}}{0.04} & \archscore{\textbf{95.37}}{0.04} & \archscore{\textbf{93.93}}{0.03} & \archscore{\textbf{99.41}}{0.05} & \archscore{1.000}{0.000} \\
    \hline
    \multirow{4}{*}{ACSPubCov} & TabRep & \archscore{99.15}{0.04} & \archscore{98.51}{0.05} & \archscore{98.51}{0.08} & \archscore{97.51}{0.11} & \archscore{99.01}{0.05} & \archscore{1.002}{0.001} \\
     & TabCascade & \archscore{98.02}{0.13} & \archscore{96.25}{0.27} & \archscore{93.09}{0.57} & \archscore{82.69}{4.03} & \archscore{92.98}{0.20} & \archscore{0.997}{0.001} \\
     & TabDiff & \archscore{\underline{99.74}}{0.03} & \archscore{\textbf{99.44}}{0.04} & \archscore{\underline{99.00}}{0.04} & \archscore{\underline{97.93}}{0.08} & \archscore{\underline{99.63}}{0.02} & \archscore{1.001}{0.000} \\
     & \contdiff & \archscore{\underline{99.48}}{0.04} & \archscore{\underline{99.09}}{0.06} & \archscore{\underline{98.98}}{0.06} & \archscore{\underline{98.03}}{0.19} & \archscore{\underline{99.58}}{0.06} & \archscore{1.001}{0.001} \\
    \hline
    \multirow{4}{*}{ACSMobility} & TabRep & \archscore{99.08}{0.11} & \archscore{98.51}{0.22} & \archscore{97.91}{0.13} & \archscore{96.70}{0.11} & \archscore{99.10}{0.01} & \archscore{1.084}{0.161} \\
     & TabCascade & \archscore{98.71}{0.02} & \archscore{97.77}{0.10} & \archscore{96.40}{0.30} & \archscore{93.92}{0.76} & \archscore{97.87}{0.35} & \archscore{0.839}{0.026} \\
     & TabDiff & \archscore{\underline{99.62}}{0.05} & \archscore{\textbf{99.29}}{0.10} & \archscore{\underline{98.35}}{0.00} & \archscore{\underline{97.02}}{0.02} & \archscore{\underline{99.60}}{0.02} & \archscore{1.035}{0.035} \\
     & \contdiff & \archscore{\underline{99.39}}{0.12} & \archscore{\underline{98.97}}{0.12} & \archscore{\underline{98.20}}{0.07} & \archscore{\underline{96.88}}{0.22} & \archscore{\underline{99.48}}{0.05} & \archscore{1.090}{0.055} \\
    \hline
    \end{tabular}
    \end{adjustbox}
    \label{tab:unconditional-comparisons}
\end{table}

\subsection{Datasets} \label{subsec:Datasets}

All datasets considered in this work are derived from the American Community Survey Public Use Microdata Sample (ACS PUMS), using
Folktables~\citep{ding2021retiring}. We use two datasets from this suite: ACSPubCov (originally constructed to build predictive models for public health-insurance coverage), and ACSMobility
(for predictive models of residential mobility.)

In addition, we construct \emph{ACSLabour}, a dataset 
which retains continuous log wage, as well as a broad set of labor and demographic
 columns, including high-cardinality occupation, industry, geographic, and birthplace variables.
 (See Appendix~\ref{app:Datadetails} for detailed descriptions.)

\subsection{Fidelity metrics} \label{subsec:FidelityMetrics}

We assess the quality of generated data by comparing two samples:
$n^{\stest}$ hold-out rows from the target table;
$n^{\ssynth}$ rows generated by our model (often $n^{\stest}=n^{\ssynth}$).
All fidelity metrics can be thought of as statistical estimators
of (pseudo-)distances between the corresponding distributions 
$P^{\stest}$, $P^{\ssynth}$. Typically, the `plug-in' estimator that 
replaces $P^{\stest}$, $P^{\ssynth}$ by their empirical versions 
$\hP^{\stest}$, $\hP^{\ssynth}$ is used. 
We provide the definitions of the metrics used below.
Unless otherwise stated, the normal range of these metrics is $[0,1]$,
with $1$ corresponding to the maximum possible value. 
We use \texttt{ltm-eval} package\footnote{\url{https://github.com/granica-ai/ltm-eval}} for 
fidelity metrics.

{\bf Shape.} Shape~\citep{sdmetrics} measures fidelity of 
univariate marginal distributions, averaged uniformly over the 
column index. 
For numerical column $j$, let 
$S_j=1-\mathrm{KS}(\hP_j^{\stest},\hP_j^{\ssynth})$, where $\mathrm{KS}$ 
denotes the Kolmogorov--Smirnov distance and $\hP_j^{\stest}, \hP_j^{\ssynth}$
are the empirical marginals of column $j$ in the test or synthetic 
data\footnote{We view $\mathrm{KS}$ as acting on laws: $\mathrm{KS}(P,Q):=
\sup_{x\in \reals} |P((-\infty,x]) - Q((-\infty,x])|$.}. 
For categorical column $j$, let $S_j=1-\TV(\hP_j^{\stest},
\hP_j^{\ssynth})$, where $\TV$ is the total variation distance. We report
\begin{align}\label{eq:ShapeDefn}
\mathrm{Shape}:=\frac{1}{m}\sum_{j=1}^m S_j.
\end{align}

{\bf Trend.} Trend~\citep{sdmetrics} measures  the fidelity of pairwise marginals 
(for categorical columns) or pairwise correlations (for numerical ones), averaged uniformly over column indices. 
For each pair of columns $(j,\ell)$, if both columns are numerical, let $T_{j\ell}=1-\frac{1}{2}|\widehat{\rho}_{j\ell}^{\mathrm{real}}-\widehat{\rho}_{j\ell}^{\mathrm{syn}}|$, where $\widehat{\rho}_{j\ell}$ is the empirical Pearson correlation. Otherwise, let $T_{j\ell}=1-\TV(\widehat{p}_{j\ell}^{\mathrm{real}},\widehat{p}_{j\ell}^{\mathrm{syn}})$, where $\widehat{p}_{j\ell}$ is the empirical joint distribution of the two columns, with numerical columns discretized when necessary. We report
\begin{align}\label{eq:TrendDefn}
\mathrm{Trend}:=\frac{2}{m(m-1)}\sum_{j<\ell}T_{j\ell}.
\end{align}

{\bf Conditional Shape.} We introduce a new metric named \emph{conditional shape}, 
which is the mean shape \emph{within subpopulations}. For  fixed 
categorical conditioning column $j^*$ and categorical value $a$ 
(the case of a numerical column is analogous), let $\hP^{\stest}_{j|j^*,a}$ and $\hP^{\ssynth}_{j|j^*,a}$ be the empirical
distributions of column $j$ (respectively, real and synthetic)
for rows with $x_{j^*}=a$. The shape score of column $j$ is
$S_{j\mid x_j^*=a}=1-\TV(\hP^{\stest}_{j|j^*,a},\hP^{\ssynth}_{j|j^*,a})$ 
and we define the shape score of subpopulation $(j^*,a)$ by averaging over 
all columns except $j^*$:
\begin{align} \label{eq:SliceShapeDef}
S(j^*,a):=\frac{1}{m-1}\sum_{j\neq j^*}S_{j\mid j^*,a}.
\end{align}
If $\mathcal{A}_j$ is the set of values appearing in  column $j$, we 
report\footnote{We sometimes exclude values whose frequency is below a small threshold.}
\begin{align}\label{eq:CShapeDefn}
\mathrm{CShape}:=\frac{1}{m}\sum_{j=1}^m\frac{1}{|\mathcal{A}_j|}\sum_{a\in\mathcal{A}_j}S(j,a).
\end{align}
Note that subpopulations receives equal weight regardless of their frequency,
hence emphasizing the importance of rare subpopulations with respect to standard Shape.

{\bf Conditional Shape (Rare).}  For a fixed $\tau\in (0,1)$, we denote by 
 $\mathcal{A}_j^{\mathrm{rare}}\subseteq \mathcal{A}_j$
 the subset of values that appear in column $j$ with frequency below $\tau$
  (by default $\tau=0.05$). We define
\begin{align}\label{eq:CShapeRareDefn}
 \mathrm{CShape}_{\mathrm{rare}}:=\frac{1}{m}\sum_{j=1}^m\frac{1}{|\mathcal{A}_j^{\mathrm{rare}}|}\sum_{a\in\mathcal{A}_j^{\mathrm{rare}}}S(j,a).
\end{align}
This metric emphasizes even more accuracy on rare sub-populations.
We note that Conditional Trend and Conditional Trend (Rare) can be defined 
analogously to the above.

{\bf MLE Ratio.} \citep{zhang2024mixed} We train an XGBoost model on synthetic data and evaluate 
it on held-out real data (\emph{Train Synthetic, Test Real}; TSTR).
We divide this score by that of an identically configured XGBoost model
 trained on real data and evaluated on the same held-out set 
 (\emph{Train Real, Test Real}; TRTR). 
 Then, 
\begin{align}\label{eq:MLERatioDefn}
\mathrm{MLE\ Ratio}:=\frac{\mathrm{Score}_{\mathrm{TSTR}}}{\mathrm{Score}_{\mathrm{TRTR}}}.
\end{align}
As score, we use AUC for classification and $R^2$ for regression; values near one 
indicate that the synthetic data preserve downstream predictive utility.

{\bf DCR median ratio.} \citep{zhang2024mixed} Let $\mathcal{D}_{\mathrm{train}}$ and $\mathcal{D}_{\mathrm{test}}$ denote the real training and held-out datasets, 
respectively, and let $\mathcal{D}_{\mathrm{syn}}$ denote the synthetic dataset. 
Let $d(\boldsymbol{z},\mathcal{D}_{\mathrm{train}}) := \min_{\boldsymbol{z'} \in \mathcal{D}_{\mathrm{train}}} d(\boldsymbol{z}, \boldsymbol{z'})$, where $d(\cdot,\cdot)$ is the Euclidean distance between rows $\boldsymbol{z}$ and $\boldsymbol{z'}$ after categorical columns are one-hot encoded. 
We let 
\begin{align}\label{eq:DCRRatioDefn}
\mathrm{DCR\ Ratio}:=
\frac{\operatorname{median}_{\boldsymbol{z}\in\mathcal{D}_{\ssynth}}d(\boldsymbol{z},\mathcal{D}_{\mathrm{train}})}
{\operatorname{median}_{\boldsymbol{z}\in\mathcal{D}_{\stest}}d(\boldsymbol{z},\mathcal{D}_{\mathrm{train}})} .
\end{align}
Values substantially below one are indicative of memorization.

\paragraph{Results: unconditional generation} 
Table \ref{tab:unconditional-comparisons} compares \contdiff{} 
on multiple metrics defined above with three state-of-the-art 
methods\footnote{In order to perform a fair comparison,
we tuned learning rate and increased the number of epochs.}:
TabRep \citep{si2025tabrep}, TabCascade \cite{mueller2026cascaded}, 
and TabDiff \cite{shi2025tabdiff}.  

TabRep and TabCascade are close comparison points since they are based on continuous diffusions. \contdiff{} outperforms them on most metrics and all datasets.
TabDiff is based on a continuous and discrete diffusions (respectively,
for numerical and categorical columns). TabDiff and \contdiff{} have comparable performance 
in two out of three datasets, with an advantage for \contdiff{} on ACSLabour. 

We note that \contdiff{} has an MLP-only architecture (as compared with the use of Transformers in TabDiff), leading to an approximate $5\times$ speed up in inference and training (more details in Appendix~\ref{app:uncondBaselinesDetails}). An additional advantage of entirely continuous diffusions is that there is evidence that  they are better parallelizable \cite{chen2026parallelism}.

We point out that 
earlier benchmarking of these models was carried out on significantly 
smaller datasets than our study. All datasets considered in 
\cite{si2025tabrep,shi2025tabdiff} have less than 100K rows; and
all datasets but one in \cite{mueller2026cascaded} have less than 102K
rows, with one exception comprising 540K rows. In contrast our study
focuses on moderate-size datasets with $\sim 5$M rows.

\section{Diffusion architecture and unconditional generation} \label{sec:diffusion_arch}

\begin{table}[t]
\centering
\small
\caption{Comparison across architectural variants, including MSE versus CE 
loss, representation (Repr) versus logits parameterization, and the effect 
of learned embeddings. Higher values indicate better performance for all metrics 
(mean (SD) over three training seeds) 
If multiple approaches are within $0.3\%$ of the best mean, all are underlined and none is bold; otherwise the best is bold and the second-best underlined.
}    
\label{tab:unconditional-architectures}
\begin{tabular}{llrrrr}
\hline
Dataset & Architecture & Shape & Trend & CShape & MLE \\
\hline
\multirow{4}{*}{ACSLabour} & MSE+Repr & \archscore{89.39}{0.18} & \archscore{79.61}{0.31} & \archscore{72.36}{0.18} & \archscore{91.22}{1.21} \\
 & MSE+Logits & \archscore{93.70}{0.48} & \archscore{87.40}{0.80} & \archscore{81.40}{0.91} & \archscore{97.33}{0.43} \\
 & CE+Logits & \archscore{\underline{98.72}}{0.06} & \archscore{\underline{96.87}}{0.10} & \archscore{\underline{94.87}}{0.07} & \archscore{\underline{99.10}}{0.02} \\
 & CE+Logits+Learnt & \archscore{\textbf{99.04}}{0.04} & \archscore{\textbf{97.66}}{0.04} & \archscore{\textbf{95.37}}{0.04} & \archscore{\textbf{99.41}}{0.05} \\
\hline
\multirow{4}{*}{ACSPubCov} & MSE+Repr & \archscore{98.61}{0.35} & \archscore{97.32}{0.63} & \archscore{95.56}{1.73} & \archscore{98.65}{0.69} \\
 & MSE+Logits & \archscore{98.84}{0.10} & \archscore{97.80}{0.15} & \archscore{97.01}{1.06} & \archscore{99.17}{0.28} \\
 & CE+Logits & \archscore{\underline{99.70}}{0.02} & \archscore{\textbf{99.42}}{0.03} & \archscore{\underline{99.13}}{0.05} & \archscore{\underline{99.55}}{0.02} \\
 & CE+Logits+Learnt & \archscore{\underline{99.48}}{0.04} & \archscore{\underline{99.09}}{0.06} & \archscore{\underline{98.98}}{0.06} & \archscore{\underline{99.58}}{0.06} \\
\hline
\multirow{4}{*}{ACSMobility} & MSE+Repr & \archscore{\underline{99.20}}{0.28} & \archscore{98.55}{0.41} & \archscore{94.59}{1.68} & \archscore{98.91}{0.26} \\
 & MSE+Logits & \archscore{\underline{99.31}}{0.21} & \archscore{\underline{98.75}}{0.34} & \archscore{96.46}{1.16} & \archscore{\underline{99.25}}{0.17} \\
 & CE+Logits & \archscore{\underline{99.48}}{0.14} & \archscore{\underline{98.99}}{0.22} & \archscore{\underline{98.08}}{0.36} & \archscore{99.12}{0.18} \\
 & CE+Logits+Learnt & \archscore{\underline{99.39}}{0.12} & \archscore{\underline{98.97}}{0.12} & \archscore{\underline{98.20}}{0.07} & \archscore{\underline{99.48}}{0.05} \\
\hline
\end{tabular}
\end{table}

In this section, we outline the diffusion architecture, emphasizing 
treatment of categoricals. We refer to Appendices \ref{app:uncondGenDetails} 
and \ref{app:CondGenDetails} for precise details regarding 
in particular the denoiser architecture.  

\paragraph{Continuous embedding.} Consider a table with (without loss of generality) the first $m_{\text{num}}$ columns being numerical, and the latter $m_{\text{cat}}$ columns being categorical. Let categorical column $j$ take values in $\{1,\ldots,C_j\}$. For each such column $j$, we associate an embedding
\[
\phi_j : \{1,\ldots,C_j\} \to \mathbb{R}^{d_j}.
\]
A widely used choice is the \emph{one-hot} embedding 
$\phi_j(i) = e_i \in \reals^{C_j}$. Another choice is 
\emph{learned embeddings} where $\phi_j(i) \in \reals^{d_{j}}$ 
is a vector of learnt parameters. We enforce the normalization
$\|\phi_j(i)\|=\sqrt{d_{j}}$. 
For a data point $\boldsymbol{x}=(x_1,\ldots,x_{m_{\text{num}}+m_{\text{cat}}})$, 
we thus have a continuous representation $\br \in \reals^D$ 
\begin{align} \label{eq:ContRepDefn}
\br = \left(x_1,\ldots,x_{m_{\mathrm{num}}},\phi_1(x_{m_{\mathrm{num}}+1}),\ldots,\phi_{m_{\mathrm{cat}}}(x_m)\right),
\end{align}
where $D = m_{\mathrm{num}} + \sum_{j=1}^{m_{\mathrm{cat}}} d_j$. 

{\bf Diffusion noising process.}
For each training example, we sample $t \sim \mathrm{Unif}[0,1]$, 
which corresponds to a ``noise level''. This level is then mapped to a variance 
via a monotonically increasing noise schedule $\sigma: [0,1] \to 
[\sigma_{\min},\sigma_{\max}]$. In our experiments, we use the 
\emph{power-mean schedule}
\[
\sigma_t = \left(\sigma_{\min}^{1/\rho} + t\left(\sigma_{\max}^{1/\rho} - \sigma_{\min}^{1/\rho}\right)\right)^\rho.
\]
of~\cite{karras2022elucidating}. The forward corruption occurs 
in the continuous representation space, so that for a row's representation $\boldsymbol{r} \in \mathbb{R}^D$ and $\boldsymbol{z} \sim \mathcal{N}(\boldsymbol{0}, I_D)$ the noising process is $\boldsymbol{r}_t := \boldsymbol{r} + \sigma_t\boldsymbol{z}$. 

\paragraph{Denoiser output head.}
The denoiser $f_{\theta}$ receives the noisy representation $\boldsymbol{r}_t$ of target row $\boldsymbol{x}$ and a sinusoidal embedding of the diffusion time $t$. Numerical columns are always predicted directly; for categorical columns, we consider two parameterizations:
\begin{enumerate}
\item \emph{Representation.} The denoiser predicts the clean continuous representation,
\[
f_\theta(r_t,t) = \left(\widehat{\boldsymbol{r}}_{\mathrm{num}},\widehat{\boldsymbol{r}}_{\mathrm{cat},1},\ldots,\widehat{\boldsymbol{r}}_{\mathrm{cat},m_{\mathrm{cat}}}\right),
\]
where $\widehat{\boldsymbol{r}}_{\mathrm{num}} = (\widehat{x}_1,\dotsc,\widehat{x}_{m_{\mathrm{num}}}) \in\mathbb{R}^{m_{\mathrm{num}}}$ and $\widehat{\boldsymbol{r}}_{\mathrm{cat},j}\in\mathbb{R}^{d_j}$. For categorical column $j$, the target is its clean embedding $\phi_j(x_j)$.

\item \emph{Logit.}
Alternatively, the denoiser predicts one vector of logits per categorical column,
\[
f_\theta(\boldsymbol{r}_t,t) = \left(\widehat{\boldsymbol{r}}_{\mathrm{num}},\boldsymbol{z}_1,\ldots,\boldsymbol{z}_{m_{\mathrm{cat}}}\right), \qquad \boldsymbol{z}_j \in \mathbb{R}^{C_j}.
\]
The logits induce the posterior mean embedding
\begin{align} \label{eq:EmbeddingDefLogitMode}
\boldsymbol{\widehat{r}}_{\mathrm{cat},j} = \sum_{c=1}^{C_j} \widehat{p}_{j,c}\phi_j(c)\, ,\qquad
\widehat{p}_{j,c} = \frac{\exp(z_{j,c})}{\sum_{c'=1}^{C_j}\exp(z_{j,c'})}.
\end{align}
\end{enumerate}

The denoiser is a fully-connected MLP with $L$ hidden layers of width
  $d_{\mathrm{h}}$ and \texttt{SiLU} activation. 

\paragraph{Loss function and training.}
For numerical columns, we use mean-squared error:
\[
\ell_{\mathrm{num}} := \frac{1}{m_{\mathrm{num}}}
\|\widehat{\boldsymbol{r}}_{\text{num}}-\boldsymbol{r}_{\text{num}}\|_2^2.
\]
For categorical columns, we consider two choices, square loss in embedding space, or cross-entropy loss
on logits:
    \begin{align} \label{eq:MSELossCatDef}
    \ell_{\mathrm{cat}}^{\mathrm{MSE}} := \frac{1}{m_{\mathrm{cat}}}
    \sum_{j=1}^{m_{\mathrm{cat}}}
    \frac{\left\|\widehat{\boldsymbol{r}}_{\mathrm{cat},j}-\phi_j(x_j)\right\|_2^2}{d_j}
    \qquad
        \ell_{\mathrm{cat}}^{\mathrm{CE}} = \frac{1}{m_{\mathrm{cat}}}\sum_{j=1}^{m_{\mathrm{cat}}}\log \frac{1}{\widehat{p}_{j, x_j}}.
    \end{align}

The total loss function is then $\ell = \ell_{\mathrm{num}} + \ell_{\mathrm{cat}}$. The optimizer then backpropagates the loss through the network and, if applicable, the learnt embedding lookup tables $E_j$.

{\bf Generation.}
Sampling is carried out entirely in the continuous representation space: obtain decreasing noise levels $\sigma_{t_0}>\sigma_{t_1}>\cdots>\sigma_{t_T}$ via the noise schedule and discretization of $[0,1]$, and initialize $\boldsymbol{r}^{(0)} \sim \mathcal{N}(\boldsymbol{0},\sigma_{t_0}^2 I_D)$. 
Using the denoiser prediction, we construct the clean representation $\widehat{\boldsymbol{r}}^{(i)} := \widehat{\boldsymbol{r}}(\boldsymbol{r}^{(i)},t_i)$ and perform the Euler update~\citep{karras2022elucidating}
\begin{align} \label{eq:EulerUpdateDef}
\boldsymbol{r}^{(i+1)} = \boldsymbol{r}^{(i)} + (\sigma_{t_{i+1}}-\sigma_{t_i})\frac{\boldsymbol{r}^{(i)}-\widehat{\boldsymbol{r}}^{(i)}}{\sigma_{t_i}}, \qquad i=0,\ldots,T-1.
\end{align}
Numerical coordinates are returned directly after the final step. For categorical column $j$, representation mode returns $\widehat{x}_j = \arg\min_{c\in\{1,\ldots,C_j\}}\|\widehat{\boldsymbol{r}}_{\mathrm{cat},j}-\phi_j(c)\|_2$, while logit mode returns $\widehat{x}_j = \arg\max_{c\in\{1,\ldots,C_j\}}\widehat{p}_{j,c}$.

{\bf Empirical results.} 
We note that the parameterization and the loss function together 
create three possible architecture variants: representation with MSE loss, logits with MSE loss, and logits with CE loss. The empirical results in Table~\ref{tab:unconditional-architectures} demonstrate the superiority of logits parametrization with CE loss; furthermore, they demonstrate that learnt embeddings enhance the performance for this model vis-a-vis the one-hot embedding.

\section{Conditional generation} \label{sec:condgen}

We study conditional generation using \contdiff{} with two approaches:
Classifier-free guidance (CFG) \citep{ho2022classifier},  and Sequential Monte Carlo
(SMC) \citep{doucet2001introduction}. We extend CFG to range
and linear conditions, and compare them with baselines in Figure~\ref{fig:cfg_acslabour_cond_shape_all}.

\subsection{Train-time intervention: Classifier-free guidance} \label{subsec:CFG}

{\bf Denoiser.}
In addition to $\boldsymbol{r}_t$, the conditional denoiser takes as input the condition $\mathcal{C}$ and outputs $f_\theta(\boldsymbol{r}_t,\mathcal{C},t)$ to generate rows satisfying condition $\mathcal{C}$. 
As anticipated in the introduction, we consider three types of conditions: $(i)$~\emph{equality conditions}: for a categorical column $i$, these
take the form $x_i=a$; $(ii)$~\emph{range conditions}, 
for a numerical column $i$, $x_i\in [L_i, U_i]$; and $(iii)$ \emph{linear conditions}, which constrain a linear function of the numerical columns $\sum_i w_i x_i \in[L, U]$. Multiple of these conditions can be imposed simultaneously.

The denoiser takes as input a tensor that specifies,
for each column, the noisy values as well as a specification
of the various conditions that involve that column 
(e.g. the weight $w_i$ for a linear constraint).
For the case of linear conditions, we also input the value
of the linear function $\sum_i w_i r_{t,i}$ at the current state.

Each condition type is accompanied by a Boolean mask, indicating whether the corresponding condition is present. Whenever an equality condition is specified for column $i$, we replace the noisy state $r_{t,i}$ with a special token $\tau_i$. For numerical columns, $\tau_i$ is a learned scalar, whereas for categorical columns it is a learned embedding. Beyond these  modification to the input representation, the remainder of the architecture is identical to that used for unconditional generation.

{\bf Training.} During CFG training~\citep{ho2022classifier}, 
we need to sample a condition $\mathcal{C}$.
We proceed as follows: we draw a masking probability $p_{\mathrm{mask}} \sim \mathrm{Unif}(0,1)$ and independently select each column for conditioning with probability $p_{\mathrm{mask}}$. Selected categorical columns are always assigned equality conditions. For each selected numerical column, the equality condition is replaced by a range condition with probability $p_{\mathrm{rng}}$. To construct a linear condition, we sample the number of participating numerical columns, $m_{\mathrm{lin}}$,
 uniformly in $\{1,\ldots,m_{\mathrm{num}}\}$ and then select $m_{\mathrm{lin}}$ numerical columns. With probability $p_{\mathrm{lin}}$, we impose a linear condition over these selected columns and remove any equality or range conditions previously assigned to them.


Equality conditions are constructed by setting the conditioned value of each selected column $i$ to its value $x_i$ in the corresponding training row. For numerical range conditions, we must ensure that the optimal conditional denoiser targets the appropriate truncated distribution, $p_{\mathrm{target}}(\boldsymbol{x}) \propto p(\boldsymbol{x})\,\mathbbm{1}\{L_i < x_i < U_i\}$. Specifically, we first sample an interval width $\Delta \sim \mathrm{LogUnif}(\Delta_{\min},\Delta_{\max})$, then sample the lower bound as $L_i \sim \mathrm{Unif}(x_i-\Delta,x_i)$, and set $U_i=L_i+\Delta$. For a linear condition involving $m_{\mathrm{lin}}$ numerical columns, we sample the corresponding weight vector as $\boldsymbol{w}\sim\mathcal{N}(\boldsymbol{0},I_{m_{\mathrm{lin}}})$. We then construct the bounds $L_b$ and $U_b$ using the same interval-width sampling procedure, centered around the value of the linear function $\boldsymbol{w}^{\top}\boldsymbol{x}$. Finally, we normalize the linear condition by $\|\boldsymbol{w}\|_2$, scaling both the weights $w_i$ and the bounds $L_b$ and $U_b$ accordingly.

\subsection{Inference-time methods} \label{subsec:InfTimeMethods}

We experiment with two inference-time methods: rejection sampling and sequential Monte Carlo (SMC). Rejection sampling is unbiased but highly inefficient for rare conditions, so we use it as a reference (\emph{Rejection} in Figure~\ref{fig:cfg_acslabour_cond_shape_all}).

SMC is a class of methods widely used across statistics and signal processing to perform posterior sampling in sequential systems \citep{doucet2001introduction}, which was recently used in conjunction with diffusion models \citep{wu2023practical}. At a high level, SMC entails initializing $K$ reverse-diffusion trajectories and reweighting them in accordance with how likely each trajectory is to eventually produce a synthetic row satisfying the requested condition (see Algorithm~\ref{alg:smc} in Appendix~\ref{app:pseudocode} for details)---the main algorithmic question that then arises is how to approximate this probability. To this end, we make three approximations: $(i)$ columns are conditionally independent given the current trajectory state; $(ii)$ categorical columns follow the distribution induced by the predicted denoiser logits; and 
$(iii)$~numerical columns follow a Gaussian distribution centered at the denoiser prediction.
We note that choosing logit mode enables assumption $(ii)$, adding to the previously outlined 
reasons supporting this choice. SMC enables conditioning on categorical predicates as well as arbitrary numerical conditions, a capability not offered by prior work such as \emph{Harpoon} \citep{shankar2026harpoon}.

\section{Reproducing econometric patterns via conditional generation} \label{subsec:Econ}

\begin{figure}[t]
    \centering
    \includegraphics[width=0.99\textwidth]{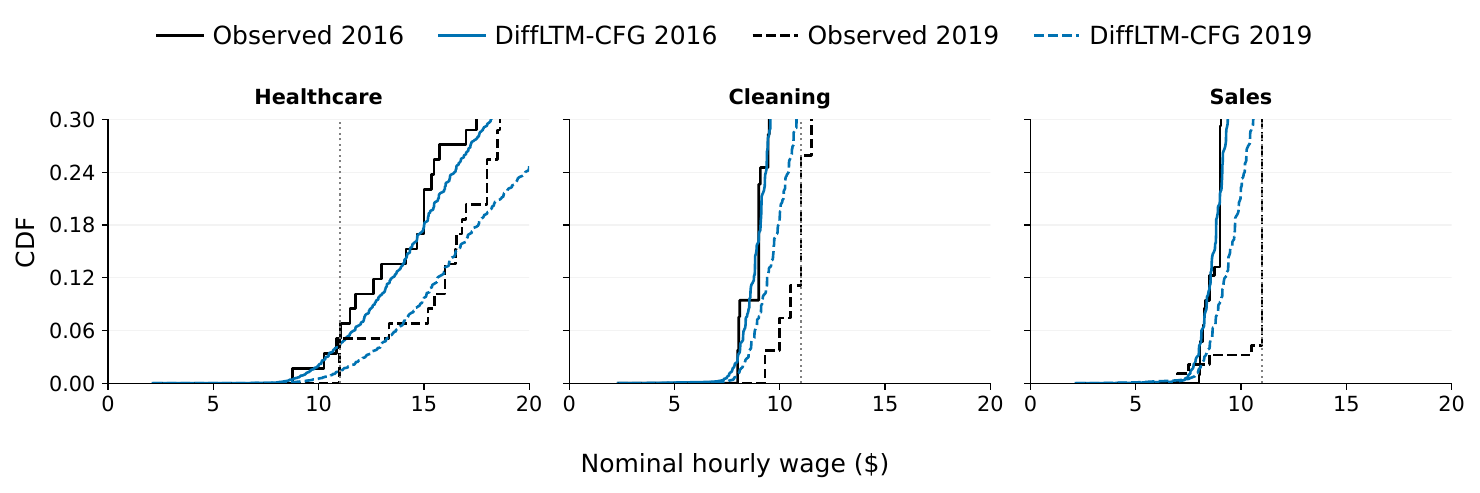}
    \caption{
    The counterfactual effect of Proposition~206 tracks each occupation's exposure to the minimum wage. The gap between dashed curves (i.e. true 2019 values and \contdiff{} predicted counterfactual) is large for cleaning and sales workers, whose observed wages pile up at \$11, and negligible for healthcare workers, whose wages already sat above it (see Appendix~\ref{app:AZMinwageIndiv} for details).
    }
    \label{fig:AZ_SC_indiv_occ}
\end{figure}

 We use two important econometric ideas to test 
whether our model can capture subtle behavioral patterns 
in the data:
counterfactual analysis and Mincer equation on returns on education.
  Results presented in this section use the same denoiser architecture 
  as in previous ones, with additional adaptive layer norm (AdaLN) 
  \cite{peebles2023scalable} in the reparametrization experiment in Figure~\ref{fig:sc_reparam_median}. Using or not AdaLN does not have measurable impact on experiments reported in previous sections, but it yields a significant improvement in the reparametrization robustness analysis. 

\subsection{Counterfactual analysis} 

An entity $s_*$
receives a policy intervention at time period $t_*$.
The intervention might have an effect on a certain outcome $\oY(s_*,t)$
at times $t\ge t_*$. 
What would the outcome 
have been if the intervention had not happened?

As a case study, we will consider the effect
of minimum wage raise. Proposition 206, approved by Arizona 
voters in November 2016, 
raised the state minimum wage from $\$ 8.05$ to $\$ 10.00$ on
 January 1 2017, followed by increases to $\$ 10.50$ in 2018, 
 $\$11.00$ in 2019, and $\$12.00$ in 2020 \citep{ballotpedia_prop206}. 
What would have been wages in Arizona had these measures not passed?

Synthetic control~\citep{abadie2010synthetic, abadie2015comparative, abadie2021using} finds non-negative weights 
$w_s$ summing to one such that $\oY(s_*,t)\approx\sum_{s\neq s_*} w_s\,\oY(s,t)$ for 
$t<t_*$. Assuming the treatment does not affect other units, the same 
combination serves as the counterfactual for $t\ge t_*$. 
In running example $s_*$ is Arizona and $\oY(s,t)$ can be any statistic of
wages in state $s$ at time $t$. In particular, Figure~\ref{fig:AZ_PA_SC_result} 
reports the 10th-percentile wage in Arizona and Pennsylvania.

We propose to view this problem as 
\emph{conditional generation for an unseen subpopulation}. We view each record as a sample from a joint distribution over unit $S$ (e.g., state), time $T$, covariates and outcome $Y$. The training data contain the treated unit $s_*$ only before $t_*$, and the post-intervention period $t\ge t_*$ only for the other units, so the combination $(S=s_*, T\ge t_*)$ is never observed (Arizona in 2017--2019, shaded in Figure~\ref{fig:AZ_PA_SC_result}). We train \contdiff{} on these data and generate the missing subpopulation by conditioning on $S=s_*$ and $T\ge t_*$. This has two advantages: $(i)$~unlike a single summary statistic per time step, it estimates the full counterfactual distribution, including individual-level effects; and $(ii)$~it is robust to reparametrizations of the outcome.

\paragraph{Data and method.} In order to investigate the effect of 
Proposition 206, we utilize wage records from the 2010--2019 Current
 Population Survey (CPS) data, made available by the NBER 
 \citep{nber_morg}.  After filters (see Appendix~\ref{app:CPSDataset}), the dataset comprises
  489,225 rows corresponding to employment records from all 
  50 states and DC. As a baseline, we construct a counterfactual 
  following the methodology of \cite{abadie2010synthetic}.
The resulting synthetic Arizona places 47.7\% of its weight on 
Virginia, 25.2\% on Texas, 14.6\% on North Dakota, 11.2\% on Wyoming, and 1.3\% on South Carolina. 
Our conditional generation approach consists in training \contdiff{} after removing
Arizona data from 2017 and after; see Appendix~\ref{app:AZMinwageDetails} for details.

\paragraph{Results.}
Figure~\ref{fig:AZ_PA_SC_result} we compare observed 10th-percentile 
wages in Arizona and Pennsylvania with
with the counterfactual prediction
provided by synthetic control and  conditional generation using \contdiff{}.

We observe that Arizona's observed 
tenth-percentile nominal hourly wage follows the statutory 
minimum almost exactly. 
As for the counterfactual, obviously there is no way of determining 
`ground truth.'  However, the conditional generation approach is in rough
agreement with synthetic control and provides a natural prediction.
 We repeat the procedure for Pennsylvania, which remained at
 the federal minimum throughout. Consistent with the design,
  the observed post-2016 wages match very closely with the one 
  predicted by \contdiff{}. We observe that, in this case, our prediction 
is somewhat more accurate than the one from synthetic control.

\paragraph{Subpopulation-level effects.}
Synthetic control produces one counterfactual summary
 statistic per year. Conditional generation instead samples 
 from the \emph{joint} counterfactual distribution 
 and produces records at the individual level.
  By further conditioning  on demographic/occupational 
  we can assess which subpopulations are more affected. As an example, in Fig.~\ref{fig:AZ_SC_indiv_occ} we stratify based on occupation (see Appendix~\ref{app:AZMinwageIndiv} for full analysis and Fig.~\ref{fig:AZ_SC_indiv_edu} for stratification based on education.)

 \begin{figure}[t]
    \centering
    \includegraphics[width=0.99\textwidth]{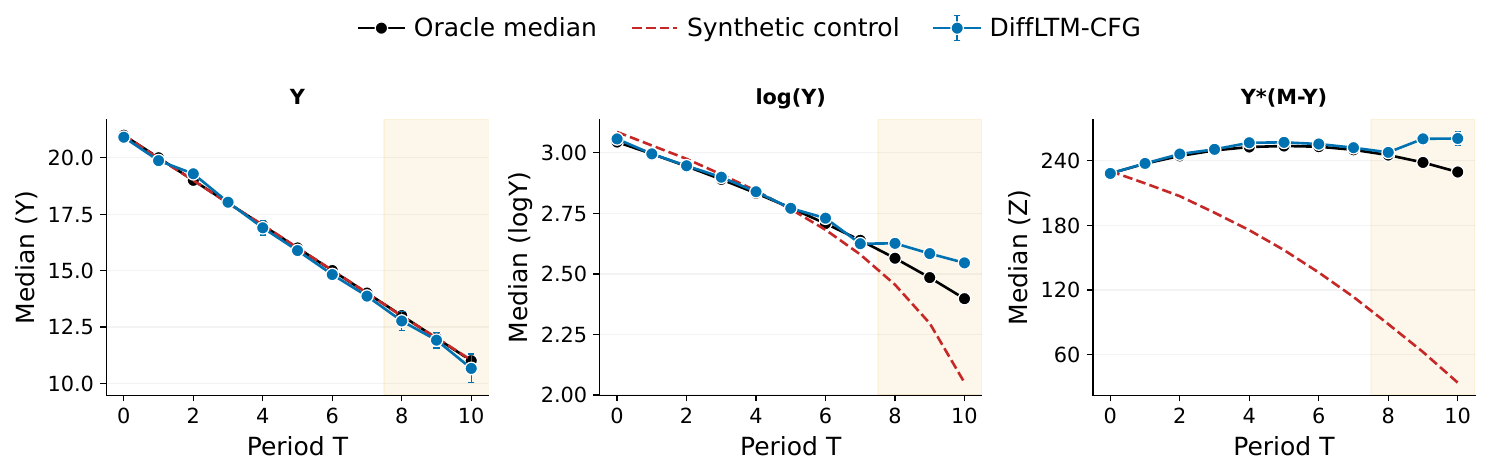}
    \caption{Counterfactual analysis using synthetic control and conditional generation
    (based on \contdiff{}) for data generated according to a structural equations model.
    The prediction for $f(y)=y$, $\log y$, $y(\max y-y)$.
    }
    \label{fig:sc_reparam_median}
\end{figure}

\paragraph{Robustness to reparameterization.}
Synthetic control estimates heavily rely on the assumption of 
a linear relationship between outcomes $Y(s,t)$ for different units.
In particular, this assumption is fragile under reparametrizations
of the underlying variable. Consider ---as in application example--- the case in which 
$\oY(s,t)$ is the $\alpha$ quantile of
a variable of interest $Y$ conditional on $s,t$,
and define $X = f(Y)$ for a nonlinear function $f$, 
and the corresponding quantile $\oX(s,t)$.
If the linear relationship $\oY(s_*,t)\approx \sum_{s\neq s_*}
w^Y_{s} \oY(s,t)$ holds approximately then, generically,
$\oX(s_*,t)\approx \sum_{s\neq s_*} w^X_{s} \oX(s,t)$ will not hold. 

Figure \ref{fig:sc_reparam_median} demonstrates in a synthetic 
example that conditional generation is robust to reparametrizations, while
synthetic control is not.  In this example, data take the form $(S,T,Y)$, with $S$ the 
unit, $T$ the time, and $Y$ the outcome: shaded 
area corresponds to the held-out data $T\ge 8$, $S=6$.
We attempt to predict $\mathrm{median}(f(Y))$ in the held-out domain.
Both synthetic control and \contdiff{}-CFG match the ground truth for  $f(y)=y$.
Under nonlinear $f$, synthetic control's linear bias yields inaccurate predictions,
 while \contdiff{}-CFG stays close to the ground truth (see also Appendix~\ref{app:SCReparamDetails}).

\vspace{0.2cm}

We conclude by emphasizing that --although promising-- 
the above  is a preliminary study
of the use of conditional generation for counterfactual analysis.
Turning this into a reliable methodology is deferred to future work.

\subsection{Mincer returns to education} \label{app:Mincer}

A simple first test our generative model is to fit a linear
regression on (real) test data and synthetic data and ask whether the 
two linear models yield similar coefficients.
For the ACSLabour dataset in particular,
a natural test is to reproduce the celebrated Mincer earnings equation~\citep{mincer1974schooling} 
(see also~\citep{lemieux2006mincer}). We use the quadratic specification
\[
\log(W) = \beta_0+\beta_1X+\beta_2X^2+\beta_3S+\epsilon,
\]
where $W$ is wage, $X$ is potential years of experience, $S$ is
years of schooling, and $\epsilon$ is exogenous noise. We obtain $S$ by mapping the ACS education categories to
completed years of schooling and define potential experience as $X=\texttt{AGEP}-S-6$, 
assuming that schooling begins at age six and that work begins immediately
after schooling. Rows for which $X<0$ are excluded.

\begin{figure}[t]
      \centering
      \includegraphics[width=0.5\linewidth]{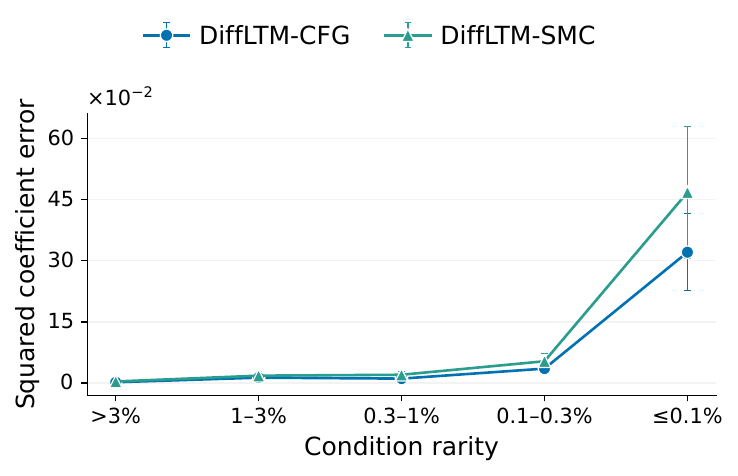}
      \caption{Squared error between Mincer coefficients fitted on conditionally generated sub-populations and real data slices. 
      Here we used $n^{\ssynth}=n^{\stest} \approx \min\left(n_{\stest}^{\mathcal{C}},100{,}000\right)$, where $n_{\stest}^{\mathcal{C}}$ denotes the number of eligible test rows satisfying $\mathcal{C}$.
      }
      \label{fig:mincer_coeff_mse_v_rarity}
\end{figure}

  \begin{figure}[ht]
      \centering
      \includegraphics[width=\linewidth]{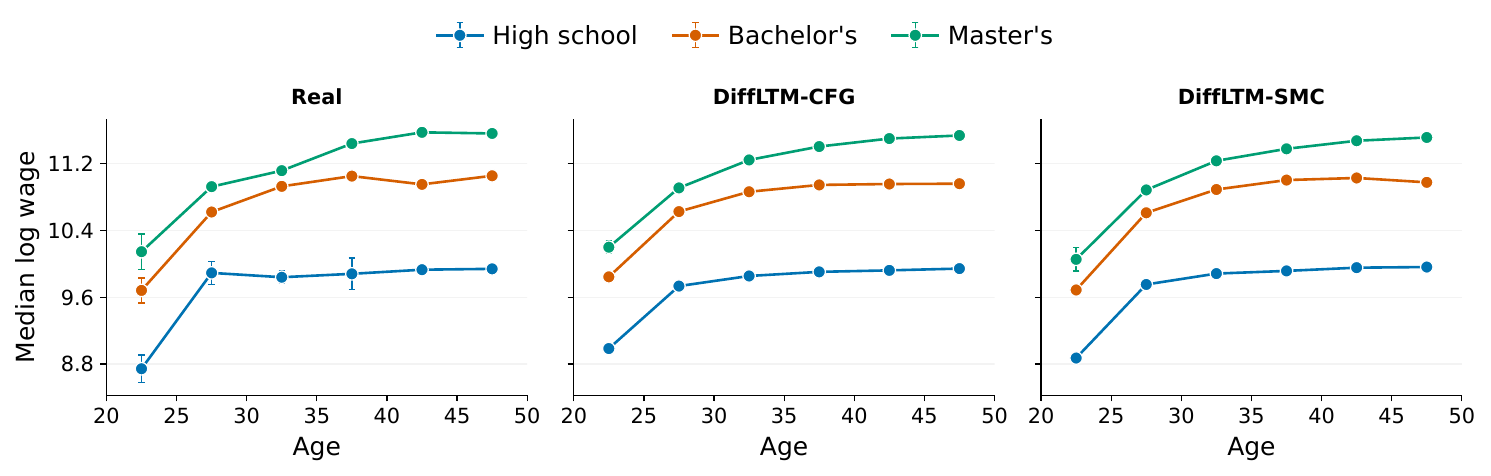}
      \caption{Returns to education (China-born, Female, Private Industry). For China-born female private-sector workers, both methods reproduce 
      the education ordering and the concave age--earnings profile of the real data. 
      }
      \label{fig:mincer_return_to_edu}
  \end{figure}

We write $\bbeta=(\beta_0,\beta_1,\beta_2,\beta_3)^\top$ for the vector of coefficients,
and fit it by ordinary least squares (OLS), both on real (test) data and on synthetic data (generated using \contdiff{}). As a first test, we consider unconditional generation and fit the Mincer equation on the (subsampled) test data and on (uncondtionally generated) synthetic data with $n_{\stest} = n_{\ssynth} \approx 100,000$. Averaging the fitted coefficients across three seeds yields
\begin{align*}
  \overline{\bbeta}_{\mathrm{real}} &= \begin{bmatrix} 7.64071 & 0.08891 & -0.001439 & 0.12709 \end{bmatrix}\\
  \overline{\bbeta}_{\mathrm{syn}} &= \begin{bmatrix} 7.70948 & 0.08832 & -0.001430 & 0.12171 \end{bmatrix}.
\end{align*}
The two linear models are very close to each other, confirming that \contdiff{} can reproduce the Mincer equation.
Note in particular the
positive schooling coefficient as well as the positive linear
and negative quadratic experience coefficients: these reproduce the known results on 
$(i)$~higher education translating to higher earnings at each level, and $(ii)$~the concave
wage-experience profile.

Next, to carry out a more stringent test, we consider subpopulations defined by specific conditions.
For each such subpopulation, we first compute OLS on  the corresponding real data  from the held-out test set to obtain
$\beta_{\mathrm{real}}$. We then generate synthetic rows under the 
same conditions using each of CFG and SMC and fit OLS to obtain $\beta_{\mathrm{CFG}}$ and $\beta_{\mathrm{SMC}}$. 
Similarly to the fidelity vs rarity experiment (Figure~\ref{fig:cfg_acslabour_cond_shape_all}), for each condition $\mathcal{C}$ OLS is fitted on matched eligible synthetic and test samples of size $n_{\ssynth}=n_{\stest}=\min\left(n_{\stest}^{\mathcal{C}},100{,}000\right)$\footnote{Eligibility filtering, i.e. removing rows with $X < 0$ may change $n_{\ssynth}$ but this change is minor when it occurs.}, where $n_{\stest}^{\mathcal{C}}$ denotes the number of eligible test rows satisfying $\mathcal{C}$ ($152-100,000$ rows per slice across the final experiment).
Finally, we compute $\|\beta_{\mathrm{real}}-\beta_{\mathrm{syn}}\|_2^2$ 
for each method and plot it as a function of subpopulation rarity in Figure~\ref{fig:mincer_coeff_mse_v_rarity}.
The considered slices identical to the equality slices from Figure~\ref{fig:cfg_acslabour_cond_shape_all}, 
with the  exclusion of those defined using the wage, schooling, or age columns.

Conditional generation should reproduce statistical relationships that
differ substantially between a specific subpopulation and the full population. One such example is
the subpopulation reporting no English proficiency ($\approx 0.95\%$ of the dataset), where the fitted coefficients on real and (conditionally generated) synthetic data averaged across three seeds are given by
\begin{align*}
\overline{\bbeta}_{\mathrm{real}}&=\begin{bmatrix}9.15633&0.03147&-0.000418&0.01478\end{bmatrix}\quad \\
\overline{\bbeta}_{\mathrm{CFG}}&=\begin{bmatrix}9.16444&0.03115&-0.000414&0.01384\end{bmatrix}\quad  \\
\overline{\bbeta}_{\mathrm{SMC}}&=\begin{bmatrix}9.19665&0.03356&-0.000457&0.01182\end{bmatrix}
\end{align*}
The schooling coefficient falls from $\approx 0.13$ in the full
dataset to $\approx 0.01$ in this subpopulation, while the initial experience
coefficient falls from $\approx 0.09$ to $\approx 0.03$; suggesting that the return on education and experience for this group is 
substantially lower than that for the full population. Both CFG and SMC successfully recover this rare subgroup-specific
conclusion.

As a further example, we examine the implied returns-to-education curves for China-born female private-sector workers. We condition on this subpopulation, in turn, on high-school, bachelor's, and master's education, while jointly generating age and log wage. We then plot median log wage across five-year age bins to test whether the synthetic data preserve both the ordering of earnings by education and the concave age-earnings profiles observed in the real data; as illustrated in Figure~\ref{fig:mincer_return_to_edu}, this curve is successfully reproduced by both CFG and SMC.


\section{Asymptotic theory} \label{sec:Theory}

\subsection{Optimality of CE loss for continuous diffusions}
\label{subsec:Optimality-CE-Loss}

The empirical superiority of CE loss over MSE loss (cf. Section~\ref{sec:diffusion_arch}) 
is surprising for several reasons: $(i)$~At infinite sample size, 
minimizing MSE loss in denoising diffusions is equivalent to minimizing the KL divergence between 
the generated and target distribution 
\citep{song2021maximum,chen2022sampling}; $(ii)$~It was proven by \cite{chewi2026ddpm}
that ---in a parametric setting--- MSE loss results in the minimum 
parameter estimation error, 
under the classical asymptotics in which the sample size
$n$ diverges with the number of parameters fixed.

Our main theoretical result is that these conclusions should be revised 
if the denoisers learnt at different noise levels are
not tied in the Bayes-optimal way (as is usually the case for neural denoisers).
In other words  a somewhat minimal form of misspecification
leads to MSE becoming suboptimal.

For this analysis we consider a simple categorical variable
taking $k$ values, with probabilities $q(\sigma)$,
 $\sigma\in\{0,1,\dots,q-1\}$. This variable is represented via 
 embeddings $\bx(\sigma)\in\reals^d$, $\|\bx(\sigma)\|=1$. We learn
 a general diffusion indexed by the continuous time variable 
 which increases from $t=0$ (pure noise), to $t=T$ (sample from the target distribution):
 \begin{align}
 \bX_t =a(t)\bX+\tau_0(t) \bB_{s(t)}\, .\label{eq:general-diffusion-0}
 \end{align} 
 Here $\bX\sim \sum_{\sigma} q(\sigma)\delta_{\bx(\sigma)}$ is distributed
 according to the target, independent of the Brownian motion $\bB$, $a(\,\cdot\,)$ increases from $0$ to $1$,
 while $\tau_0$ goes from $\tau_0(0)$ to $\tau_0(T)=0$.
 As usual, the process \eqref{eq:general-diffusion-0} is equivalently
 defined by an SDE of the form 
\begin{align}\label{eq:general-diffusion-SDE}
  \de \bX_t = \bff(\bX_t,t)\,\de t + \sqrt{g(t)} \de \bW_t,
  \end{align}
  where $\bW_t$ is a standard Brownian motion and
  \begin{align}
  \bff(\bx,t) &= \tau_0(t) \left(\frac{\de }{\de t} \frac{a(t)}{\tau_0(t)}\right)\bm(\bx,t)+
  \frac{\dot{\tau}_0(t)}{\tau_0(t)}\bx\, ,\;\;\;\;\;\;\;
  g(t) = \tau_0(t)^2\dot{s}(t)\, ,\\
  \bm(\bx,t) &:= \E[\bX |  \bX_t = \bx] \, .
  \end{align}
  The one-time marginals of $\bX_t$ are the same as the ones of the probability flow ODE:
\begin{align}
\de \bZ_t &= \bh(\bZ_t,t)\,\de t \, ,\label{eq:ProbFlow-Gen}\\
\bh(\bz,t) &= \bff(\bz,t) - \frac{1}{2} g(t) \nabla_{\bz} \log p(\bz,t) \, .
\end{align}
 As standard in practice, we use probability flow for generation.

 We are given i.i.d. data triples
$\{(t_i,\sigma_i,\bY_i):i\le n\}$.  Their common law is specified by
$\sigma_i\in\{0,1,\dots,k-1\}$, $\prob(\sigma_i=\sigma)=q(\sigma)$,
$t_i\sim \lambda(t)\cdot\de t$, independently of $\sigma_i$, where $\lambda$ is a
probability density on $[0,T]$, and
\begin{align}
\left.\bY_i\right|_{(t_i,\sigma_i)}
\sim \normal\big(\bx(\sigma_i)\,a(t_i),\tau(t_i)^2\id_d\big),
 \quad \quad i\le n\, . 
\end{align}
We learn a generative model of the form \eqref{eq:ProbFlow-Gen}
where $\bm$ is replaced by 
\begin{align}
\hbm(\bx;\hbtheta(t))
&= \sum_{\sigma=0}^{k-1}\hq_{\sigma}(\bx;\hbtheta(t))\, \bx(\sigma)\, ,\\
\hq_{0}(\bx;\btheta) & = \frac{1}{z(\bx,t,\btheta)}\, ,\;\;\;\;
\hq_{\sigma}(\bx;\btheta)  = \frac{1}{z(\bx,\btheta)}\, 
\exp\big\{\theta_{\sigma,0}+\<\btheta^{+}_{\sigma},\bx\>\big\}\, ,\;\;\;
\sigma\in\{1,\dots,k-1\}\, .
\label{eq:conditional-distribution-Gen}
\end{align}
and $z(\bx,\btheta)  = 1+\sum_{\sigma=1}^{k-1}
\exp\{\theta_{\sigma,0}+\<\btheta^{+}_{\sigma},\bx\>\}$ is a normalization
constant.

Crucially,  the parameters 
$\btheta(t)=(\btheta_1(t),\dots,\btheta_{k-1}(t))$
are allowed to vary with $t$, as is usually the case for 
denoising diffusions, where a suitable encoding of $t$ is 
used as input to the denoiser.

In our theoretical analysis, we consider a simplified setting 
whereby $\btheta(t)$ is learned via the
following weighted empirical risk minimization (ERM):
\begin{align}\label{eq:SmoothERM}
\hbtheta_b(t)
:= \argmin_{\btheta}\hR_{b,t}(\btheta), \;\;\;\;\;\;
\hR_{b,t}(\btheta) := \frac{1}{n}\sum_{i=1}^{n}
w_b(t_i-t)\, \ell_{\sigma_i}(\bY_{i};\btheta) \, .
\end{align}
where $w_b(t) = w_1(t/b)$ is an unnormalized weight function and $b$ is a bandwidth parameter
and $t\mapsto K(t)\subseteq E$ specifies
the time-dependent optimization domain. 

Denote by $\hq^{\sdiff}_n(\sigma)$  the probability of generating class $\sigma$
via the probability flow ODE, using the estimated parameters $\hbtheta_b(t)$.
We prove a central limit theory asymptotics for 
$\sqrt{n}(\hbtheta_b(t)-\btheta^*(t))$ (where $\btheta^*(t)$
are the `optimal' parameters) and, as a consequence, for 
$\sqrt{n}(\hq^{\sdiff}_n-q)$. Further, we prove that the limiting variance of the 
error is minimized for CE loss.
\begin{theorem}[Informal]
\label{thm:Overall-Limit-Gen-Informal}
Under regularity assumptions, further assume that, as $n\to\infty$,  
\begin{align}\label{eq:UndersmoothingFirst}
n^{1/4}b_n\to 0 \qquad \mbox{and} \qquad \frac{\sqrt n\,b_n}{\log(1/b_n)}\to\infty\, .
\end{align}
Then the probability distribution of the of generated 
samples $\hq^{\sdiff}_n$ satisfies
\begin{align}\label{eq:CLT-Gen}
\sqrt n(\hq^{\sdiff}_n-q)&\Rightarrow\normal_k(0,\bV)\, ,
\end{align}
where the covariance matrix $\bV\in\reals^{k\times k}$ is given explicitly in terms of the 
loss function $\ell$.

Further $\bV$ is minimized (with respect to the positive
semidefinite order) for CE loss.
\end{theorem}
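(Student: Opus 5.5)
We reduce the statement to three ingredients: a local (kernel-weighted) M-estimation CLT for $\hbtheta_b(t)$ at each fixed $t$; a delta-method argument propagating these errors through the probability flow ODE to the generated class probabilities; and a Cramér--Rao/Gauss--Markov type comparison showing that CE attains the smallest asymptotic covariance.

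\emph{Step 1 (local M-estimation).} Fix $t$ and let $\btheta^*(t)$ minimize the population risk $R_t(\btheta)=\E[\ell_\sigma(\bY;\btheta)\mid t_i=t]$. By Fisher consistency of the loss, $\hq(\cdot;\btheta^*(t))$ equals the true posterior $\prob(\sigma\mid\bX_t=\cdot)$, which is exactly of the multinomial-logistic form \eqref{eq:conditional-distribution-Gen}, linear in $\bx$ because the noise is Gaussian. We then linearize the first-order condition $\nabla\hR_{b,t}(\hbtheta_b(t))=0$ in the usual way:
\[
\hbtheta_b(t)-\btheta^*(t)=-\bH_t^{-1}\frac{1}{n}\sum_i w_b(t_i-t)\nabla\ell_{\sigma_i}(\bY_i;\btheta^*(t))+\text{bias}+\text{remainder},
\]
where $\bH_t$ is the Hessian of the population risk. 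The smoothing bias is $O(b^2)$ for a symmetric kernel; the undersmoothing condition $n^{1/4}b_n\to0$ makes $\sqrt n\,b^2\to0$, so the bias is negligible at the $\sqrt n$ scale. The condition $\sqrt n\, b/\log(1/b)\to\infty$ gives uniform-in-$t$ control of the empirical process remainder via a chaining/bracketing argument over $t\in[0,T]$. Pointwise, each $\hbtheta_b(t)$ has variance of order $1/(nb)$ and is therefore not $\sqrt n$-consistent.

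\emph{Step 2 (functional delta method).} The generated law $\hq^{\sdiff}_n(\sigma)$ is a smooth functional $\Phi$ of the whole path $t\mapsto\hbtheta_b(t)$, since the drift $\bh$ depends on it through $\hbm$. We therefore linearize the flow map: the first variation of the ODE solution with respect to the drift gives
\[
\hq^{\sdiff}_n-q\approx\int_0^T\<\bG(t),\hbtheta_b(t)-\btheta^*(t)\>\,\de t,
\]
for an influence kernel $\bG(t)$ obtained from the adjoint (linearized transport) equation. Substituting Step 1, the time integral averages the kernel out:
\[
\int\bG(t)w_b(t_i-t)\,\de t\approx b\,\bG(t_i)\cdot\textstyle\int w_1 .
\]
This turns the sum into an i.i.d. sum with influence function
\[
\psi(t_i,\sigma_i,\bY_i)=-\bG(t_i)\bH_{t_i}^{-1}\nabla\ell_{\sigma_i}(\bY_i;\btheta^*(t_i))/\lambda(t_i),
\]
so the rate is restored to $\sqrt n$. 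The ordinary CLT then gives \eqref{eq:CLT-Gen} with
\[
\bV=\E\big[\psi\psi^\top\big]=\int_0^T\frac{\bG\bH_t^{-1}\bJ_t\bH_t^{-1}\bG^\top}{\lambda(t)}\,\de t,
\]
where $\bJ_t$ is the score covariance, i.e. a sandwich form. Controlling the linearization remainder uniformly, which needs the $\sup_t$ rate from Step 1 and stability of the ODE near $t=T$ where $\tau_0\to0$, is the main technical obstacle. We expect to handle it by truncating at $T-\delta_n$ and using that the posterior concentrates exponentially fast.

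\emph{Step 3 (optimality).} For CE, $\ell$ is the negative conditional log-likelihood of a correctly specified model, so $\bJ_t=\bH_t=\mathcal I_t$, the Fisher information, and $\bV_{\rm CE}=\int\bG\mathcal I_t^{-1}\bG^\top/\lambda$. For a general Fisher-consistent loss, the gradient satisfies $\E[\nabla\ell\cdot\nabla\log p^\top]=\bH_t$ by differentiating the identity $\E_{\btheta}\nabla\ell(\btheta)=0$ along the model. The Cauchy--Schwarz matrix inequality then gives
\[
\bH_t^{-1}\bJ_t\bH_t^{-1}\succeq\mathcal I_t^{-1}
\]
pointwise in $t$. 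Conjugating by $\bG(t)$ and integrating against $1/\lambda$ preserves the PSD order, so $\bV_\ell\succeq\bV_{\rm CE}$. The key point is that the parameters at each $t$ are estimated separately and are not tied in the Bayes way, which is what makes this per-$t$ efficiency relevant. MSE on $\hbm$ is a Fisher-consistent loss but not the likelihood, so it is strictly dominated.
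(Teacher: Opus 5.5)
Your proposal is correct and follows essentially the same route as the paper: uniform asymptotic linearity of the kernel-localized M-estimator (smoothing bias $O(b^2)$, stochastic remainder $O_{\rP}(e_n^2)$ with $\sqrt n\,e_n^2\to0$), and a linear-response kernel for the basin masses obtained from the linearized continuity equation. That kernel is the paper's $\bL_\sigma(t)$; it is the adjoint computation you describe, with the non-smooth basin indicators $\mathbf 1_{\cA_\sigma(t)}$ as the adjoint solution. The remaining steps also match: Fubini plus the approximate-identity property to reduce to an i.i.d.\ sum with variance $\int_0^T\lambda(t)^{-1}\<\bL_\sigma,\bQ^*\bL_\alpha\>\,\de t$, then the per-$t$ bound $\bH^{-1}\bJ\bH^{-1}\succeq\mathcal I^{-1}$ together with loss-independence of the response kernel. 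The one real difference is at the endpoints: the paper does not prove the control near $t=T$ that you propose, but works on compact $[\varepsilon,T-\varepsilon]$ (where $\btheta^*(t)$ stays bounded, so your uniform-in-$t$ claims must be restricted there) and simply assumes the endpoint contributions are $o_{\rP}(n^{-1/2})$ in total variation.
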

A formal statement, with full specification of the assumptions, is given in 
Appendix~\ref{app:Asymptotics}.
We point out that omitted assumptions are analogous 
to the classical ones in low-dimensional
asymptotic statistics 
\cite[Chapter 5]{van2000asymptotic}.
In contrast, the undersmoothing
condition \eqref{eq:UndersmoothingFirst} is 
important and corresponds to a regime in which
the $t$-dependence of $\hbtheta(t)$ is only weakly penalized
and hence $\hbtheta(t)$ is asymptotically unbiased.
It is possible to generalize our results to 
other regimes (in which $\hbtheta(t)$ is 
smoother and hence somewhat biased) but calculations 
are  less transparent, and we omit them.

While there is some subtleties in the actual proof,
the intuition for this results can be described succinctly.
Let $\btheta^*(t)$ be the `ideal' parameters in the logistic model
\eqref{eq:conditional-distribution-Gen}. Explicitly: 
\begin{align} 
\theta^*_{\sigma,0}(t) :=\log \frac{r(0)}{r(\sigma)}\, ,
\;\;\;\;\;\;\;\;
\btheta^{*,+}_{\sigma}(t)
:=\frac{a(t)}{\tau(t)^2}\{\bx(\sigma)-\bx(0)\} \, .
\end{align}
For large $n$, the estimated parameters $\hbtheta_b(t)$ are close to the ideal ones,
and hence a first order Taylor approximation holds
for the sampling probabilities of the model :
\begin{align}
\hq^{\sdiff}_n(\sigma)-q(\sigma) =
\int_0^T\<\bL_{\sigma}(t),\hbtheta_b(t)-\btheta^*(t)\> +
O\big(\|\hbtheta^b-\btheta^*\|^2\big)\, ,
\end{align}
where $\bL_{\sigma}(t)$ is a certain deterministic matrix, see Proposition 
\ref{propo:Lin-Effect} for a formal statement.

For large $n$, classical asymptotics implies that $\hbtheta^b-\btheta^*$
is approximately normal. Further, under the undersmoothing 
condition is also unbiased and `delta-correlated.' Roughly speaking,
letting $\bDelta(t):= \hbtheta^b(t)-\btheta^*(t)$,
\begin{align}
\E[\bDelta(t)]\approx 0\, ,\;\;\;\;\;\;\;
\E\big[\bDelta(t)\otimes\bDelta(s)]\approx \frac{1}{n\lambda(t)}\bQ^*(t;\ell)\, \delta_b(t-s)\, ,
\end{align}
where $\delta_b(t)$ integrates to one, and vanishes for $|t|> 2b$.
(See Proposition \ref{propo:AsympGen} for a mathematically precise claim.)
Further $\bQ^*(t;\ell)$ is the classical variance of the M-estimator of $\btheta^*(t)$
with loss function  $\ell$ under the mixture Gaussian model
\citep{van2000asymptotic}.

Putting together the above suggests that Eq.~\eqref{eq:CLT-Gen}
should hold with (as $n\to\infty$, $b\to 0$)
\begin{align}
    V_{\sigma,\tau} = \int_0^T\frac{1}{\lambda(t)}\<\bL_{\sigma}(t),
    \bQ^*(t;\ell)[\bL_{\tau}(t)]\>\, \de t\, .
\end{align}
By classical optimality of the maximum likelihood estimator
\citep{van2000asymptotic},   
$\bQ^*(t;\ell)$ is minimized by CE loss 
at any given $t$, and hence CE loss is overall optimal.

This conclusion is  is contrast
to earlier work \citep{chewi2026ddpm} where the denoisers 
at different $t$ are carefully tied.

\subsection{Comparing continuous to discrete sampling}
\label{sec:discrete-continuous-theory} 

Theorem \ref{thm:Overall-Limit-Gen-Informal} establishes
optimality of CE loss under the stated setting: 
continuous diffusion sampling and under-smoothed denoiser
estimation, cf. Eqs. \eqref{eq:conditional-distribution-Gen} and
  \eqref{eq:SmoothERM}.
  It is natural to ask whether the asymptotic variance 
  of Eq.~\eqref{eq:CLT-Gen} (for the optimal CE loss) matches 
  the minimum variance of any generative method. 
In this case, an optimal procedure is to estimate the probabilities
via maximum likelihood, and then generating 
$\sigma\sim \hq^{\sML}_n(\,\cdot\,)$. 
It is elementary that
\begin{align}\label{eq:CLT-MLE}
  \sqrt n(\hq^{\sML}_n-q)\Rightarrow\normal_k(0,\bV^{\sML})\, ,
  \qquad\qquad V^{\sML}_{\sigma,\tau}:=q(\sigma)\bfone_{\sigma=\tau}
  -q(\sigma)q(\tau)\, .
\end{align}
Using the explicit expression 
of $\bV$ in Appendix \ref{app:Asymptotics}, it is relatively 
simple to give examples in which $\bV\succ \bV^{\sML}$ strictly,
already for $k=2$. 

We ask whether the suboptimality
is bounded, i.e. there is a constant factor $c(k)>0$
and a loss function $\ell$ such that $\bV\preceq c(k)\bV^{\sML}$ 
for any distribution $q$ on $\{0,1,\dots,k-1\}$. 

This question is answered in the affirmative in the case $k=2$
(in this case we relabel the two classes as $\sigma\in\{+1,-1\}$.)
Recall that the MLE variance is $v^{\sML}(q) = q(1)(1-q(1))$.
\begin{theorem}[Bounded relative inefficiency]
\label{thm:Bounded-Suboptimality}
Consider a continuous diffusion model learned according to the setting of Section \ref{subsec:Optimality-CE-Loss},
for $k=2$,  $\sigma\in\{+1,-1\}$, $d=1$, and embeddings $x(-1)=-1$, $x(1)=1$, with
probabilities $q(1)$ and $q(-1)=1-q(1)$.
Under the same technical assumptions on the loss $\ell$ as for 
Theorem \ref{thm:Overall-Limit-Gen-Informal}, as well as the undersmoothing
assumption \eqref{eq:UndersmoothingFirst},
we have, for the optimal choice of the time-sampling density $\lambda$,
$\sqrt{n}(\hq^{\sdiff}_n(1)-q(1))\Rightarrow\normal(0,v(\ell,q))$  (where we made explicit the dependence of the variance
on the loss and the probability distribution $q$). 

Relabel the two classes as $\sigma\in\{-1,+1\}$. Consider a loss of the
form $\ell_{\sigma}(Y;\btheta)=\varphi_{\sigma}(\theta_0+\theta_1Y)$,
define $\pi_\sigma(u):=(1+\sigma\tanh u)/2$ and
$h_{\ell}(u):=\sum_{\sigma\in\{-1,+1\}}\pi_\sigma(u)
\varphi_{\sigma}''(u)$, and assume
\begin{align}
  \frac{1}{L}\log h_\ell(Lz)&\longrightarrow -\alpha|z|\, .\label{eq:LossAtInfty}
\end{align}
In the boundary case $\alpha=2$, we additionally assume the tail regularity
condition
\begin{align}
c_h:=\lim_{u\to-\infty}e^{-2u}h_\ell(u)\in(0,\infty).
\label{eq:LossAtInftyBoundary}
\end{align}
Then we have the following:
\begin{align}
\label{eq:AlphaLarge} 2<\alpha\le 4 &  \Rightarrow  \sup_{q(1)\in(0,1)}\frac{v(\ell,q)}{v^{\sML}(q)} =\infty\, ,\\
\label{eq:Alpha2} \alpha = 2 &  \Rightarrow  
\sup_{q(1)\in(0,1)}\frac{v(\ell,q)}{v^{\sML}(q)}<\infty\, .
\end{align}
\end{theorem}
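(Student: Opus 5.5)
The plan is to start from the explicit covariance formula of Appendix~\ref{app:Asymptotics}, specialized to $k=2$, $d=1$, $x(\pm1)=\pm1$. There $\bV$ reduces to the scalar
\[
v(\ell,q;\lambda)=\int_0^T \frac{1}{\lambda(t)}\,\bL(t)^{\top}\bQ^*(t;\ell)\bL(t)\,\de t .
\]
By Cauchy--Schwarz, the optimal time-sampling density is $\lambda\propto (\bL^\top\bQ^*\bL)^{1/2}$, which gives
\[
v(\ell,q)=\Big(\int_0^T \big(\bL(t)^\top\bQ^*(t;\ell)\bL(t)\big)^{1/2}\de t\Big)^2 .
\]
Next I would reparametrize time by the signal-to-noise ratio $\gamma=a(t)^2/\tau(t)^2$. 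This is monotone, so the integral becomes an integral over $\gamma\in(0,\infty)$ that no longer depends on the schedule. For the logistic model with $\theta^*_1=2a/\tau^2$, the posterior log-odds is $u=\theta^*_0+\theta^*_1Y$, with $\theta_0^*=\tfrac12\log\frac{q(1)}{q(-1)}$ up to the paper's convention. Hence all expectations defining $\bQ^*$ are expectations over the two-component Gaussian mixture of $u$, namely $u\mid\sigma\sim\normal(\theta_0^*+2\sigma\gamma,\,4\gamma)$.

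The second step is to write $\bQ^*=\bH^{-1}\bJ\bH^{-1}$ explicitly. Fisher consistency of $\ell$ at $\btheta^*$ makes the Hessian $\bH=\E[h_\ell(u)\,(1,Y)^{\otimes2}]$, where $h_\ell=\sum_\sigma\pi_\sigma\varphi''_\sigma$ is exactly the function in the statement. The score covariance is $\bJ=\E[\sum_\sigma\pi_\sigma\varphi_\sigma'(u)^2(1,Y)^{\otimes 2}]$. For proper binary losses, $\varphi'_\sigma$ is expressible through $h_\ell$ and $\pi_\sigma$: $\varphi'_{\pm}$ is an integral of $h_\ell/\pi_\pm$. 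Consequently both $\bH$ and $\bJ$, and thus the integrand, depend on $\ell$ only through $h_\ell$. As a sanity check, CE gives $h_\ell=\pi_+\pi_-=1/(4\cosh^2u)$, so $\alpha=2$ and $c_h=1$; here $\bQ^*$ is the inverse Fisher information, and the bound $v\preceq$ const$\cdot v^{\sML}$ can be checked directly.

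The third step, which I expect to be the main obstacle, is the asymptotic analysis as $q(1)\to0$ (the case $q(1)\to1$ is symmetric). We have $v^{\sML}\sim q(1)$. Set $L=\tfrac12\log(1/q(1))$, so $\theta_0^*\approx -L$, and rescale $u=Lz$ and $\gamma=L\bar\gamma$. In this scaling the Gaussian densities of $u$ are $\exp\{-L\,I_\sigma(z,\bar\gamma)+o(L)\}$, and assumption \eqref{eq:LossAtInfty} gives $h_\ell(Lz)=e^{-\alpha L|z|+o(L)}$. Each entry of $\bH$ and $\bJ$, and of $\bL$ (which comes from Proposition~\ref{propo:Lin-Effect} and is also an expectation over the mixture), is then a Laplace-type integral. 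Its exponential rate is a variational problem: $\min_z\{\alpha|z|+I_\sigma(z,\bar\gamma)\}$ for $\bH$, and the analogous problem with $\varphi'^2$ for $\bJ$. The integrand $(\bL^\top\bQ^*\bL)^{1/2}$ is thus $q(1)^{R_\alpha(\bar\gamma)/2+o(1)}$ for an explicit rate function $R_\alpha$. I would then apply Laplace's method in $\bar\gamma$, which yields $v(\ell,q)=q(1)^{\min_{\bar\gamma}R_\alpha(\bar\gamma)+o(1)}$.

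For $2<\alpha\le4$, the tail of $h_\ell$ decays faster than the posterior weight $\pi_+\sim e^{2u}$. The Hessian is then exponentially smaller than the information needed at intermediate $\bar\gamma$, and I expect $\min R_\alpha<1$, so $v/v^{\sML}\to\infty$, which gives \eqref{eq:AlphaLarge}. The bound $\alpha\le4$ should be where $\varphi'_\sigma$ and $\bJ$ remain controlled by $h_\ell$; I would check that the minimizer of $R_\alpha$ stays interior there.

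For $\alpha=2$, the exponent matches CE and $\min R_2=1$, so the leading exponential order is exactly $v^{\sML}$. Exponential order alone does not bound the ratio, however, so I would redo Laplace's method at the prefactor level. Condition \eqref{eq:LossAtInftyBoundary} lets me replace $h_\ell(u)$ by $c_he^{2u}(1+o(1))$ in the tail region. This makes the prefactor an explicit function of $c_h$ that is bounded uniformly in $q(1)$. Compactness on $q(1)\in[\epsilon,1-\epsilon]$, where $v$ is continuous and finite, then gives \eqref{eq:Alpha2}.

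The delicate points are the uniformity of the Laplace approximations jointly in $(\bar\gamma, q)$, including the contributions near $\gamma\to0$ and $\gamma\to\infty$, and the sub-exponential prefactors in the boundary case.
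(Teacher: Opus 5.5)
There is a genuine gap. Your reduction steps agree with the paper: the scalar covariance, Cauchy--Schwarz for $\lambda$, SNR reparametrization, symmetry plus compactness to reduce to $q(1)\downarrow0$, and the scaling with $L=\frac12\log(1/q(1))$. The gap is in the mechanism you propose for \eqref{eq:AlphaLarge}, which does not occur in the stated range.

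For every $\alpha\in[2,4]$ one has $\min_{\bar\gamma}R_\alpha(\bar\gamma)=1$, attained only at $\bar\gamma=1$. In the paper's notation $R_\alpha=1+\nu_\ell$, with $\nu_\ell(\tau)\ge0$ and equality iff $\tau=1$ (Lemma~\ref{lem:rare-localization}). At $\bar\gamma=1$, the Laplace points of the Hessian and score-covariance integrals sit at logit $o(L)$, and so does the basin boundary. There $h_\ell=e^{o(L)}$, so $\alpha$ never enters the exponential rate.

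It is true that for $\bar\gamma<1$ the Hessian is exponentially smaller than for cross-entropy. But for $\alpha\le4$ those SNRs remain exponentially negligible in the variance integral. So the restriction $\alpha\le4$ is roughly where localization at $\bar\gamma=1$ \emph{survives}, not where your mechanism is controlled. Formally plugging $\alpha=9/2$ into the same rate formula gives $\nu_\ell(1/4)=-1/32<0$; only in that regime does your exponential loss appear. Consequently, exponential-order Laplace yields only $v/v^{\sML}=e^{o(L)}$, which cannot separate $\alpha>2$ from $\alpha=2$. The true divergence for $2<\alpha\le4$ is sub-exponential: $v/v^{\sML}\asymp L^{1/2}(\log L)^2$.

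What is missing is a prefactor-level analysis on the window $\gamma=L+O(\sqrt L)$ in \emph{both} cases, not only for $\alpha=2$ (Proposition~\ref{prop:rare-critical-window}). Two ingredients your plan lacks are needed.

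First, $\bL$ is not a Laplace integral over the mixture. By \eqref{eq:binary-response-Gen} it is a point evaluation at the basin boundary $\gamma(t)=F_t^{-1}(1-q(1))$. Solving the quantile equation places this boundary at logit $z_L=-\frac14\log L+O(1)$.

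Second, Fisher consistency gives pointwise $\varphi'_\sigma(u)=h_\ell(u)\cosh^2(u)\,(\tanh u-\sigma)$; this is not an integral relation. Hence $j_\ell=\sum_\sigma\pi_\sigma(\varphi'_\sigma)^2=h_\ell^2\cosh^2$, with tail exponent $2\alpha-2$. After the tilt coming from the mixture density, and up to a Gaussian factor $e^{-z^2/(2L)}$, the Hessian and score-covariance integrands in the logit variable behave like $e^{(2-\alpha)|z|}$ and $e^{(4-2\alpha)|z|}$ as $z\to-\infty$.

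\begin{itemize}
\item For $\alpha>2$ these tails are integrable, so the matrices converge to fixed positive definite limits. The ratio is then of order $L^{1/2}(\log L)^2$, with the $(\log L)^2$ coming from $z_L$ entering through the slope coordinate of the sandwich.
\item For $\alpha=2$ the tails are cut off only at $|z|\sim\sqrt L$. After rescaling by $\operatorname{diag}(1,\sqrt L)$, the matrices grow like $\sqrt L$. This removes the $L^{1/2}$, makes $z_L$ harmless since $z_L/\sqrt L\to0$, and $c_h$ cancels from the sandwich.
\end{itemize}

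Finally, note that with the statement's convention $\pi_\sigma=(1+\sigma\tanh u)/2$, cross-entropy has $h_\ell=\operatorname{sech}^2u$ and $c_h=4$, not $h_\ell=\pi_+\pi_-$ and $c_h=1$. Since the whole argument hinges on exact equality of rates, mixing full and half log-odds conventions, as you do, would also be fatal.
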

The general regularity assumptions of Theorem \ref{thm:Overall-Limit-Gen-Informal}
(which also apply to Theorem \ref{thm:Bounded-Suboptimality})
are stated in Section \ref{sec:N_Asymp-Gen}, while the proof
of this theorem is given in Appendix \ref{sec:proof-discrete-continuous-theory}.

We note that CE loss satisfies Eq.~\eqref{eq:LossAtInfty}, with
$\alpha=2$, whence the `asymptotic relative inefficiency'
$v(\ell,q)/v^{\sML}(q)$  is bounded. In contrast, the MSE
loss satisfies the assumption with $\alpha = 4$, and therefore 
its inefficiency is unbounded.

\section{Discussion and limitations}

We presented the design of a generative model for tabular data, based on continuous diffusions. The model has state of the art 
capabilities in generation and conditional generation. An important
methodological element is the use of learned embeddings and CE loss for categoricals, for which we provide a mathematical rationale. Finally, we trained our model on datasets that are one order of magnitude larger than previous published work, and demonstrated the ability to
recover econometric patterns, providing complementary information to earlier validations.

An important missing element is cross-datasets modeling.
In preliminary work, we trained a cross-datasets version of our 
architecture, but defer this to a future report. 
An important limitation to our econometrics application is that we did not develop rigorous statistical guard-rails for our approach. 
We consider it an important avenue for future work.

\section*{AI use statement}
We used AI assistants to polish writing, support literature review, 
and assist with code development. All references included in
 the paper were manually verified by the authors, and all 
 AI-generated code was reviewed and tested for correctness.
  We did not use AI tools to write code for the core methodological c
  omponents of the work. We additionally used generative
   AI tools for drafting the proofs in Appendix~\ref{app:Asymptotics}. The proofs were reviewed and edited by the authors. We take responsibility for the final content of this work, including text, claims or artifacts produced with the aid of generative AI.

\section*{Reproducibility statement}
We use either publicly available datasets or provide complete details of the datasets constructed in this work in Appendix~\ref{app:Datadetails}. Model architectures and hyperparameter settings are documented in Appendix~\ref{app:uncondGenDetails} and Appendix~\ref{app:CondGenDetails}. All experiments are repeated across multiple random seeds, and we report means together with standard errors (or standard deviation) throughout the paper.

\bibliographystyle{amsalpha}
\newcommand{\etalchar}[1]{$^{#1}$}
\providecommand{\bysame}{\leavevmode\hbox to3em{\hrulefill}\thinspace}
\providecommand{\MR}{\relax\ifhmode\unskip\space\fi MR }
\providecommand{\MRhref}[2]{%
  \href{http://www.ams.org/mathscinet-getitem?mr=#1}{#2}
}
\providecommand{\href}[2]{#2}

\newpage
\appendix

\section{Datasets details} \label{app:DataMetrics}
 \label{app:Datadetails}

{\bf Construction of ACSLabour.} We first provide a detailed description of the construction of the ACSLabour dataset. As mentioned in Section~\ref{subsec:Datasets}, all datasets used herein are derived from the ACS PUMS. The ACS is a survey administered by the U.S. Census Bureau, while PUMS provides de-identified person-level survey records containing demographic, geographic, educational, employment, and income information.
Folktables~\citep{ding2021retiring} constructs standardized machine-learning datasets from ACS PUMS by specifying a population filter, a set of retained columns, and a prediction target and two of our utilized datasets (ACSPubCov and ACSMobility) are derived from this work, concatenated over the $5$-year survey period $2014-2018$. 

In addition to these, we construct \emph{ACSLabour}, a dataset derived from the ACS PUMS which proves particularly useful for econometric experiments considered in this work. The existing folktables dataset ACSIncome discretizes income into a binary prediction target whereas ACSLabour retains continuous log wage, and is designed to build predictive models for this outcome. This choice enables analyses such as regressions on the continuous log wage (see Appendix~\ref{app:Mincer}). We also retain a broader set of labor and demographic columns, including high-cardinality occupation, industry, geographic, and birthplace variables a detailed list of which is provided in Table~\ref{table:ACSLabourDescription}. ACSLabour therefore is both a regression benchmark and a stress test of whether the generative model can recover rare, semantically meaningful subpopulations within a large table.

\begin{table*}[ht]
\centering
\footnotesize
\begin{tabular}{llp{0.68\textwidth}}
\hline
Column & Type & Description \\
\hline
\texttt{AGEP}      & Numerical   & Age in completed years. \\
\texttt{WKHP}      & Numerical   & Usual hours worked per week during the previous 12 months. \\
\texttt{log\_wage} & Numerical   & Regression target: natural logarithm of annual wage and salary income in constant 2018 dollars, computed as $\log(\texttt{WAGP}\times\texttt{ADJINC}/10^6)$. \\
\texttt{ST}        & Categorical & State or territory of residence. \\
\texttt{PUMA}      & Categorical & Public Use Microdata Area of residence; codes are unique only within state. \\
\texttt{SEX}       & Categorical & Sex. \\
\texttt{RAC1P}     & Categorical & Recoded detailed race. \\
\texttt{HISP}      & Categorical & Detailed Hispanic-origin recode. \\
\texttt{SCHL}      & Categorical & Highest level of educational attainment. \\
\texttt{POBP}      & Categorical & Place of birth. \\
\texttt{CIT}       & Categorical & Citizenship status. \\
\texttt{ENG}       & Categorical & Self-reported ability to speak English. \\
\texttt{OCCP}      & Categorical & Occupation code. \\
\texttt{INDP}      & Categorical & Industry code. \\
\texttt{COW}       & Categorical & Class of worker. \\
\texttt{MAR}       & Categorical & Marital status. \\
\texttt{HICOV}     & Categorical & Health-insurance coverage status. \\
\texttt{WKW}       & Categorical & Interval-coded number of weeks worked during the previous 12 months. \\
\hline
\end{tabular}
\caption{Modeled columns in ACSLabour. Categorical variables retain their original ACS PUMS codes; code definitions and semantic labels are provided in the ACS 2014--2018 PUMS Data Dictionary~\cite{census_pums_dictionary_2014_2018}.}
\label{table:ACSLabourDescription}
\end{table*}
From the full $2014-2018$ PUMS dataset, for ACSLabour we retain only those rows satisfying: 
\begin{itemize}
    \item $\texttt{AGEP} \geq 16$;
    \item $\texttt{ESR} \in \{1,2\}$, corresponding to civilians employed either at work or temporarily absent from work;
    \item \texttt{WAGP} is observed and strictly positive; and
    \item $\texttt{WKHP} > 0$.
\end{itemize}
{\bf Datasets summary.} Table~\ref{table:ACSDataSummary} summarizes the size, column types, and designated prediction task for each of the three datasets utilized in our experiments.
\begin{table}[ht]
\centering
\small
\begin{tabular}{lrrrll}
\hline
Dataset & Rows & Numerical & Categorical & Target & Task \\
\hline
ACSLabour & $6{,}892{,}317$ & $3$ & $15$ & \texttt{log\_wage} & Regression \\
ACSPubCov \citep{ding2021retiring} & $5{,}916{,}565$ & $2$ & $18$ & \texttt{PUBCOV} & Classification \\
ACSMobility \citep{ding2021retiring} & $2{,}041{,}325$ & $4$ & $17$ & \texttt{MIG} & Classification \\
\hline
\end{tabular}
\caption{Summary of the ACS-derived datasets}
\label{table:ACSDataSummary}
\end{table}

{\bf Preprocessing.} All preprocessing transformations are fit on the training partition and then applied unchanged to the validation and test partitions. Numerical columns are quantile transformed, while categorical columns are integer-encoded. We use a 70/10/20\% train/validation/test split, stratified by the target for classification datasets and unstratified for regression datasets.
\section{Unconditional generation: Experiment details} \label{app:uncondGenDetails}

In this section, we provide a full description of the experiment setup leading to the results reported in Tables \ref{tab:unconditional-comparisons} and \ref{tab:unconditional-architectures}. We recall that the model obtained by the architecture/training choices of $(i)$~learnt embeddings, $(ii)$~logits mode and $(iii)$~CE loss is utilized in Table \ref{tab:unconditional-comparisons} to compare against baselines. 
\subsection{Architecture parameters} \label{app:denoiserArch}

When used, the learned embedding matrix is implemented using PyTorch's \texttt{nn.Embedding}. In all such experiments we set the embedding dimension to $d_{\mathrm{emb}}=16$. Numerical columns are transformed using a quantile transformation. The concatenated noisy representation $\boldsymbol{r}_t$ is first linearly projected to a hidden width of $d_h=1024$, which is subsequently processed by a shared MLP comprising $L=5$ fully connected hidden layers of width $d_h=1024$ and using SiLU activations. The diffusion time $t$ is encoded using a $1024$-dimensional sinusoidal embedding, which is further processed by two fully connected layers of width $1024$ with SiLU activations. This time embedding is added to the projected input representation before it is passed through the shared MLP. On the ACSLabour dataset, the unconditional model contains approximately $9.84$M parameters, including the learned embedding tables. 

\subsection{Training} \label{app:uncondTrainDetails}

All the \contdiff{} variants are trained for $2000$ epochs with a batch size $8192$ and AdamW ($\beta _{1}=0.9$ and $\beta _{2}=0.999$). We use an initial learning rate of $3 \times10^{-4}$ with a cosine decay schedule. We use the noise schedule defined above with $(\sigma_{min}, \sigma_{max}, \rho) = (0.002, 80, 7)$. Unconditional and CFG-based conditional generation use $100$ Euler steps. Following earlier work~\citep{song2021score}, we use exponential moving average(EMA) of denoiser parameters with decay of $0.997$ for evaluation. The final checkpoint is selected using lowest validation loss.


All experiments were run on a single node with 192 CPU cores and 8 NVIDIA A100 80 GB GPUs, using Ubuntu 22.04 and CUDA 13.0. We train each model using three different random seeds and use bfloat16 (bf16) precision for all training runs. Each individual run, including both training and evaluation, takes less than 12 GPU-hours.

\subsection{Baselines implementation} \label{app:uncondBaselinesDetails}
We benchmark \contdiff{} against three recent state-of-the-art generative models for tabular data. Two of these use continuous diffusion (TabRep~\citep{si2025tabrep} and TabCascade~\citep{mueller2026cascaded}), while TabDiff~\citep{shi2025tabdiff} is based on mixed diffusion.

To ensure a fair comparison, we use the official implementations released by the authors:
\begin{itemize}
    \item For TabRep, we use the flow-matching variant, TabRep-Flow, available at \url{https://github.com/jacobyhsi/TabRep}.
    \item For TabCascade, we use the implementation available at \url{https://github.com/muellermarkus/tabcascade}.
    \item For TabDiff, we use the implementation from \url{https://github.com/minkaixu/tabdiff}. We observe that, with the default recommended learning rate, training becomes unstable after a few optimization steps, with gradient norms increasing sharply and failing to recover. We therefore use a lower learning rate of $3\times10^{-4}$ together with gradient-norm clipping at $1.0$, which yields stable training and substantially better performance. We retain the original denoiser architecture; in contrast, some prior comparisons modify TabDiff by removing its Transformer layers.    
\end{itemize}

Importantly, we use the exact same training, validation, and test splits as for \contdiff{}. We train each model end-to-end using its official implementation and generate a synthetic dataset containing $n_{\text{train}}$ rows, matching the size of the real training set. Regardless of the generation method, all synthetic datasets are evaluated using the same standardized evaluation setup described in the subsequent Appendix \ref{app:MetricsConfIntervals}. 

We note that the default training budgets used in the official implementations are substantially smaller, potentially due to their evaluation on smaller datasets. To match the training budget, we therefore train all models for $2000$ epochs with batch size of $8192$ and select the checkpoint with the lowest validation loss.

{\bf Training time comparison.} To quantify the training-efficiency gains of \contdiff{} over TabDiff, we conduct an isolated profiling experiment on the ACSLabour dataset. For both models, we measure the wall-clock time over 100 gradient steps after 20 warm-up steps, using a batch size of 8,192 on the same NVIDIA A100-SXM4-80GB GPU. Averaged over five runs, TabDiff requires $78.03$ ms per gradient step, compared with $34.98$ ms for \contdiff{}. Using BF16, as in all of our experiments in this paper, further reduces the \contdiff{} step time to $14.40$ ms, leading to a $5.4\times$ speedup over TabDiff.

\subsection{Metrics computation} \label{app:MetricsConfIntervals}

Upon training the model, we generate $n_{\mathrm{train}}$ (i.e. size of the training data) synthetic rows for each method. To compute the fidelity metrics (Shape, Trend, and Conditional Shape), we subsample this to $n_{\mathrm{test}}$ synthetic rows and compare them against the held-out real test set. 

The MLE metrics involve randomness in fitting the downstream prediction model; we account for this by repeating both TRTR and TSTR evaluations $10$ times. More precisely, in each round we sample $n_{\mathrm{test}}$ rows without replacement from the real training set for TRTR and from the synthetic pool for TSTR, fit a fresh XGBoost model, and evaluate it on the full real test set. We separately average the $10$ TRTR and $10$ TSTR scores and compute their ratio to obtain the final MLE score. 

For all metrics in Table~\ref{tab:unconditional-comparisons} and Table~\ref{tab:unconditional-architectures}, we report the mean and standard deviation across three training seeds.
\section{Conditional generation: Experiment details} \label{app:CondGenDetails}
We now provide a description of the experimental setup from Figure \ref{fig:cfg_acslabour_cond_shape_all}, as well as all the additional results on conditional generation in Appendix \ref{app:AdditionalResults}. 
\subsection{CFG} \label{app:CondGenCFG}
The CFG denoiser architecture is described in Figure~\ref{fig:CFG_arch_diagram}. 
The training procedure is identical to that reported in Appendix~\ref{app:uncondTrainDetails} with the following choices specific to the CFG variant of \contdiff{}. We use CFG dropout probability $p_{\mathrm{dropout}}=0.15$, $p_{\mathrm{rng}}=0.5$, and $p_{\mathrm{lin}}=0.5$. For numerical range conditions, we use width limits $(w_{\min},w_{\max})=(0.02,1.0)$, while for linear conditions we use $(\Delta_{\min},\Delta_{\max})=(0.02,1.0)$. These limits are specified as fractions of the corresponding variable or linear-function span, rather than as absolute widths. For generation, the CFG conditioning strength of $\omega = 1$ gave the best performance.

\begin{figure}[ht]
    \centering
    \includegraphics[width=0.95\textwidth]{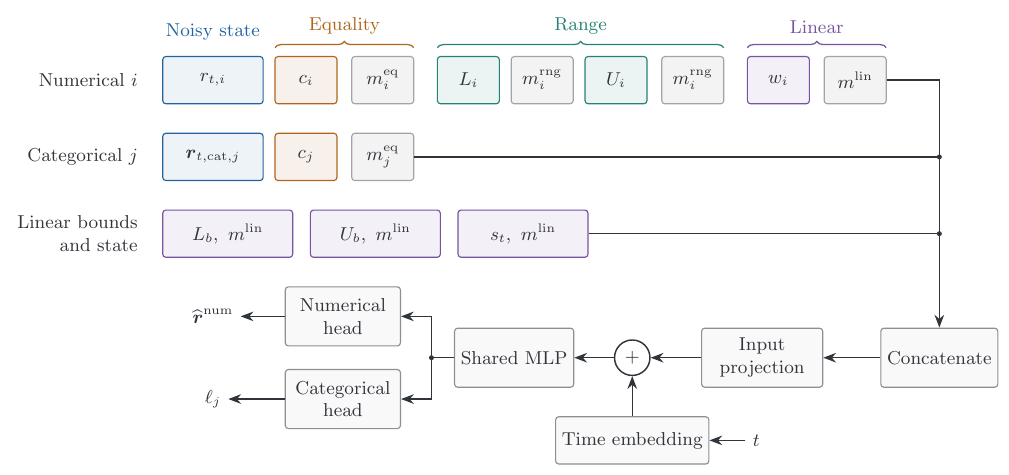}
    \caption{
    The \contdiff{} conditional denoiser differs from the unconditional one only at the input. For each column, the noisy state is concatenated with the parameters of every condition involving that column (equality value, range bounds, or linear weight), each paired with a mask indicating whether the condition is present. A linear condition additionally contributes its bounds and its value $s_t$ at the current noisy state. The concatenated input is then projected and processed by the same time-conditioned MLP and output heads as the unconditional model, and setting all masks to zero recovers the unconditional model (Section~\ref{subsec:CFG}; Appendix~\ref{app:CondGenCFG}).
    }
    \label{fig:CFG_arch_diagram}
\end{figure}

\subsection{Inference-time methods} \label{app:CondGenInfTime}

Rejection sampling and SMC can be used in conjunction with any unconditionally trained model---in our experiments, we utilize the same checkpoint used to obtain the results in Table~\ref{tab:unconditional-comparisons}, i.e. with CE loss in logit parameterization and using the learnt embeddings. The architecture and training details for this model are provided in Appendix~\ref{app:uncondGenDetails}. 

The pseudocode for SMC is provided in Algorithm~\ref{alg:smc}. If $n_{\mathrm{syn}}$ synthetic rows are requested, we generate $\lceil n_{\mathrm{syn}}/4 \rceil$ rows at a time using SMC and subsequently concatenate all four sets of synthetically generated rows. To generate each set of $m = \lceil n_{\mathrm{syn}}/4 \rceil$ rows, we initialize $m \cdot \max\left(1, \big\lceil\frac{8192}{m}\big\rceil\right)$ trajectories and use an effective sample size (ESS) threshold of $\alpha=0.95$ throughout our experiments. For conditioning on arbitrary numerical conditions (Table \ref{table:ArbitNumCondSMC}) we need to compute the probability that the final sample produced by trajectory will satisfy the given condition: we compute this via a Monte Carlo estimate; drawing $256$ antithetic samples from a Gaussian centered at the denoiser prediction and computing the fraction of draws that satisfy the requested condition (recall Section~\ref{subsec:InfTimeMethods}). 

For rejection sampling, for each condition $\mathcal{C}$, we retain the first $\min(100{,}000,n_{\mathrm{test}}^{\mathcal{C}})$ generated rows that satisfy the condition, where $n_{\mathrm{test}}^{\mathcal{C}}$ denotes the number of test-set rows satisfying $\mathcal{C}$. We use a maximum budget of 10 million unconditionally generated samples for rejection sampling. If a condition exhausts this budget before producing the required samples, we report its score as missing.
\subsection{Evaluation methodology} \label{app:CondGenEval}

 We use a fixed set of $50$ slice conditions $\mathcal{C}$ for each task (equality, numerical range, and linear conditioning) with each slice containing at least $100$ rows in the test set. The same $50$ slices are used for evaluating all methods. To construct an equality condition, we first sample $k\in{1,2,3,4}$, then select $k$ categorical columns uniformly without replacement and assign their conditioned values using a uniformly sampled row. For numerical range conditions, we sample $k\in{1,2,3,4}$ numerical columns uniformly without replacement. For each selected numerical column $i$, we sample the interval in quantile space by first drawing a quantile width $q_i\sim\text{LogUniform}(0.002,0.8)$ and then a lower quantile bound $a_i\sim\mathrm{Unif}(0,1-q_i)$. Set the upper quantile bound to $a_i+q_i$. These quantile bounds are then mapped back to the original numerical space to get $(L_i,H_i)$. The bounds for integer columns are rounded to the integers values. For linear conditions, we use the same distribution as during training. For all three tasks, we repeatedly sample candidate conditions until the resulting slice falls within the desired rarity range (like $1-3\%$) in the training set and contains at least $100$ rows in the test set.

For each slice condition $\mathcal{C}$, we evaluate $\min(100{,}000,n_{\mathrm{test}}^{\mathcal{C}})$ synthetic rows against the same number of real rows from the test set.  When computing the fidelity metrics, we retain all generated synthetic rows, including those that do not satisfy $\mathcal{C}$, for every generation method. To compute the empirical ideal score for $\mathcal{C}$, we compare $n_{\mathrm{eval}}^{\mathcal{C}}$ real test rows against an equal number of rows from the real training set. The rarity of $\mathcal{C}$ is defined by its frequency in the training set. For each condition $\mathcal{C}$, we compute the fidelity metrics using the three checkpoints obtained from the three independent training seeds. Each metric for each slice is averaged across these three runs. In the final fidelity plots, conditions are grouped into rarity buckets (e.g., $1$--$3\%$), and we report the mean and standard error of the scores across the slice conditions within each bucket.
\subsection{Baseline implementation} \label{app:HarpoonImplementation}

For a fair comparison, we apply Harpoon~\citep{shankar2026harpoon} guidance to the one-hot embedding, MSE-loss, representation-mode variant of \contdiff{} as the authors chose this base model in their work to be used for Harpoon. This allows controlled comparison of conditional generation capability in which the denoiser architecture and diffusion process are held fixed, with results presented in Figure~\ref{fig:cfg_acslabour_cond_shape_all} and Appendix~\ref{app:AdditionalResults}. We use the constant guidance scale of $0.2$, both as suggested by the authors and validated by us through a guidance-scale sweep on the ACSLabour dataset. We additionally extend Harpoon to support linear conditioning: each linear constraint is converted into an equivalent normalized interval in raw-data coordinates. At every diffusion step, it penalizes the denoiser prediction’s distance outside this interval and follows the loss's gradient as additional guidance.

While our implementation of Harpoon is faithful to the authors' design of base model, in light of the results in Table~\ref{tab:unconditional-architectures} one may ask whether Harpoon, which is essentially diffusion posterior sampling (DPS) \cite{chung2022diffusion} with a double-exponential likelihood (after forming the real representation of each row via the one-hot embedding) might be competitive with our methods with the strongest base architecture viz. learnt embedding, logits parameterization, and CE loss. We implement this version of Harpoon for conditional generation on the ACSLabour dataset, and plot the Shape vs. rarity in Figure~\ref{fig:cfg_acslabour_cond_shape_all_logitHarpoon}. As we can see, while a superior base model does improve performance vis a vis the previous version of Harpoon it still remains significantly below both SMC and CFG. We observe that this conclusion continues to hold across various datasets and metrics.
\begin{figure}
    \centering
    \includegraphics[width=0.9\textwidth]{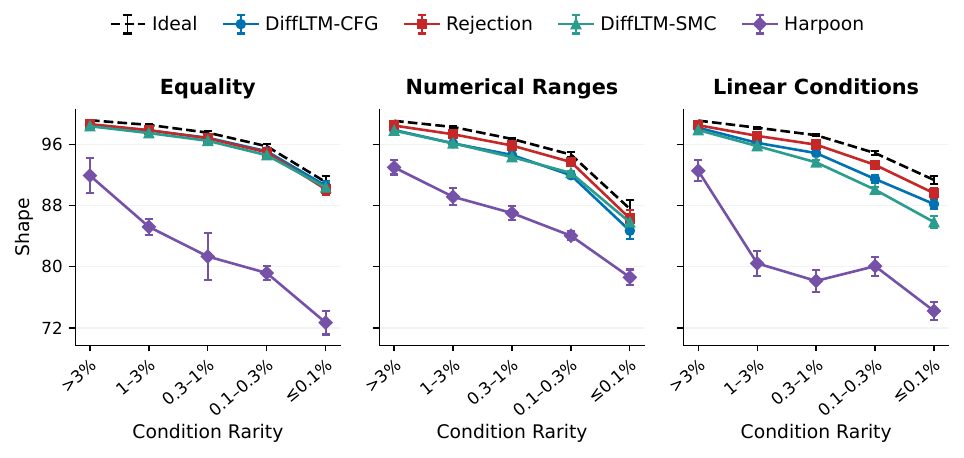}
    \caption{
     Shape score for conditional generation of ACS Labour data ($n_{\strain} \approx 4.8$M) with Harpoon implemented on same base model as CFG and SMC. The generation and evaluation protocol is otherwise identical to that followed in Figure~\ref{fig:cfg_acslabour_cond_shape_all}. }
    \label{fig:cfg_acslabour_cond_shape_all_logitHarpoon}
\end{figure}

\section{Arizona proposition 206 experiment details} \label{app:AZMinwageDetails}

We provide details on the experimental setup to study effects of Proposition 206 (cf. Figures~\ref{fig:AZ_PA_SC_result} and~\ref{fig:AZ_SC_indiv_edu}), as well as present additional results.  
\subsection{Dataset} \label{app:CPSDataset}

We utilize wage records from the 2010--2019 Current Population Survey (CPS) data, made available by the NBER \citep{nber_morg}. We retain employed private-sector  workers aged 16--64 with self reported hourly wages between $\$2.00$ and $\$99.98$. The resulting dataset contains 489,225 rows corresponding to observations from all 50 states and DC. The retained columns are state, year, age, sex, education, race, broad occupation, and $\log$(hourly wage), with wages normalized to constant 2019 units using CPI-U-RS, a provided inflation measure.

\subsection{Synthetic control counterfactual} \label{app:SCCounterfactual}

Our statistic of interest is the tenth-percentile hourly wage each year; this is the region where the effect of a minimum wage policy shock should be most visible. The donor pool $\mathcal{D}$ consists of the 21 states that remained at the federal $\$7.25$ minimum wage throughout 2010--2019: AL, GA, ID, IN, IA, KS, KY, LA, MS, NH, NC, ND, OK, PA, SC, TN, TX, UT, VA, WI, and WY. To pick the synthetic control weights, we construct the following $14$ features (all measured pre-2017) for each donor state and AZ: 
\begin{itemize}
    \item Fraction of workers aged 16--24
    \item Fraction of workers with at most a high school education 
    \item Fraction of workers with at least a bachelor degree
    \item Fraction of workers in the following occupations:
    \begin{itemize}
        \item Sales
        \item Cleaning or personal care 
        \item Food preparation and serving 
    \end{itemize}
    \item Mean $\log($hourly wage)
    \item $10$th percentile wage in each pre-intervention year (2010--2016).
\end{itemize}
The population features chosen in addition to the pre-intervention $10$th percentile value are those that make the individual more likely to be affected by a minimum wage increase. 

Let $\boldsymbol{x}_{\mathrm{AZ}}\in\mathbb{R}^{14}$ denote Arizona's obtained feature
vector and $\boldsymbol{x}_j$ the corresponding feature vector for donor state $j$ (all standardized). We choose the weights by minimizing the mean-squared distance:
\begin{align} \label{eq:SCWeightsOpt}
\widehat{\boldsymbol{w}} =\arg\min_{\boldsymbol{w}}
\Big\|
\boldsymbol{x}_{\mathrm{AZ}}-\sum_{j \in \mathcal{D}} w_j\boldsymbol{x}_j
\Big\|_2^2
\quad\text{subject to}\quad
w_j\geq 0,\qquad \sum_{j \in \mathcal{D}} w_j=1.
\end{align}
The post-intervention synthetic-control outcome $Y$ is then
\[
\widehat{Y}_{\mathrm{AZ},t} = \sum_{j \in \mathcal{D}} \widehat{w}_jY_{j,t}.
\]
The exact analogous procedure is used to construct the synthetic-control counterfactual for PA. 
\subsection{\contdiff{} counterfactual} \label{app:AZMinwageLTMCounterfactual}

From the dataset, we remove all rows ($2,539$ in total) satisfying 
\[
    \textsc{State}=\mathrm{AZ}
    \quad\text{and}\quad
    \textsc{Year}\in\{2017,2018,2019\}
\]
and train a CFG-enabled \contdiff{} model with the architecture/training choices described in Appendix \ref{app:CondGenCFG}. 

At inference, we provide condition $\textsc{State}=\mathrm{AZ}$ and the requested post-treatment year and sample all remaining worker attributes and wages jointly. We generate $100,000$ synthetic rows for the slice corresponding to each year and calculate the 10th percentile value of the hourly wage (converted into nominal \$). 
The exact analogous procedure is used to construct the \contdiff{} counterfactual for PA. For each held-out state we train one model using seed 500.
\subsection{Subpopulation-level analyses} \label{app:AZMinwageIndiv}

Synthetic control produces one counterfactual summary statistic per year (in the above case, the $10$th percentile hourly wage value). Conditional generation instead samples from the \emph{joint} counterfactual distribution and produces individual-level records. By further conditioning on other columns corresponding to demographic/occupational profiles and producing counterfactual distributions \emph{per subpopulation}, we can assess which subpopulations are more affected by the policy.

As an example, we stratify based on education level (shown in Figure~\ref{fig:AZ_SC_indiv_edu}) and broad occupational
description, since we expect certain education and occupation profiles to be
more affected by a minimum-wage increase. Using the model trained with all
post-intervention Arizona rows excised, we conditionally generate $100{,}000$
rows for each of the following slices:
\begin{itemize}
    \item $\textsc{State}=\textsc{AZ}$, $\textsc{Year}\in\{2016,2019\}$, and
    $\textsc{Edu}\in\{\text{high school},\text{bachelor's}\}$;
    \item $\textsc{State}=\textsc{AZ}$, $\textsc{Year}\in\{2016,2019\}$, and
    $\textsc{Occ}\in\{\text{healthcare},\text{cleaning},\text{sales}\}$.
\end{itemize}
The 2016 slices provide a calibration value because data available during training, whereas the 2019 slices correspond
to the held-out, post-intervention period.
We consider two metrics of policy exposure. First, we calculate the
Kolmogorov--Smirnov (KS) distance between the real and synthetic wage
distributions in each year and report its change from 2016 to 2019 for group $g$:
\[
\Delta_{\mathrm{KS}}(g)
=
\mathrm{KS}\!\left(F^{\mathrm{real}}_{g,2019},
                   F^{\mathrm{syn}}_{g,2019}\right)
-
\mathrm{KS}\!\left(F^{\mathrm{real}}_{g,2016},
                   F^{\mathrm{syn}}_{g,2016}\right).
\]
If the 2019 distance is similar to its 2016 calibration value, this suggests
that the policy shock for that subpopulation is small relative to the model's
baseline approximation error. Second, because Arizona's minimum wage was
$\$11.00$ on January 1, 2019, we calculate the \emph{sub-$\$11$ reduction}:
\[
\Delta_{<11}(g) = \mathsf{P}_{\mathrm{syn}}(\text{hourly wage}<\$11\mid g,2019) - \mathsf{P}_{\mathrm{real}}(\text{hourly wage}<\$11\mid g,2019).
\]
This measures the fraction of the counterfactual population predicted to earn
below $\$11$ that is absent from the observed post-policy distribution.

\begin{figure}[h]
    \centering
    \includegraphics[width=0.75\textwidth]{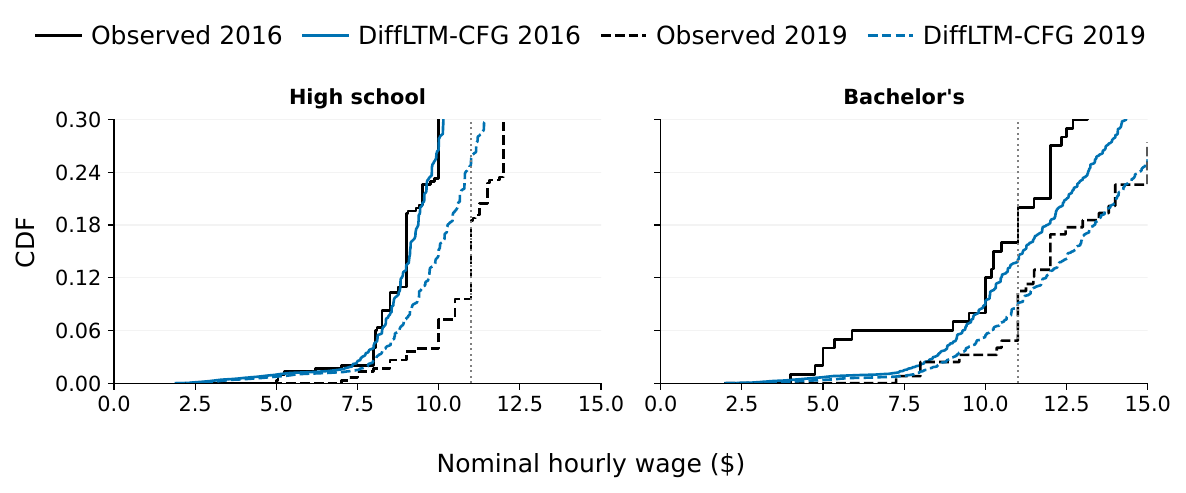}
    \caption{
    \contdiff{} generates the full counterfactual wage distribution for any subgroup. Arizona wage CDFs (lowest 30\%); 2016 seen in training, 2019 held out. Dotted: \$11 minimum set by Proposition~206. The policy effect, the gap between dashed curves, is large for high-school workers and small for bachelor's-degree workers relative to their 2016 gap (Section~\ref{subsec:Econ}; Appendix~\ref{app:AZMinwageIndiv}). 
    }
    \label{fig:AZ_SC_indiv_edu}
\end{figure}

The KS changes behave as expected (see Figures~\ref{fig:AZ_SC_indiv_edu} and~\ref{fig:AZ_SC_indiv_occ}). The distance increases by $0.128$ for
workers with a high-school education, compared with decreasing by $0.031$ for workers with a bachelor's degree. Across occupations, it increases by $0.085$ for
healthcare workers, $0.089$ for cleaning workers and $0.262$ for sales workers. The corresponding sub-\$11 reductions are $20.5$ percentage points
for high-school workers and $7.3$ points for workers with a bachelor's degree.
Across occupations, they are $1.7$ points for healthcare, $11.1$ points for
cleaning, and $36.6$ points for sales.
We note that the observed data also exhibit substantial mass at exactly \$11,
particularly among the more exposed groups.

\section{Reparametrization in synthetic control details} \label{app:SCReparamDetails}

As alluded to in Section~\ref{subsec:Econ} and illustrated in Figure~\ref{fig:sc_reparam_median}, synthetic control can be very brittle, in particular showing sharp performance degradation when the target variable is non-linearly transformed. We now precisely describe the experimental setup underlying this illustration.
\subsection{Data distribution} \label{app:SCReparamData}

To study this question in a controlled setting, we construct a synthetic dataset with state $S\in\{1,\ldots,11\}$, time period $T\in\{0,\ldots,10\}$, and exogenous noise $U\sim\mathrm{Unif}[-\sqrt{3},\sqrt3]$. The outcome is generated as
\[ 
Y=\mu_S+\mu_T+\sigma U, 
\]
where
\[ 
\boldsymbol{\mu}_T=\begin{bmatrix}11&10&\cdots&1\end{bmatrix} 
\]
\[ \boldsymbol{\mu}_S=\begin{bmatrix}0&0.02&\cdots&0.08&10&19.92&19.94&\cdots&20.0\end{bmatrix}
\]
\[ 
\sigma = 0.5.
\]

We draw $1{,}200$ independent rows for each $(S,T)$ pair (by sampling $1,200$ corresponding $U$ i.i.d.), giving a $4$-column table with columns $(Y,S,T,U)$. State $6$ is subject to an intervention during periods $T\in\{8,9,10\}$.
The construction is chosen so that state $6$, for which $\mu_6=10$, lies between two clustered groups of donor states.  The task is to learn a \contdiff{}-CFG model on this table of four columns and then conditionally generate $Y$ for the unseen slices which is $S=6$ and $T\in\{8,9,10\}$. 

To study robustness to reparameterization, we transform the outcome \(Y\) using a function \(f(Y)\). The training data are then given by the table \((f(Y),S,T,U)\). The three experiments in Figure~\ref{fig:sc_reparam_median} correspond to three choices of \(f\): (i) \(f(y)=y\), (ii) \(f(y)=\log y\), and (iii) \(f(y)=y(\max(y)-y)\). The third setting is included to evaluate robustness under a non-monotone reparameterization, where \(\max(y)\) denotes the maximum value of \(Y\) in the training set. Figure~\ref{fig:sc_reparam_median} compares the median of the distribution generated by our model with the median of the observed data and the median of the distribution generated by synthetic control.

\subsection{Training and generation} \label{app:SCReparamTrain}
Along with learned embeddings and logit parameterization, we use cross-entropy (CE) loss for the categorical features (\(S\) and \(T\)) and mean-squared error (MSE) loss for the numerical features (\(Y\) and \(U\)). Given the smaller dataset sizes, we use a smaller model for these experiments with embedding dimension \(d_{\mathrm{emb}}=4\), hidden width \(d_h=256\), and depth \(L=1\), resulting in approximately \(350{,}000\) trainable parameters. Unlike the other experiments, we standardize the numerical columns rather than applying a quantile transformation. The model is trained for \(1{,}000\) epochs with a batch size of \(4096\), and the final checkpoint is selected based on the minimum EMA validation loss. Following prior work~\citep{peebles2023scalable}, we additionally use MLP blocks with adaptive layer normalization (AdaLN)~\citep{perez2018film}. Specifically, the time embedding is used to predict per-layer scale and shift parameters, which are applied to the layer-normalized features before each linear layer. All other training and architectural details follow Appendix~\ref{app:uncondGenDetails}.

For each choice of \(f(Y)\), we repeat the experiment across three training seeds. For each trained model, we condition on \(S=6\) and each \(T\in\{0,\ldots,10\}\), generating \(100{,}000\) samples per \((S,T)\) condition. In Figure~\ref{fig:sc_reparam_median}, we report the mean of the generated medians across the three training seeds, with error bars denoting the corresponding standard error.

\section{Additional empirical results}
\label{app:AdditionalResults}

We present results on fidelity vs rarity across fidelity metrics and datasets---see Table~\ref{table:FidelityvRarityAdditional}. The protocol used for selecting slices, training the models, conditional generation, and metrics evaluation is exactly that described in Appendix~\ref{app:CondGenDetails} to obtain Figure~\ref{fig:cfg_acslabour_cond_shape_all}. We remark that our conclusion of our methods outperforming the \cite{shankar2026harpoon} baseline is very robust across metrics and datasets.

\begin{table}[ht]
\centering
\small
\begin{tabular}{lll}
\hline
Fidelity Metric & Dataset & Figure \\
\hline
Shape & ACSLabour & Figure~\ref{fig:cfg_acslabour_shape} \\
Trend & ACSLabour & Figure~\ref{fig:cfg_acslabour_trend} \\
Conditional Shape (Rare) & ACSLabour & Figure~\ref{fig:cfg_acslabour_cond_shape_rare} \\
Shape & ACSPubCov & Figure~\ref{fig:cfg_acspubcov_shape} \\
Trend & ACSPubCov & Figure~\ref{fig:cfg_acspubcov_trend} \\
Conditional Shape & ACSPubCov & Figure~\ref{fig:cfg_acspubcov_cond_shape_all} \\
Conditional Shape (Rare) & ACSPubCov & Figure~\ref{fig:cfg_acspubcov_cond_shape_rare} \\
Shape & ACSMobility & Figure~\ref{fig:cfg_acsmobility_shape} \\
Trend & ACSMobility & Figure~\ref{fig:cfg_acsmobility_trend} \\
Conditional Shape & ACSMobility & Figure~\ref{fig:cfg_acsmobility_cond_shape_all} \\
Conditional Shape (Rare) & ACSMobility & Figure~\ref{fig:cfg_acsmobility_cond_shape_rare} \\
\hline
\end{tabular}
\caption{Additional fidelity vs.\ rarity results across fidelity metrics and datasets.}
\label{table:FidelityvRarityAdditional}
\end{table}

\begin{figure}[h]
    \centering
    \includegraphics[width=0.9\textwidth]{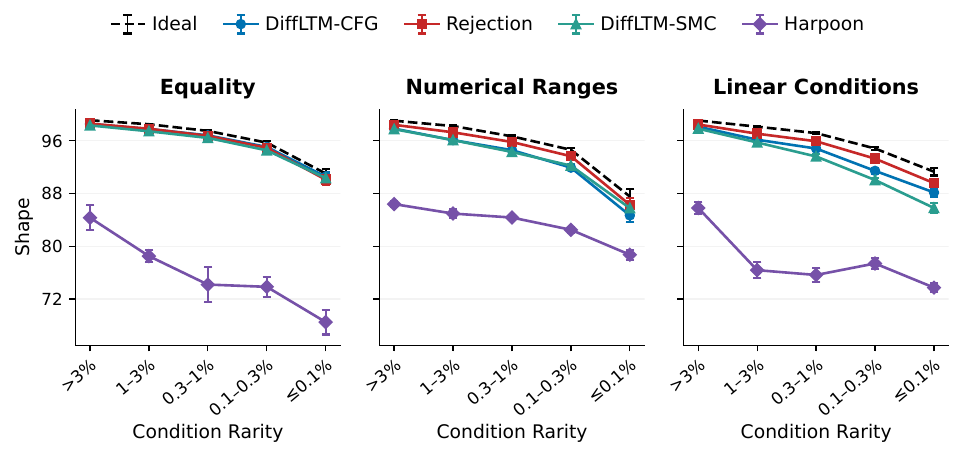}
    \caption{Shape vs rarity on ACSLabour Dataset.
    }
    \label{fig:cfg_acslabour_shape}
\end{figure}

\begin{figure}[h]
    \centering
    \includegraphics[width=0.9\textwidth]{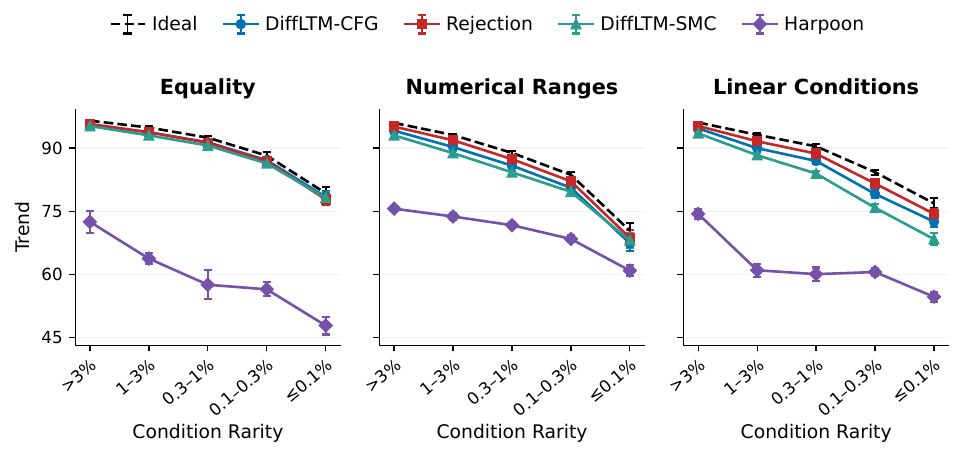}
    \caption{Trend vs rarity on ACSLabour Dataset.}
    \label{fig:cfg_acslabour_trend}
\end{figure}

\begin{figure}[h]
    \centering
    \includegraphics[width=0.9\textwidth]{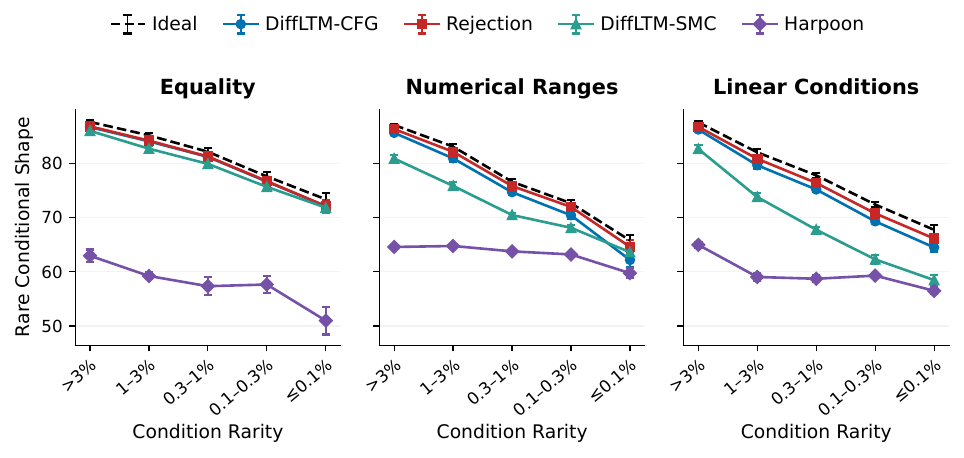}
    \caption{Conditional Shape (Rare) vs rarity on ACSLabour Dataset.}
    \label{fig:cfg_acslabour_cond_shape_rare}
\end{figure}

\begin{figure}[h]
    \centering
    \includegraphics[width=0.9\textwidth]{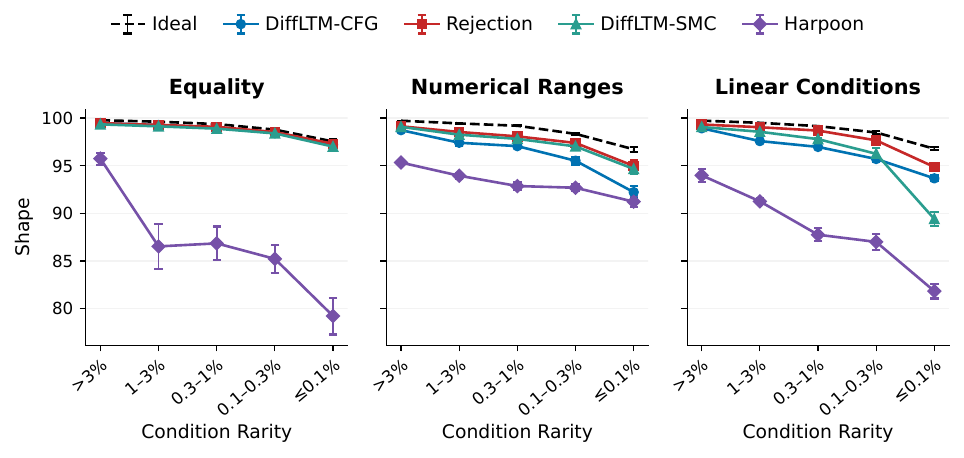}
    \caption{Shape vs rarity on ACSPubCov Dataset.}
    \label{fig:cfg_acspubcov_shape}
\end{figure}

\begin{figure}[h]
    \centering
    \includegraphics[width=0.9\textwidth]{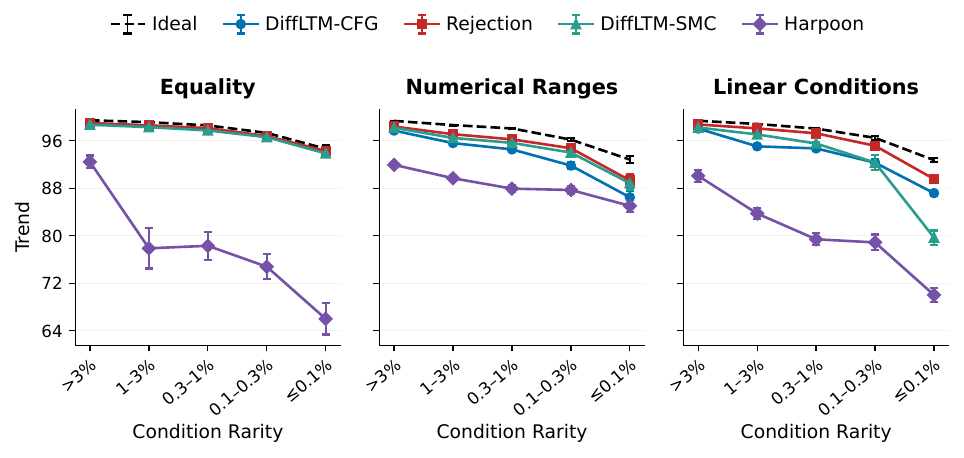}
    \caption{Trend vs rarity on ACSPubCov Dataset.}
    \label{fig:cfg_acspubcov_trend}
\end{figure}

\begin{figure}[h]
    \centering
    \includegraphics[width=0.9\textwidth]{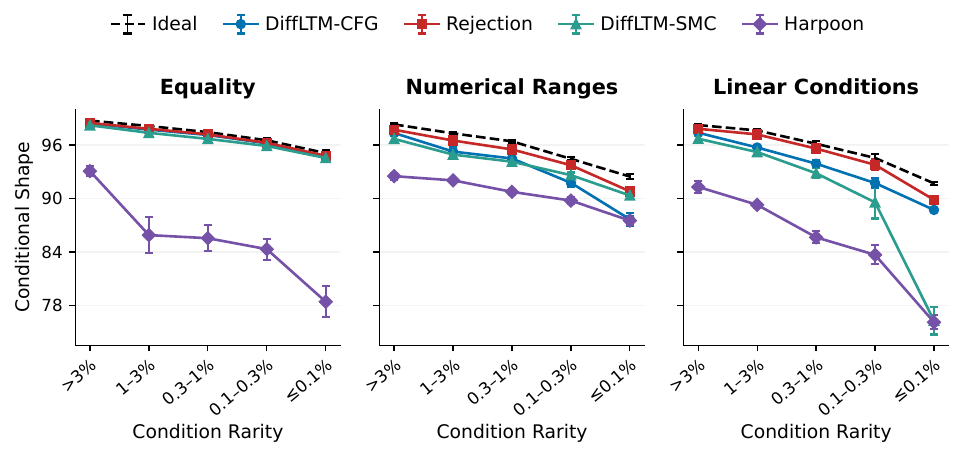}
    \caption{Conditional Shape vs rarity on ACSPubCov Dataset. }
    \label{fig:cfg_acspubcov_cond_shape_all}
\end{figure}

\begin{figure}[h]
    \centering
    \includegraphics[width=0.9\textwidth]{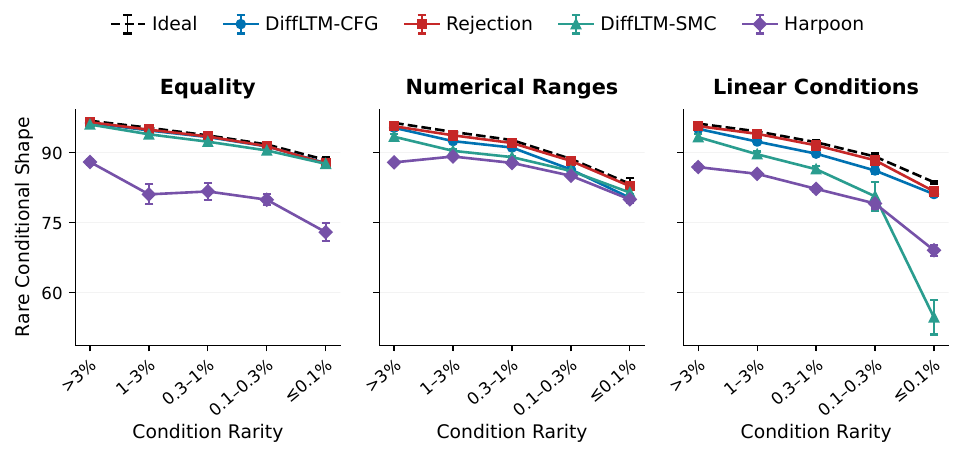}
    \caption{Conditional Shape (Rare) vs rarity on ACSPubCov Dataset. }
    \label{fig:cfg_acspubcov_cond_shape_rare}
\end{figure}

\begin{figure}[h]
    \centering
    \includegraphics[width=0.9\textwidth]{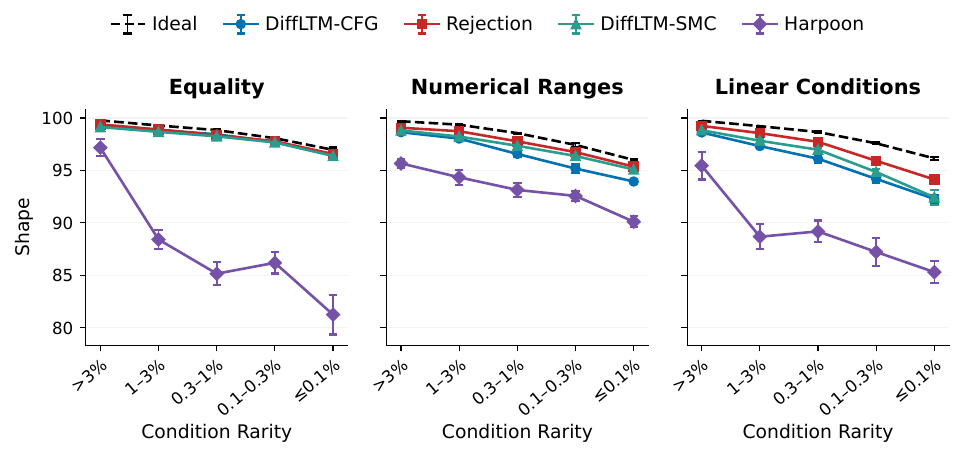}
    \caption{Shape vs rarity on ACSMobility Dataset. }
    \label{fig:cfg_acsmobility_shape}
\end{figure}

\begin{figure}[h]
    \centering
    \includegraphics[width=0.9\textwidth]{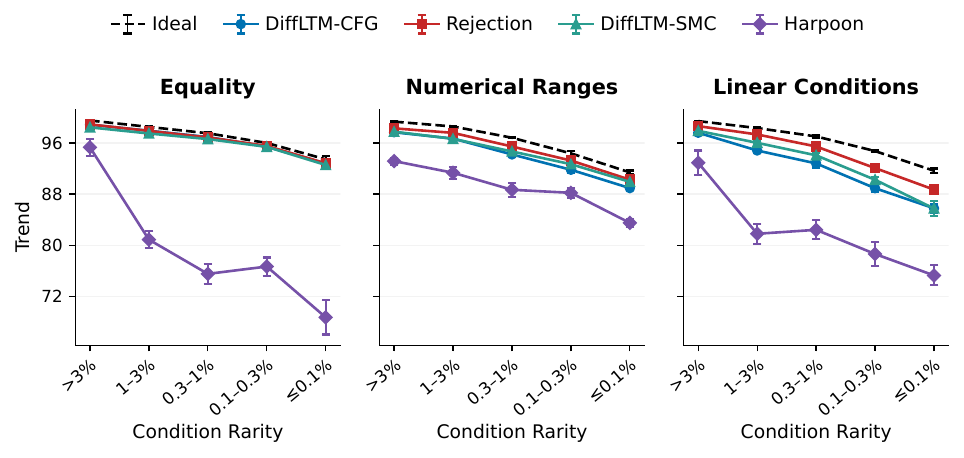}
    \caption{Trend vs rarity on ACSMobility Dataset. }
    \label{fig:cfg_acsmobility_trend}
\end{figure}

\begin{figure}[h]
    \centering
    \includegraphics[width=0.9\textwidth]{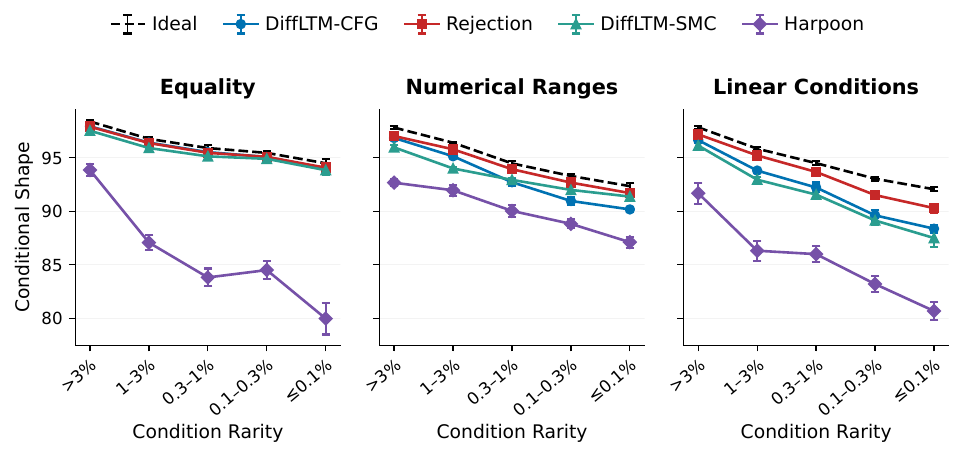}
    \caption{Conditional Shape vs rarity on ACSMobility Dataset. }
    \label{fig:cfg_acsmobility_cond_shape_all}
\end{figure}

\begin{figure}[h]
    \centering
    \includegraphics[width=0.9\textwidth]{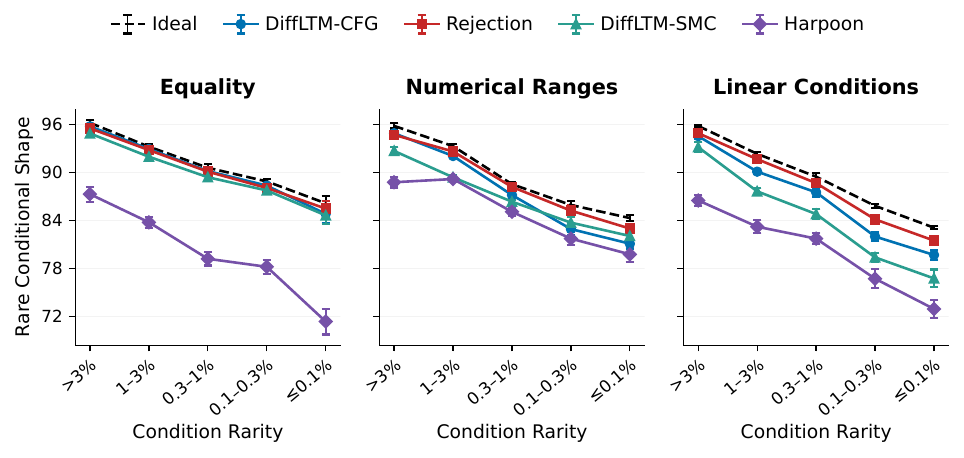}
    \caption{Conditional Shape (Rare) vs rarity on ACSMobility Dataset. }
    \label{fig:cfg_acsmobility_cond_shape_rare}
\end{figure}

We also use SMC to condition on arbitrary numerical conditions using the method described in Section~\ref{subsec:InfTimeMethods} and display the obtained shape score in Table~\ref{table:ArbitNumCondSMC} along with the ideal value. The displayed value is the mean over three training seeds. 





\begin{table}[t]
\centering
\small
\begin{tabular}{lrrr}
\hline
Slice & Rarity (\%) & Shape $\times 100$ (SMC) & Shape $\times 100$ (Ideal) \\
\hline
$\left(\frac{\mathrm{AGEP}-43}{10}\right)^2
 + \left(\frac{\texttt{log\_wage}-10.6}{0.5}\right)^2 \le 0.64$
& 9.48 & 97.56 & 99.09 \\

$0.25 \le
 \left(\frac{\mathrm{AGEP}-43}{10}\right)^2
 + \left(\frac{\texttt{log\_wage}-10.6}{0.5}\right)^2
 \le 0.64$
& 5.91 & 97.32 & 98.97 \\

$(\mathrm{AGEP}-43)(\mathrm{WKHP}-40)<-200$
& 6.90 & 96.75 & 99.08 \\

$60 \le \frac{\exp(\texttt{log\_wage})}{52\,\mathrm{WKHP}} \le 90$
& 3.63 & 96.70 & 98.73 \\
\hline
\end{tabular}
\caption{Conditional generation performance of SMC for arbitrary (nonlinear) numerical conditions on the ACSLabour dataset. Shape values displayed are means over three training seeds. Details on implementation and generation provided in Appendix~\ref{app:CondGenInfTime}.}
\label{table:ArbitNumCondSMC}
\end{table}
\section{Pseudocode} \label{app:pseudocode}

\begin{algorithm}[ht]
\DontPrintSemicolon
\caption{Training \contdiff}\label{alg:contdiff_training}
\KwIn{training data, denoiser $f_\theta$, embeddings $\{\phi_j\}$, noise schedule $\sigma_t$, categorical parameterization and loss type}
\Repeat{convergence}{
Sample a minibatch of rows $\boldsymbol{x}$ and construct their continuous representations $\boldsymbol{r}$ using (\ref{eq:ContRepDefn})\;
Sample $t \sim \mathrm{Unif}[0,1]$ and $\boldsymbol{z}\sim\mathcal{N}(\boldsymbol{0},I_D)$\;
Construct the noisy input $\boldsymbol{r}_t \leftarrow \boldsymbol{r}+\sigma_t\boldsymbol{z}$\;
Evaluate $f_\theta(\boldsymbol{r}_t,t)$\;
Compute $\mathcal{L}_{\mathrm{num}} \leftarrow \frac{1}{m_{\mathrm{num}}}\|\widehat{\boldsymbol{r}}_{\mathrm{num}}-\boldsymbol{r}_{\mathrm{num}}\|_2^2$\;
\If{categorical output is logits}{
Compute $\widehat{p}_{j,c}\leftarrow\mathrm{softmax}(\boldsymbol{\ell}_j)_c$ for each categorical column $j$\;
}
\If{categorical loss is MSE}{
If logits mode, construct $\widehat{\boldsymbol{r}}_{\mathrm{cat},j}\leftarrow\sum_{c=1}^{C_j}\widehat{p}_{j,c}\phi_j(c)$ using (\ref{eq:EmbeddingDefLogitMode})\;
Compute $\mathcal{L}_{\mathrm{cat}}\leftarrow\mathcal{L}_{\mathrm{cat}}^{\mathrm{MSE}}$ using (\ref{eq:MSELossCatDef})\;
}
\Else{
Compute $\mathcal{L}_{\mathrm{cat}}\leftarrow\mathcal{L}_{\mathrm{cat}}^{\mathrm{CE}}$ using (\ref{eq:MSELossCatDef})\;
}
Update $\theta$ and, for learned embeddings, $\{E_j\}$ using gradients of $\mathcal{L}_{\mathrm{num}}+\mathcal{L}_{\mathrm{cat}}$\;
}
\end{algorithm}

\begin{algorithm}[H]
\DontPrintSemicolon
\caption{Generation with \contdiff}\label{alg:contdiff_sampling}
\KwIn{trained denoiser $f_{\theta^*}$, embeddings $\{\phi_j\}$, noise schedule $\sigma_t$, grid $1=t_0>\cdots>t_T=0$}
Initialize $\boldsymbol{r}^{(0)}\sim\mathcal{N}(\boldsymbol{0},\sigma_{t_0}^2I_D)$\;
\For{$i=0,\ldots,T-1$}{
Evaluate the denoiser at $(\boldsymbol{r}^{(i)},t_i)$\;
\If{categorical output is logits}{
Compute $\widehat{p}_{j,c}^{(i)}\leftarrow\mathrm{softmax}(\boldsymbol{\ell}_j^{(i)})_c$ and $\widehat{\boldsymbol{r}}_{\mathrm{cat},j}^{(i)}\leftarrow\sum_{c=1}^{C_j}\widehat{p}_{j,c}^{(i)}\phi_j(c)$\;
}
Form the full denoised estimate $\widehat{\boldsymbol{r}}^{(i)}$\;
Update $\boldsymbol{r}^{(i+1)}\leftarrow\boldsymbol{r}^{(i)}+(\sigma_{t_{i+1}}-\sigma_{t_i})\frac{\boldsymbol{r}^{(i)}-\widehat{\boldsymbol{r}}^{(i)}}{\sigma_{t_i}}$\;
}
Return the final numerical coordinates directly\;
\If{categorical output is representation}{
Decode each categorical column as $\widehat{x}_j\leftarrow\arg\min_c\|\widehat{\boldsymbol{r}}_{\mathrm{cat},j}-\phi_j(c)\|_2$\;
}
\Else{
Decode each categorical column as $\widehat{x}_j\leftarrow\arg\max_c\widehat{p}_{j,c}$\;
}
\KwOut{generated mixed-type row $\boldsymbol{x}_{\text{syn}}$}
\end{algorithm}

\begin{algorithm}[H]
\DontPrintSemicolon
\caption{SMC for sampling $p(\boldsymbol{x}\mid\mathcal{C})$}
\label{alg:smc}
\KwIn{condition $\mathcal{C}$, $K$ particles, Noise schedule, $n_{\mathrm{syn}}$ output rows, effective sample size (ESS) threshold $\alpha K$} 
\For{$k\in[K]$}{
$\boldsymbol{r}_k^{(0)}\sim\mathcal{N}(\boldsymbol{0},\sigma_{t_0}^2I)$; $\log w_k\gets0$; $\log p_{\mathrm{prev},k}\gets0$\; \tcp*[r]{Initialize}
}
\For{$i=0,\ldots,T-1$}{
Estimate $p_k^{(i)}(\mathcal{C})$ \;
$\log w_k\gets\log w_k+\log p_k^{(i)}(\mathcal{C})-\log p_{\mathrm{prev},k}$ for all $k$ \tcp*[r]{reweight}
\If{$\operatorname{ESS}(\log\boldsymbol{w})<\alpha K$ and $i<T-1$}{
Systematically resample the pairs $\{(\boldsymbol{r}_k^{(i)},p_k^{(i)}(\mathcal{C}))\}_{k=1}^K$ according to $\boldsymbol{w}$; set $\log w_k\gets0$\tcp*[r]{resample}
}
$\log p_{\mathrm{prev},k}\gets\log p_k^{(i)}(\mathcal{C})$ for all $k$\;
$\boldsymbol{r}_k^{(i+1)}\gets\textsc{SdeStep}(\boldsymbol{r}_k^{(i)},t_i\rightarrow t_{i+1})$ for all $k$\tcp*[r]{move}
}
Let $\widehat{\boldsymbol{x}}_k$ be the candidate synthetic row produced by trajectory $\boldsymbol{r}_k^{(T)}$\;
$\log w_k\gets\log w_k+\log\mathbf{1}\{\widehat{\boldsymbol{x}}_k\text{ satisfies }\mathcal{C}\}-\log p_{\mathrm{prev},k}$ for all $k$\;
\KwOut{$n_{\mathrm{syn}}$ rows systematically resampled from $\{\widehat{\boldsymbol{x}}_k\}_{k=1}^K$ according to $\boldsymbol{w}$}
\end{algorithm}


\FloatBarrier
\newpage
\section{Optimality of CE loss for continuous diffusions:\texorpdfstring{\\}{ }
 Proofs for Section \ref{subsec:Optimality-CE-Loss}}
\label{app:Asymptotics}

In this appendix we give a formal presentation
of the theoretical results of Section \ref{sec:Theory}
establishing optimality of CE loss in the undersmoothed regime.
Section \ref{sec:setting-Gen} introduces the setting.
Section \ref{sec:N_Asymp-Gen} describes the asymptotics of parameters estimates. Section \ref{sec:Undersmooth} focuses on the undersmoothing regime $b\ll n^{-1/2}$ and Section \ref{sec:Density}
characterizes the optimal sampling density $\lambda$ in this setting.
(This is not needed for our main result, but is nevertheless a useful byproduct of the analysis.)
Finally, proofs of the various technical steps are given in 
Section \ref{sec:proofs-N-Asymp-Gen}.

\subsection{Setting and assumptions}\label{sec:setting-Gen}

Recall the setting of Section \ref{subsec:Optimality-CE-Loss}.
We assume the embeddings $\{\bx(\sigma): \sigma\in \{0,\dots,k-1\}\}$
are distinct and all class probabilities $q(\sigma)$ are strictly positive.
We parametrize the probabilities by log-weights $\{r(\sigma)\}_{\sigma=1}^{k-1}$ as follows
\begin{align}
q(\sigma) &= \frac{1}{Z_0}e^{-r(\sigma)},
\qquad \sigma\in \{1,\dots,k-1\},\\
q(0)&=\frac{1}{Z_0},\qquad\qquad Z_0:=1+\sum_{\sigma=1}^{k-1} e^{-r(\sigma)}\, .
\end{align}

The diffusion process $\{\bX_t:\, t\in[0,T]\}$ is given by 
Eq.~\eqref{eq:general-diffusion-0}, 
or equivalently by the SDE \eqref{eq:general-diffusion-SDE}.
The functions $a(t)$, $\tau_0(t)$, $s(t)$ entering these definitions
are continuously differentiable. 
The variance at time $t$ is denoted by $\tau(t)^2:=\tau_0(t)^2s(t)$.

The evolution of the distribution of $\bX_t$ is given by the following 
Fokker-Planck PDE
\begin{align}
\partial_t p(\bx,t) &=-\nabla_{\bx}\mathbin\cdot \left( \bff(\bx,t) p(\bx,t) \right) +
 \frac{1}{2} g(t) \Delta_{\bx}  p(\bx,t) \, ,\\
 p(\bx,0) &=
\frac{1}{(2\pi \tau(0)^2)^{d/2}}\, e^{-\|\bx\|^2/(2\tau(0)^2)}\, .
\end{align}
The one-time marginals of $\bX_t$ are the same as the ones of the probability flow ODE
\eqref{eq:ProbFlow-Gen}.
Tweedie's formula reads
\begin{align}
\bm(\bx,t)
&= \frac{1}{a(t)}
\left\{\bx+\tau_0(t)^2s(t)\,\nabla\log p(\bx,t)\right\}\, .
\end{align}
This yields
\begin{align}
\bh(\bz,t)
&= \left\{\dot{a}(t) - \frac{a(t)\dot{\tau}_0(t)}{\tau_0(t)}
-\frac{a(t)\dot{s}(t)}{2s(t)}\right\}\bm(\bz,t)
+\left\{\frac{\dot{\tau}_0(t)}{\tau_0(t)}
+\frac{\dot{s}(t)}{2s(t)}\right\}\bz\\
&= \frac{\dot{a}(t)}{a(t)}\bz
+\tau_0(t)^2s(t)\left\{\frac{\dot{a}(t)}{a(t)}
-\frac{\dot{\tau}_0(t)}{\tau_0(t)}
-\frac{\dot{s}(t)}{2s(t)}\right\}
\nabla\log p(\bz,t)\, .
\end{align}

We are given i.i.d. data triples
$\{(t_i,\sigma_i,\bY_i):i\le n\}$.  Their common law is specified by
$\sigma_i\in\{0,1,\dots,k-1\}$, $\prob(\sigma_i=\sigma)=q(\sigma)$,
$t_i\sim \lambda(t)\cdot\de t$, independently of $\sigma_i$, where $\lambda$ is a
probability density on $[0,T]$, and
\begin{align}
\left.\bY_i\right|_{(t_i,\sigma_i)}
\sim \normal\big(\bx(\sigma_i)\,a(t_i),\tau(t_i)^2\id_d\big),
 \quad \quad i\le n\, . 
\end{align}

The conditional model \eqref{eq:conditional-distribution-Gen}
is parametrized by
\[
\btheta_{\sigma}=(\theta_{\sigma,0},\btheta^{+}_{\sigma})\in\reals^{d+1}.
\]
We write
\[
\btheta=(\btheta_1,\dots,\btheta_{k-1})\in E,
\qquad E:=\reals^{(d+1)\times(k-1)}.
\]
In the following parameter derivatives are taken in the space $E$
with Frobenius inner product and norm.
Hessians and covariance matrices are understood as linear operators on
$E$; operator composition is denoted by $\circ$.
We will focus on the asymptotics $n\to\infty$ with $k,d$ fixed.

The function $\btheta(t)$ is a parameter learned via the
following weighted ERM:
\begin{align}
\hbtheta_b(t)
&:= \argmin\big\{\hR_{b,t}(\btheta) :\; \btheta\in K(t)\big\}\, ,\\
\hR_{b,t}(\btheta) &:= \frac{1}{n}\sum_{i=1}^{n}
w_b(t_i-t)\, \ell_{\sigma_i}(\bY_{i};\btheta) \, .
\end{align}
where $w_b(t) = w_1(t/b)$ is an unnormalized weight function and $b$ is a bandwidth parameter
and $t\mapsto K(t)\subseteq E$ specifies
the time-dependent optimization domain. All empirical and population
minimizations at time $t$ use the same set $K(t)$, subject to
Assumption~\ref{ass:ZeroBandwidth-Gen} below.

We define the population risk at time $t$ $R_t$ and its non-zero bandwidth version 
$R_{b,t}$ as:
\begin{align}
R_t(\btheta)&
:=\E\left[\ell_\Sigma\big(
 a(t)\bx(\Sigma)+\tau(t)\bG;\btheta\big)\right]\, ,\\
R_{b,t}(\btheta)
&:=\E\left[w_b(U-t)\ell_{\Sigma}
  \big(a(U)\bx(\Sigma)+\tau(U)\bG;\btheta\big)\right]\, ,
\label{eq:localized-population-risk-Gen}
\end{align}
where $U$ has density $\lambda$, $\Sigma\sim q$,
$\bG\sim\normal(0,\id_d)$, and $U$, $\Sigma$, and $\bG$ are independent.
Finally, we define the  corresponding population estimators as 
\begin{align}
  \btheta^{b=0}(t):=\argmin_{\btheta\in K(t)}R_{t}(\btheta),\;\;\;\;\;
  \btheta^b(t):=\argmin_{\btheta\in K(t)}R_{b,t}(\btheta), 
\end{align}
and the `target' parameters as  $\btheta^*(t)$, where, for $\sigma\in \{1,\dots,k-1\}$,
\begin{align} 
\theta^*_{\sigma,0}(t) :=-r(\sigma)\, ,
\;\;\;\;\;\;
\btheta^{*,+}_{\sigma}(t)
:=\frac{a(t)}{\tau(t)^2}\{\bx(\sigma)-\bx(0)\} \, .
\end{align}
(The model \eqref{eq:conditional-distribution-Gen}
matches the true conditional distribution for $\hbtheta(t) = \btheta^*(t)$.)

Our first assumption is that the loss function identifies the true parameters
at population level on the compact time neighborhood under consideration.
\begin{assumption}[Fisher consistency on $J$]\label{ass:MinimizerPopulation-Gen}
For every $t\in J$, the population risk $R_t$
has a unique minimizer over $K(t)$, and
\begin{align}
\btheta^*(t)
= \btheta^{b=0}(t)\in \interior\big(K(t)\big)\, .
\end{align}
\end{assumption}
We emphasize that we allow the sets $K(t)$ to expand with $t$. 

Our second assumption is that the generative process is given by the probability
flow ODE \eqref{eq:ProbFlow-Gen}, with learned parameters $\hbtheta_b$. 
\begin{assumption}[Generative process]\label{ass:GenerativeProcess-Gen}
  Assume $a$, $\tau_0$, and $s$ are smooth and $a(0)=0$, $a(T)=1$, $\tau_0(T)^2s(T)=0$.
  We consider the generation process given by the ODE:
  \begin{align}
  \de \bZ_t &= \hbh(\bZ_t,t)\,\de t \, ,\;\;\; \bZ_0 \sim\normal(0,\tau(0)^2\id_d)\, .
  \end{align}
  where
  \begin{align}
    \hbh(\bz,t) &:= c_1(t)\hbm(\bz,t;\hbtheta_b(t))+ c_2(t)\bz\, ,\\
    c_1(t)&:= \dot{a}(t) - \frac{a(t)\dot{\tau}_0(t)}{\tau_0(t)}
    -\frac{a(t)\dot{s}(t)}{2s(t)}\, ,\\
    c_2(t)&:=\frac{\dot{\tau}_0(t)}{\tau_0(t)}
    +\frac{\dot{s}(t)}{2s(t)}\, .
  \end{align}
  \end{assumption}

The next assumption is was already mentioned in the statement of Theorem \ref{thm:Overall-Limit-Gen-Informal} and discussed there. It ensures in particular
that $\hbtheta^b(t)$ asymptotically un biased.
\begin{assumption}[Undersmoothing]\label{ass:Bandwidth-Gen}
We assume $b=b_n$ and impose throughout the
undersmoothing conditions
\begin{align}
  n^{1/4}b_n\to 0, \qquad \qquad \frac{\sqrt n\,b_n}{\log(1/b_n)}\longrightarrow\infty.
\end{align}
\end{assumption}

We next state some standard conditions for the kernel.
\begin{assumption}[Kernel]\label{ass:Kernel-Gen}
The kernel $w_1$ is nonnegative, symmetric, supported on $[-1,1]$,
H\"older continuous on $\reals$ with exponent $\alpha\in(0,1]$, and
$w_1(0)>0$. Define
\begin{align}
\mu_0:=\int_{-1}^1w_1(v)\,\de v>0,\qquad
\mu_2:=\int_{-1}^1v^2w_1(v)\,\de v.
\label{eq:kernel-moments-Gen}
\end{align}
\end{assumption}

\begin{assumption}[Zero-bandwidth regularity on $I\subset[0,T)$]\label{ass:ZeroBandwidth-Gen}
Fix a compact interval $I\subset[0,T)$ and a compact 
neighborhood $J\subset[0,T)$ of $I$.
\begin{enumerate}
\item There is a compact convex  set $K_0\subset E$ (depending on $I$) with
nonempty interior, such that
\begin{align}
\inf_{t\in J}\operatorname{dist}
 \bigl(\btheta^*(t),E\setminus K_0\bigr)>0,
\label{eq:localization-envelope-margin}
\end{align}
and one of the following holds:
\begin{enumerate}[label=(\alph*)]
\item The sets $K(t)$ are nonempty, compact, and convex for $t\in J$;
their graph
\begin{align}
\mathcal K_J:=\{(t,\btheta):t\in J,\ \btheta\in K(t)\}
\label{eq:time-dependent-domain-graph}
\end{align}
is a compact subset of $J\times K_0$; and the target has a uniform interior
margin,
\begin{align}
\inf_{t\in J}\operatorname{dist}
 \bigl(\btheta^*(t),E\setminus K(t)\bigr)>0.
\label{eq:time-dependent-domain-margin}
\end{align}
\item $K(t)=E$ for every $t\in J$, and
$\ell_{\sigma}(\by;\,\cdot\,):E\to\reals$ is finite and convex for
every $(\sigma,\by)$.
\end{enumerate}
\item The losses and their first three parameter derivatives are jointly
continuous in $(\by,\btheta)$ on an open neighborhood of
$\reals^d\times K_0$. For finite $C,\kappa$,
\begin{align}
\max_{0\le j\le 3}\sup_{\btheta\in K_0}
 \|\nabla_{\btheta}^j\ell_\sigma(\by;\btheta)\|
 \le C(1+\|\by\|)^\kappa,
 \qquad 0\le\sigma\le k-1.
\label{eq:loss-growth-Gen}
\end{align}
Here $\nabla^0\ell=\ell$ and the norms for $j\ge2$ are tensor operator norms.
\item The functions $a,\tau,\lambda$ are continuous on $J$,
$\inf_J\lambda>0$, and
\begin{align}
\inf_{t\in J}\lambda_{\min}
 \big(\nabla_{\btheta}^2R_t(\btheta^{b=0}(t))\big)>0.
\label{eq:Hessian-positive-Gen}
\end{align}
\end{enumerate}
\end{assumption}

\begin{assumption}[Positive noise]\label{ass:PositiveNoise-Gen}
The schedule satisfies $\tau(t)>0$ for $0\le t<T$.
\end{assumption}
Since $\tau(t)>0$, for any $t< T$ the true density is given by
\begin{align}
p_t(\bz)=\sum_{\sigma=0}^{k-1}q(\sigma)
 \frac{\exp\{-\|\bz-a(t)\bx(\sigma)\|^2/(2\tau(t)^2)\}}
 {(2\pi\tau(t)^2)^{d/2}}.
\label{eq:mixture-density-Gen}
\end{align}

For $0\le s<t<T$, let
$\boldsymbol\Phi^*_{s,t}:\reals^d\to\reals^d$ denote the flow associated to the ODE
\eqref{eq:ProbFlow-Gen}.
Write $\boldsymbol\Phi^*_{t,s}:=(\boldsymbol\Phi^*_{s,t})^{-1}$.
For $0<t<T$, define  
the terminal
basins
\begin{align}
\cA_\sigma(t)
&:=\left\{\bz\in\reals^d:
 \lim_{u\uparrow T}\boldsymbol\Phi^*_{t,u}(\bz)=\bx(\sigma)\right\},
 \qquad 0\leq\sigma\leq k-1,
\label{eq:general-terminal-basins}
\end{align}
The next lemma gives some simple regularity properties of the basins
and is essentially extracted from \cite{wan2025elucidating}. 
For self-containedness, we give a proof in Appendix~\ref{proof:terminal-basins-Gen}.
\begin{lemma}[Terminal convergence and measurable basins \cite{wan2025elucidating}] 
\label{lem:terminal-basins-Gen}
Under Assumptions~\ref{ass:GenerativeProcess-Gen} and
\ref{ass:PositiveNoise-Gen}, the 
$0\le s<t<T$,
$\boldsymbol\Phi^*_{s,t}:\reals^d\to\reals^d$
is a smooth 
diffeomorphism.
For each $0\le s<T$, $\lim_{t\to T}\boldsymbol\Phi^*_{s,t}(\bz)$ exists and belongs to
$\{\bx(0),\ldots,\bx(k-1)\}$ for Lebesgue-almost every starting point $\bz$
at time $s$, hence the sets $\cA_\sigma(s)$ are Borel measurable and partition $\reals^d$ up to a null set.
For $0\le s<t<T$,
\begin{align}
\boldsymbol\Phi^*_{s,t}(\cA_\sigma(s))&=\cA_\sigma(t),
&
\int_{\cA_\sigma(t)}p_t(\bz)\,\de\bz&=q(\sigma).
\label{eq:general-measurable-basin-transport}
\end{align}
\end{lemma}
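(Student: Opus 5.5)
We prove Lemma~\ref{lem:terminal-basins-Gen} in three stages: regularity of the flow on $[0,T)$, terminal convergence, and transport of mass.

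\emph{Stage 1: the flow is a diffeomorphism.} By Assumption~\ref{ass:PositiveNoise-Gen}, $\tau(t)>0$ for $t<T$. Hence $p_t$ in \eqref{eq:mixture-density-Gen} is a strictly positive, smooth Gaussian mixture. Its score $\nabla\log p_t(\bz)$ is smooth in $(\bz,t)$ and equals $-(\bz-a(t)\bm(\bz,t))/\tau(t)^2$. Since $\bm$ is a convex combination of the finitely many points $\bx(\sigma)$, it is bounded. The Jacobian $\nabla\bm=\frac{a}{\tau^2}\mathrm{Cov}(\bX\mid\bX_t=\bz)$ is also bounded uniformly on compact subintervals of $[0,T)$. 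So the drift $\bh(\bz,t)=c_1(t)\bm(\bz,t)+c_2(t)\bz$ is smooth and globally Lipschitz in $\bz$, uniformly for $t\in[s,t']\subset[0,T)$. Standard ODE theory (global existence, uniqueness, and smooth dependence on initial data) then shows that $\bPhi^*_{s,t}$ is a smooth diffeomorphism, with inverse given by the backward flow.

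\emph{Stage 3: transport and mass, assuming Stage 2.} The flow property $\bPhi^*_{t,u}\circ\bPhi^*_{s,t}=\bPhi^*_{s,u}$ gives $\bPhi^*_{s,t}(\cA_\sigma(s))=\cA_\sigma(t)$ directly. The probability flow ODE preserves the one-time marginals, so $(\bPhi^*_{s,t})_\#p_s=p_t$. It follows that $\int_{\cA_\sigma(t)}p_t$ does not depend on $t$. As $t\uparrow T$, the law of $\bZ_t$ converges to $\sum_\sigma q(\sigma)\delta_{\bx(\sigma)}$, because $\bZ_t\overset{d}{=}a(t)\bX+\tau(t)\bG$. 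Combining this with almost sure convergence of $\bZ_t$ to some $\bx(\sigma)$ and dominated convergence, $\lim_{t\uparrow T}\prob(\bZ_t\in\cA_\sigma(t))$ equals $q(\sigma)$. Measurability holds because $\cA_\sigma(s)$ is the set where $\lim_{u\uparrow T}\bPhi^*_{s,u}(\bz)$, a pointwise limit of continuous maps taken along a sequence $u_j\uparrow T$ together with the Cauchy criterion, exists and equals $\bx(\sigma)$. This is a Borel condition.

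\emph{Stage 2: terminal convergence (the main obstacle).} Near $T$ we have $\tau\to0$ and $c_1\sim a\dot{s}/(2s)\to\infty$, so the drift degenerates. The plan is to change variables to $\bu=\bz/a(t)$ and a noise-to-signal time $\rho=\tau^2/a^2\downarrow0$. In these coordinates the flow becomes a gradient-type ODE, $\de\bu/\de\rho=-\tfrac12\nabla_{\bu}\log\tilde p_\rho(\bu)\cdot\rho$, up to a rescaling of the form $\de\bu/\de\rho=(\bu-\bm)/(2\rho)$, where $\bm$ is the posterior mean under isotropic Gaussian noise of variance $\rho$ around the points $\bx(\sigma)$.

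For $\bu$ at positive distance from the Voronoi boundaries, the posterior concentrates exponentially fast in $1/\rho$ on the nearest $\bx(\sigma)$. Then $\bu-\bx(\sigma)$ obeys $\de(\bu-\bx(\sigma))/\de\log\rho\approx\tfrac12(\bu-\bx(\sigma))$, so $\bu\to\bx(\sigma)$ like $\sqrt\rho$, and $\bz=a\bu$ also converges.

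The remaining task is to show that the set of starting points whose trajectories fail to enter the interior of a Voronoi cell is Lebesgue-null. Here we follow \cite{wan2025elucidating}. Trajectories that stay within distance $O(\sqrt\rho)$ of the Voronoi boundaries form a set whose $p_t$-mass tends to $0$. Since the flow is a diffeomorphism carrying $p_s$ to $p_t$, and $p_s>0$ everywhere, the exceptional set has Lebesgue measure zero.

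If the exponential posterior-concentration estimate near cell boundaries proves too delicate, a fallback is available. The pushforward identity plus weak convergence to the atoms already shows that almost every trajectory has all of its limit points in $\{\bx(\sigma)\}$. Continuity of trajectories, together with the fact that these atoms are isolated, then forces convergence to a single atom.
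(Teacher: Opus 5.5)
\paragraph{Comparison with the paper's route.}
Your Stages 1 and 3 agree with the paper's proof. The common ingredients are: a globally Lipschitz velocity on compact subintervals of $[0,T)$, transport of $p_s$ to $p_t$, flow composition for $\bPhi^*_{s,t}(\cA_\sigma(s))=\cA_\sigma(t)$, and weak convergence of $p_t$ to $\prob_q$ for the mass identity.

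For the terminal convergence, the paper uses a much shorter argument with no basin geometry. Since $c_2=\dot\tau/\tau$ and $c_1=\dot a-a\dot\tau/\tau$, the true velocity is a posterior mean, $\bh_*(\bz,t)=\E[\dot a(t)\bX+\dot\tau(t)\bG\mid a(t)\bX+\tau(t)\bG=\bz]$. For a trajectory started from $p_s$, conditional Jensen then gives $\E\int_s^T\|\bh_*(\bZ_t,t)\|\,\de t\le\E\|\bX\|\int_s^T|\dot a|\,\de t+\E\|\bG\|\int_s^T|\dot\tau|\,\de t<\infty$. So almost every trajectory has finite length and hence a limit. That limit has the law of the weak limit of $p_t$, so it lies in $\{\bx(0),\dots,\bx(k-1)\}$ almost surely. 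Positivity of $p_s$ turns this into a Lebesgue-null exceptional set. The argument needs only bounded variation of $a,\tau$ up to $T$.

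Your route is viable and yields more: the $\sqrt\rho$ rate and the asymptotically Voronoi shape of the basins. It uses normalized coordinates $\bu=\bz/a$ with $\de\bu/\de(-\log\rho)=\tfrac12(\bm-\bu)$, posterior concentration away from the Voronoi boundaries, and a mass bound on the boundary layer. It is, however, considerably more delicate, and you defer its key step to the literature.

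\paragraph{The missing pathwise step.}
As written, the main route omits its one pathwise ingredient. The boundary-layer bound finishes the proof only if every trajectory that lies outside the layer at some late time converges. Equivalently, the region where all signed distances $d_{\sigma'}(\bu)$ to the bisectors between $\bx(\sigma)$ and $\bx(\sigma')$ are at least $\delta$ must be forward invariant.

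This is true and easy to show. Set $D=\|\bx(\sigma)-\bx(\sigma')\|$, $R=\max_\alpha|d_{\sigma'}(\bx(\alpha))|$, and let $\pi_\sigma$ be the posterior weight of $\sigma$. Because $d_{\sigma'}$ is affine,
$\de d_{\sigma'}/\de(-\log\rho)=\tfrac12\{d_{\sigma'}(\bm)-d_{\sigma'}(\bu)\}\ge\tfrac12\{\pi_\sigma D/2-(1-\pi_\sigma)R-d_{\sigma'}\}$.
In that region $1-\pi_\sigma\le Ce^{-c\delta/\rho}$, so once $\rho$ is small, $d_{\sigma'}$ cannot decrease through a value $\delta<D/4$. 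Only with this does ``the non-convergent starting set has $p_s$-mass at most $p_t(\text{layer})\to0$'' close the argument.

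\paragraph{The fallback does not work.}
From $(\bPhi^*_{s,t})_\#p_s=p_t\Rightarrow\prob_q$ you get only $p_s\{\bz:\bPhi^*_{s,t}(\bz)\in F\}\to0$ for each \emph{fixed} $t$ and each closed $F$ avoiding the atoms. Having a limit point in $F$ is the event $\bigcap_{t_0}\bigcup_{t\ge t_0}\{\bPhi^*_{s,t}(\bz)\in F\}$. One-time marginals cannot control its measure, since trajectories could make brief excursions into $F$ at trajectory-dependent times.

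The subsequent ``isolated atoms plus continuity'' step also needs the trajectory to be bounded; otherwise infinity may be a cluster point. Both steps lack the same pathwise input, which is exactly what the paper's finite-length bound (or your invariance argument) supplies.
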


Score matching and cross-entropy are two popular losses
\begin{align}
\ell^{\rm SM}_\sigma(\by;\btheta)
 &=\|\bx(\sigma)-\hbm(\by;\btheta)\|^2,
 \label{eq:score-matching-loss-Gen}\\
\ell^{\rm CE}_\sigma(\by;\btheta)
 &=-\log\hq_\sigma(\by;\btheta).
 \label{eq:cross-entropy-loss-Gen}
\end{align}
\begin{proposition}[Verification for score matching and cross-entropy]
\label{prop:loss-assumptions-verification-Gen}
Suppose $a,\tau,\lambda$ are continuous on $J\subset[0,T)$ and
$\inf_J\tau>0$, $\inf_J\lambda>0$. Choose a compact convex parameter
box $K_0$ containing $\{\btheta^*(t):t\in J\}$ in its interior.
\begin{itemize}
\item[$(a)$] For cross-entropy, Assumptions~\ref{ass:MinimizerPopulation-Gen} and
\ref{ass:ZeroBandwidth-Gen} hold on $J$  with $K(t)=E$. 
\item[$(b)$] For score matching, Assumptions~\ref{ass:MinimizerPopulation-Gen} and
\ref{ass:ZeroBandwidth-Gen} hold for any compact family 
$K(t)$ whose graph is compact
in $J\times K_0$ and which satisfies
\eqref{eq:time-dependent-domain-margin}, provided that
the embeddings are affinely independent:
\begin{align}
\operatorname{rank}\big[\bx(1)-\bx(0),\ldots,
                         \bx(k-1)-\bx(0)\big]=k-1.
\label{eq:affine-independence-Gen}
\end{align}
\end{itemize}
If $a,\tau$ are $C^3$ on a neighborhood of $J$, then the risks are
jointly $C^3$ in $(t,\btheta)$ there.
\end{proposition}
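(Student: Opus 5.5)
The plan is to verify the three ingredients for each loss separately: Fisher consistency (Assumption~\ref{ass:MinimizerPopulation-Gen}), the growth and derivative bounds (item 2 of Assumption~\ref{ass:ZeroBandwidth-Gen}), and uniform positivity of the Hessian at $\btheta^*(t)$ (item 3). The common starting point is the observation, already noted after the definition of $\btheta^*(t)$, that the true posterior of $\Sigma$ given $\bY=\by$ at time $t$ equals $\hq_\sigma(\by;\btheta^*(t))$. This follows from Bayes' rule applied to the Gaussian mixture \eqref{eq:mixture-density-Gen}: the log-odds of class $\sigma$ against class $0$ are $-r(\sigma)+\frac{a(t)}{\tau(t)^2}\<\bx(\sigma)-\bx(0),\by\>$, once the quadratic terms cancel using $\|\bx(\sigma)\|=1$. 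Smoothness is then routine. Both $\hq$ and $\hbm$ are smooth in $\btheta$, and their $\btheta$-derivatives up to order three are polynomials in $\by$ times bounded softmax factors, since each derivative of $\hq$ brings down a factor $(1,\by)$ while $\hq\in[0,1]$. For the cross-entropy loss, $-\log\hq_\sigma = \log z - \theta_{\sigma,0}-\<\btheta^+_\sigma,\by\>$ grows at most linearly in $\|\by\|$ uniformly on compact $K_0$. This gives \eqref{eq:loss-growth-Gen} with $\kappa=3$. Joint continuity is clear. The last claim, joint $C^3$ regularity in $(t,\btheta)$, follows by differentiating under the Gaussian expectation via the substitution $\by=a(t)\bx(\Sigma)+\tau(t)\bG$, with dominated convergence justified by the polynomial growth and $\inf_J\tau>0$.

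\emph{Cross-entropy.} Convexity of $\btheta\mapsto\log z(\by,\btheta)-\theta_{\sigma,0}-\<\btheta^+_\sigma,\by\>$ is the standard log-sum-exp convexity, which gives case (b) with $K(t)=E$. For consistency, $R_t(\btheta)-R_t(\btheta^*(t))=\E_{\bY}\,\mathrm{KL}\big(\hq(\bY;\btheta^*)\,\|\,\hq(\bY;\btheta)\big)\ge0$, so $\btheta^*(t)$ is a minimizer. The Hessian at $\btheta^*$ is the Fisher information $\E[(\mathrm{diag}(\hq)-\hq\hq^\top)\otimes(1,\bY)(1,\bY)^\top]$ restricted to classes $1,\dots,k-1$. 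It is positive definite because $\mathrm{diag}(\hq)-\hq\hq^\top$ (without the reference class) is positive definite whenever all $\hq_\sigma>0$, and because $(1,\bY)$ has a nondegenerate second moment, $\bY$ having a density with full support. Strict convexity of $R_t$ near $\btheta^*$ combined with global convexity gives uniqueness of the minimizer. Continuity in $t$ of this Hessian together with compactness of $J$ gives the uniform bound \eqref{eq:Hessian-positive-Gen}.

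\emph{Score matching.} Here $R_t(\btheta)=\E\|\bm(\bY,t)-\hbm(\bY;\btheta)\|^2+\mathrm{const}$ by the orthogonality of the posterior mean, so $\btheta^*(t)$ is a minimizer. Uniqueness is the main obstacle, and I expect it to take most of the work. Minimizers are exactly the $\btheta$ with $\hbm(\by;\btheta)=\hbm(\by;\btheta^*)$ for a.e.\ $\by$. Under \eqref{eq:affine-independence-Gen} the map $\hq\mapsto\sum_\sigma\hq_\sigma\bx(\sigma)$ is injective on the simplex, so $\hq(\by;\btheta)=\hq(\by;\btheta^*)$ a.e. Taking log-odds against class $0$ yields equality of the affine functions $\theta_{\sigma,0}+\<\btheta^+_\sigma,\by\>$ on a set of positive measure, hence $\btheta=\btheta^*$. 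Since $K(t)$ is not assumed convex-loss-compatible here, this global identification on the compact sets $K(t)$ is exactly what the assumption requires. The interior margin is assumed. For the Hessian, at $\btheta^*$ the residual term vanishes and $\nabla^2R_t=2\E[D\hbm^\top D\hbm]$. If $\<\bv,\nabla^2R_t\,\bv\>=0$, then $D\hbm[\bv]=0$ a.e. By the same injectivity, the induced perturbation of $\hq$ vanishes, and that perturbation is $(\mathrm{diag}(\hq)-\hq\hq^\top)$ applied to the affine perturbations of the logits. This forces those affine perturbations to vanish a.e., so $\bv=0$. Uniformity over $J$ then follows from continuity and compactness as in the cross-entropy case.
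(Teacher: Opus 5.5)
Your proposal is correct and follows essentially the same route as the paper: Bayes' rule with equal-norm embeddings identifies $\hq(\cdot;\btheta^*(t))$ as the true posterior, and the excess risks become the expected KL for cross-entropy and $\E\|\hbm-\bm\|^2$ for score matching. From there you use identification via log-odds (with affine independence for score matching), the same two Hessian forms ($2\E\|D_{\btheta}\hbm[\bA]\|^2$ for score matching, using injectivity of the mean map on zero-sum perturbations), and polynomial envelopes with Gaussian differentiation under the expectation. The only minor variation is that you get uniqueness for cross-entropy from global convexity plus the positive-definite Hessian, whereas the paper reads it off directly from the vanishing of the expected KL; both arguments are valid.
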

The proof is in Appendix~\ref{proof:loss-assumptions-verification-Gen}.

\subsection{Asymptotics of parameter estimates}\label{sec:N_Asymp-Gen}

We next study the asymptotics of the empirical  risk minimization problem under
Assumption~\ref{ass:Bandwidth-Gen}.
Throughout  $I\subset(0,T)$ is compact. 

Define the loss gradient 
\begin{align}
\bpsi_\sigma(\by;\btheta)
&:=\nabla_{\btheta}\ell_\sigma(\by;\btheta)
 \in E,
\end{align}
Note that by Assumptions~\ref{ass:MinimizerPopulation-Gen} and
\ref{ass:ZeroBandwidth-Gen},
\begin{align}
  \E\big\{
 \bpsi_\Sigma\big(a(t)\bx(\Sigma)+\tau(t)\bG;\btheta^*(t)\big)\big\}=0\, ,
  \end{align}
  where $\Sigma\sim q$, $\bG\sim\normal(0,\id_d)$, and $\Sigma$ and $\bG$
  are independent.
We also define the Hessian and covariance
\begin{align}
\bH^*(t)
&:=\nabla_{\btheta}^2R_t(\btheta^*(t)),\label{eq:Hstar}\\
\bJ^*(t)
&:=\E\left[
 \bpsi_\Sigma\big(a(t)\bx(\Sigma)+\tau(t)\bG;\btheta^*(t)\big)
 \otimes
 \bpsi_\Sigma\big(a(t)\bx(\Sigma)+\tau(t)\bG;\btheta^*(t)\big)
 \right].
\label{eq:general-score-Hessian-moment}
\end{align}
 Here $\bH^*(t)$ is the self-adjoint linear operator
associated to the Hessian quadratic form, and for matrices
$\bA,\bB,\bC\in\reals^{(d+1)\times(k-1)}$,
\begin{align}
\nabla_{\btheta}^2R_t(\btheta^*(t))[\bA,\bB]
&=\left\langle\bA,\bH^*(t)[\bB]\right\rangle,\\
(\bA\otimes\bB)[\bC]
&:=\langle\bB,\bC\rangle\bA.
\label{eq:general-rank-one-operator}
\end{align}
Similarly $\bJ^*(t)$ is an operator on
$\reals^{(d+1)\times(k-1)}$.  

For the finite bandwidth empirical risk minimization problem, define
\begin{align}
\bH_b(t)&:=\nabla_{\btheta}^2R_{b,t}(\btheta^b(t)),\\
\bZ_{b,t}&:=w_b(U-t)\bpsi_\Sigma
 (a(U)\bx(\Sigma)+\tau(U)\bG;\btheta^b(t)),\\
\bOmega_b(t,s)&:=\E[\bZ_{b,t}\otimes\bZ_{b,s}],\qquad
\E\bZ_{b,t}=0,\label{eq:score-covariance-kernel-Gen}\\
\widehat{\bS}_n(t,\btheta)&:=\frac1n\sum_{i=1}^n
 w_b(t_i-t)\bpsi_{\sigma_i}(\bY_i;\btheta).
\label{eq:empirical-score-Gen}
\end{align}

\begin{proposition}[Localized population bias]
\label{prop:localized-population-second-order-bias-Gen}
Under Assumptions~\ref{ass:MinimizerPopulation-Gen},
\ref{ass:Kernel-Gen}, and \ref{ass:ZeroBandwidth-Gen}, let
$I\subset(0,T)$ be compact. Suppose $\lambda$ is $C^2$ on an open
neighborhood of $I$, and $(u,\btheta)\mapsto R_u(\btheta)$ is $C^3$
on an open neighborhood of $\{(t,\btheta^*(t)):t\in I\}$ in
$(0,T)\times E$. Then, uniformly on $I$,
\begin{align}
\btheta^b(t)-\btheta^*(t)&=b^2\boldsymbol\beta(t)+o(b^2),
 \label{eq:general-population-bias}\\
\boldsymbol\beta(t)&=-\frac{\mu_2}{2\mu_0\lambda(t)}
 \bH^*(t)^{-1}\left[
 \left.\partial_u^2\{\lambda(u)\bS(u,\btheta)\}
 \right|_{u=t,\,\btheta=\btheta^*(t)}\right],
 \qquad \bS(u,\btheta):=\nabla_{\btheta}R_u(\btheta),
 \label{eq:general-bias-coefficient}
\end{align}
where the derivatives $\partial_u$ in \eqref{eq:general-bias-coefficient} hold the parameter
$\btheta^*$ fixed.
\end{proposition}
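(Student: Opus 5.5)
The bias expansion will come from a Taylor expansion of the localized first-order condition, combined with the implicit function theorem. Write the population score of the localized risk as
\begin{align*}
\bS_b(t,\btheta):=\nabla_{\btheta}R_{b,t}(\btheta)=\int w_b(u-t)\lambda(u)\bS(u,\btheta)\,\de u
= b\int_{-1}^1 w_1(v)\lambda(t+bv)\bS(t+bv,\btheta)\,\de v .
\end{align*}
Differentiation under the integral is justified by the growth bound \eqref{eq:loss-growth-Gen} and Gaussian moments. The order of the steps is:
\begin{enumerate}
\item Consistency of $\btheta^b(t)$.
\item Expansion of the localized score at fixed $\btheta^*(t)$.
\item Linearization in $\btheta$.
\item Solving for $\btheta^b(t)-\btheta^*(t)$.
\end{enumerate}

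\emph{Step 1 (consistency).} Since $w_1\ge 0$ is supported on $[-1,1]$, $b^{-1}R_{b,t}(\btheta)\to\mu_0\lambda(t)R_t(\btheta)$ uniformly over $t\in I$ and $\btheta\in K_0$. This uses the joint continuity of $(u,\btheta)\mapsto R_u(\btheta)$ on the compact set $J\times K_0$. Uniqueness of the minimizer from Assumption~\ref{ass:MinimizerPopulation-Gen}, together with compactness of the graph $\mathcal K_J$ (case (a)), gives $\sup_{t\in I}\|\btheta^b(t)-\btheta^*(t)\|\to0$ by a standard argmin-continuity argument. In case (b), convexity of the losses lets us reduce to a compact ball around $\btheta^*(t)$. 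The interior margin \eqref{eq:time-dependent-domain-margin} then ensures that $\btheta^b(t)$ is an interior point for small $b$, so $\bS_b(t,\btheta^b(t))=0$.

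\emph{Step 2 (expansion at $\btheta^*$).} Set $F(u):=\lambda(u)\bS(u,\btheta^*(t))$ with $t$ held fixed, a $C^2$ function of $u$ near $t$. Fisher consistency gives $F(t)=0$. A second-order Taylor expansion, together with symmetry of $w_1$ (which kills the first-order term), gives
\[
\bS_b(t,\btheta^*(t))=b\Big[\tfrac{b^2\mu_2}{2}F''(t)+o(b^2)\Big],
\]
uniformly on $I$ by uniform continuity of $F''$ on a compact neighborhood.

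\emph{Step 3 (linearization in $\btheta$).} The operator $b^{-1}\nabla_{\btheta}\bS_b(t,\btheta)$ converges to $\mu_0\lambda(t)\bH^*(t)$ uniformly for $\btheta$ near $\btheta^*(t)$. Indeed, it equals $\int w_1(v)\lambda(t+bv)\nabla^2_{\btheta}R_{t+bv}(\btheta)\,\de v$, and $\nabla^2_{\btheta}R$ is continuous because $R$ is $C^3$. By \eqref{eq:Hessian-positive-Gen} this limit is uniformly invertible. The mean value theorem then gives
\[
0=b^{-1}\bS_b(t,\btheta^b)=b^{-1}\bS_b(t,\btheta^*)+\big(\mu_0\lambda(t)\bH^*(t)+o(1)\big)\big[\btheta^b-\btheta^*\big].
\]

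\emph{Step 4 (solving).} Inverting the operator in Step 3 gives $\btheta^b-\btheta^*=O(b^2)$, and substituting back yields the expansion \eqref{eq:general-population-bias} with
\[
\boldsymbol\beta(t)=-\frac{\mu_2}{2\mu_0\lambda(t)}\bH^*(t)^{-1}F''(t),
\]
which is exactly \eqref{eq:general-bias-coefficient}.

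The main obstacle is Step 1, namely uniform-in-$t$ consistency and interiority of $\btheta^b(t)$ with time-varying domains $K(t)$, or with unbounded $E$ in the convex case. There I would first use the uniform convergence on $\mathcal K_J$ plus a uniform separation gap $\inf_{t}\inf_{\|\btheta-\btheta^*(t)\|\ge\eta}\{R_t(\btheta)-R_t(\btheta^*(t))\}>0$. This gap follows from continuity, compactness and uniqueness. One subtlety is that the localized risk mixes $R_u$ over $u\in[t-b,t+b]$ while the domain is $K(t)$. Handling it is routine, since $\btheta^*(u)$ stays in the interior of $K(t)$ for nearby $u$ by the uniform margin.
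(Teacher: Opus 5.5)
Your proof is correct and follows the paper's argument. The shared ingredients are consistency and interiority of $\btheta^b(t)$ (the paper takes these from Lemma~\ref{lem:localized-population-geometry-Gen}, and your Step~1 re-derives them the same way), the rescaled score equation $\int_{-1}^1 w_1(v)\lambda(t+bv)\bS(t+bv,\btheta^b(t))\,\de v=0$, kernel symmetry removing the $O(b)$ term, and uniform invertibility of $\mu_0\lambda(t)\bH^*(t)$. The only difference is that the paper obtains the expansion from the implicit function theorem (a $C^2$ solution branch in $b$, even by symmetry, differentiated twice at $b=0$), whereas your direct mean-value linearization at $\btheta^*(t)$ is more elementary and gives uniformity over $I$ without a uniform version of that theorem.
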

The proof is in Appendix~\ref{proof:localized-population-second-order-bias-Gen}.

The next proposition characterize the effect of an error $\bDelta(t)$ on
the parameters estimates affects the sampling probabilities, using probability flow, 
to the linear order. 
\begin{proposition}[Linear response]\label{propo:Lin-Effect}
Under Assumptions~\ref{ass:GenerativeProcess-Gen} and
\ref{ass:PositiveNoise-Gen}, fix a compact interval $I\subset(0,T)$ and let $\bh_*$
be the probability-flow velocity obtained by setting
$\hbtheta_b=\btheta^*$.

Let
\begin{align}
\hbtheta_b(t)=\btheta^*(t)+\bDelta(t),
\qquad
\bDelta\in L^\infty(I;E),\qquad \bDelta(t)=0\quad(t\notin I),
\label{eq:general-parameter-perturbation}
\end{align}
and denote by $\rP_{\bDelta}$ the law of $\bZ_T$ under the corresponding
generation process.  For a column
$\bDelta_\rho(t)\in\reals^{d+1}$ of $\bDelta(t)$, set
\begin{align}
\overline{\bz}&:=(1,\bz^\top)^\top,\qquad
\pi_\rho^*(\bz,t):=\hq_\rho(\bz;\btheta^*(t)),\label{eq:general-posterior-notation}\\
\mathcal M_t(\bz)[\bDelta(t)]
&:=\sum_{\rho=1}^{k-1}\pi_\rho^*(\bz,t)
 \{\bx(\rho)-\bm(\bz,t)\}
 \left\langle\bDelta_\rho(t),\overline{\bz}\right\rangle.
\label{eq:general-denoiser-first-variation}
\end{align}
Then, as $\|\bDelta\|_\infty\to0$ over bounded measurable perturbations supported on $I$,
\begin{align}
\rP_{\bDelta}
&=\sum_{\sigma=0}^{k-1}
 \{q(\sigma)+\cL_\sigma(\bDelta)\}\delta_{\bx(\sigma)}
 +O(\|\bDelta\|_\infty^2),
\label{eq:general-terminal-law-linear-response}
\end{align}
where
\begin{align}
\cL_\sigma(\bDelta)
&=\int_I\left\langle\bL_\sigma(t),
 \bDelta(t)\right\rangle\,\de t,
\label{eq:general-linear-response-pairing}\\
  \langle\bL_\sigma(t),\bD\rangle &
   =-\int_{\cA_\sigma(t)}\nabla_{\bz}\mathbin\cdot
   \{c_1(t)p_t(\bz)\mathcal M_t(\bz)[\bD]\}\,\de\bz.
  \end{align}
or equivalently
\begin{align}
[\bL_\sigma(t)]_{j,\rho}
&=-c_1(t)\int_{\cA_\sigma(t)}
 \nabla_{\bz}\mathbin\cdot
 \left\{p_t(\bz)\pi_\rho^*(\bz,t)
 \{\bx(\rho)-\bm(\bz,t)\}\overline z_j\right\}\,\de\bz,
\label{eq:general-linear-response-kernel}
\end{align}
for $0\leq j\leq d$ and $1\leq\rho\leq k-1$, where
$\overline z_0=1$ and $\overline z_j=z_j$ for $j\geq1$.
The integrals are absolutely convergent, and
$\bL_\sigma\in C(I;E)$ is bounded. Further
\begin{align}
  \sum_{\sigma=0}^{k-1}\cL_\sigma(\bDelta)=0,\;\;\;\;\;\;
\sum_{\sigma=0}^{k-1}\bL_\sigma(t)=0,
\qquad t\in I.
\label{eq:general-response-kernel-conservation}
\end{align}
\end{proposition}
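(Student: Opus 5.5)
The plan is to prove the linear response by comparing the perturbed flow with the exact flow $\boldsymbol\Phi^*$ through the continuity equation, and to read off the terminal masses from the basins of Lemma~\ref{lem:terminal-basins-Gen}.

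\emph{Reduction.} Since $\bDelta$ vanishes outside $I=[t_0,t_1]\subset(0,T)$, the perturbed and unperturbed generation processes agree on $[0,t_0]$. Hence $\bZ_{t_0}\sim p_{t_0}$. For $t\ge t_1$ the velocity is again $\bh_*$. Therefore, if $\tilde p_{t_1}$ denotes the perturbed density at time $t_1$, Lemma~\ref{lem:terminal-basins-Gen} gives
\begin{align*}
\rP_{\bDelta}(\{\bx(\sigma)\})=\int_{\cA_\sigma(t_1)}\tilde p_{t_1}(\bz)\,\de\bz .
\end{align*}
(This uses that $\tilde p_{t_1}$ is absolutely continuous, so the null exceptional set is irrelevant.) The whole problem thus reduces to a first-order expansion of $\tilde p_{t_1}$ integrated against the indicator of a fixed basin.

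\emph{First variation of the velocity.} I will differentiate $\hbm(\bz;\btheta)=\sum_\sigma\hq_\sigma\bx(\sigma)$ in $\btheta_\rho$. The softmax derivative gives $\partial_{\btheta_\rho}\hq_\sigma=\hq_\sigma(\delta_{\sigma\rho}-\hq_\rho)\overline{\bz}$. Consequently $\hbh-\bh_*=c_1(t)\mathcal M_t(\bz)[\bDelta(t)]+\br(\bz,t)$, with a remainder satisfying $|\br|\le C\|\bDelta\|_\infty^2(1+\|\bz\|)^2$ uniformly on $I$. Here $c_1$ is bounded on $I$ because $\tau>0$ there.

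\emph{Duhamel expansion.} The perturbed density solves $\partial_t\tilde p=-\nabla\cdot(\hbh\,\tilde p)$. I write $\tilde p_t=p_t+\delta p_t$ and linearize: to first order, $\delta p$ solves $\partial_t\delta p=-\nabla\cdot(\bh_*\delta p)-\nabla\cdot(c_1 p_t\mathcal M_t[\bDelta])$ with $\delta p_{t_0}=0$. Its solution is $\delta p_{t_1}=-\int_{t_0}^{t_1}(\boldsymbol\Phi^*_{t,t_1})_\#\big[\nabla\cdot(c_1p_t\mathcal M_t[\bDelta(t)])\big]\de t$, where the pushforward acts on signed densities. Integrating over $\cA_\sigma(t_1)$ and using $\boldsymbol\Phi^*_{t,t_1}(\cA_\sigma(t))=\cA_\sigma(t_1)$ yields $\int_{\cA_\sigma(t_1)}\delta p_{t_1}=\int_I\langle\bL_\sigma(t),\bDelta(t)\rangle\de t$, which is the formula for $\bL_\sigma$ in the statement.

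\emph{Main obstacle: the quadratic remainder.} A safer route avoids densities of the perturbed process altogether. I will compare trajectories, $\tilde\bZ_{t_1}=\boldsymbol\Phi^*_{t_0,t_1}(\bZ_{t_0})+\bxi+O(\|\bDelta\|^2)$, where $\bxi$ solves the linearized ODE along the exact flow. The identification $\mathbb P(\tilde\bZ_{t_1}\in\cA_\sigma(t_1))$ then requires control near the basin boundaries, since the indicator is not smooth. I expect this to be the hardest step. There are two obstacles to handle: $\bh_*$ is smooth with locally bounded derivatives on $\reals^d\times I$, and the perturbed map $\tilde{\boldsymbol\Phi}_{t_0,t_1}$ is a diffeomorphism that is $O(\|\bDelta\|_\infty)$-close in $C^1$ on compacts. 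With these, the change-of-variables formula gives $\tilde p_{t_1}=p_{t_1}+\delta p_{t_1}+O(\|\bDelta\|^2)$ in weighted $L^1$. Gaussian tails of $p_t$ will control the polynomial growth in $\bz$ via Grönwall. Integrating that $L^1$ bound over the measurable set $\cA_\sigma(t_1)$ then needs no boundary regularity.

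\emph{Remaining properties.} Absolute convergence of the integrals defining $\bL_\sigma$ follows from the Gaussian decay of $p_t$ together with bounded $\pi^*_\rho$ and $\bm$. Continuity of $\bL_\sigma$ in $t$ is obtained by transporting $\cA_\sigma(t)$ to a fixed time and using continuity of the integrand. For conservation, the basins partition $\reals^d$ up to a null set. Hence $\sum_\sigma\bL_\sigma(t)=-\int_{\reals^d}\nabla\cdot\{c_1p_t\mathcal M_t[\bD]\}=0$ by the divergence theorem, again using Gaussian decay, and $\sum_\sigma\cL_\sigma(\bDelta)=0$ follows.
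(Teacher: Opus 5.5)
Your proposal is correct and follows essentially the same route as the paper. Both use the density-based argument you ultimately settle on: change of variables for the perturbed density, an $L^1$ Taylor remainder, the Duhamel formula with the true transport operator, the basin-transfer identity, and $L^1$ contraction onto class masses. Your fixed-basin continuity argument and your partition plus integration-by-parts proof of conservation also match the paper.

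One precision is needed. The $O(\|\bDelta\|_\infty^2)$ remainder requires \emph{second}-order $\eta$-variation bounds on the inverse flow and its Jacobian, with polynomial growth holding globally. $O(\|\bDelta\|_\infty)$ closeness in $C^1$ on compacts is not enough. The paper obtains these bounds from three ingredients:
\begin{itemize}
\item globally bounded spatial derivatives of $c_1\hbm$;
\item the variation-of-constants bound $\|\Psi^\eta_t(\bz)-\alpha_t^{-1}\bz\|\le C$, which also gives the uniform Gaussian envelope;
\item Gronwall applied to the variational equations.
\end{itemize}
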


The proof is in Appendix~\ref{proof:Lin-Effect}.

\begin{example}[Binary variables]
For $d=1$, $k=2$, $\bx(0)=-1$, $\bx(1)=1$, and $q(1)=p$,
the denoiser is
\begin{align}
\hbm(z;\btheta)=\tanh\!\left(\frac{\theta_{1,0}+\theta_1^+z}{2}\right).
\label{eq:binary-reduction-Gen}
\end{align}
 If $q(+1) = p=1-q(-1)$, and $F_t$ is the scalar mixture distribution
function:
\begin{align}
F_t(x)= (1-p) \Phi\left(\frac{x+a(t)}{\sigma(t)}\right) + p \Phi\left(\frac{x-a(t)}{\tau(t)}\right)\, ,
\end{align}
the two basins are $\cA_-(t)=(-\infty,\gamma(t))$ and $\cA_+(t)=(\gamma(t),\infty)$ with
$\gamma(t)=F_t^{-1}(1-p)$.

Writing $\eta_*(z,t)=(\theta_{1,0}^*(t)+\theta_1^{*,+}(t)z)/2$,
integrating the spatial derivative in
\eqref{eq:general-linear-response-kernel} over this half-line gives
\begin{align}
  \bL_1(t)
 =\frac12 c_1(t)p_t(\gamma(t))\operatorname{sech}^2\Big((\theta_{1,0}^*(t)+\theta_1^{*,+}(t)\gamma(t))/2\Big)
 \begin{pmatrix}1\\\gamma(t)\end{pmatrix}.
\label{eq:binary-response-Gen}
\end{align}
\end{example}

\subsection{Undersmoothing regime}
\label{sec:Undersmooth}
Define the  fixed-time sandwich operator
\begin{align}
\bQ^*(t)&:=
 \bH^*(t)^{-1}\circ\bJ^*(t)\circ\bH^*(t)^{-1},
\label{eq:general-sandwich-covariance}
\end{align}
where $\bH^*(t)$, $\bJ^*(t)$ are the Hessian and covariance defined in 
Eqs.~\eqref{eq:Hstar} and \eqref{eq:general-score-Hessian-moment}.

\begin{proposition}[Shrinking-bandwidth empirical minimizers]
\label{prop:shrinking-bandwidth-erm-Gen}
Under Assumptions~\ref{ass:MinimizerPopulation-Gen},
\ref{ass:Kernel-Gen}, \ref{ass:ZeroBandwidth-Gen}, and
\ref{ass:Bandwidth-Gen}, let
$I\subset(0,T)$ be compact. Set $b=b_n$ and
$e_n=\sqrt{\log(1/b)/(nb)}$. With probability tending
to one, the empirical minimizers are unique and belong to
$\interior(K(t))$ simultaneously
on $I$, and, defining $\widehat{\bS}_n(t,\btheta)$ as in Eq.~\eqref{eq:empirical-score-Gen},
\begin{align}
\|\hbtheta_b-\btheta^b\|_\infty&=O_{\rP}(e_n),
 \label{eq:general-uniform-rate}\\
\hbtheta_b(t)-\btheta^b(t)
 &=-\bH_b(t)^{-1}[\widehat{\bS}_n(t,\btheta^b(t))]+\br_n(t),\\
\|\br_n\|_\infty&=O_{\rP}(e_n^2)=o_{\rP}(n^{-1/2}).
\label{eq:general-asymptotic-linearity}
\end{align}
\end{proposition}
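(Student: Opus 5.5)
The proof follows the standard route for uniform M-estimation with a localized kernel weight: consistency, then a uniform rate, then a one-step Taylor linearization. The time index $t\in I$ is handled through empirical-process bounds over the class of functions indexed by $(t,\btheta)$.

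\textbf{Step 1 (uniform consistency).} First I would show that $\sup_{t\in I}\sup_{\btheta\in K_0}|\hR_{b,t}(\btheta)-R_{b,t}(\btheta)|/b = o_{\rP}(1)$. The centered summands $w_b(t_i-t)\ell_{\sigma_i}(\bY_i;\btheta)$ have variance $O(b)$ by \eqref{eq:loss-growth-Gen} and $\inf_J\lambda>0$. The class is Lipschitz in $\btheta$ and H\"older in $t$ through $w_1$ (Assumption~\ref{ass:Kernel-Gen}). A covering of $I\times K_0$ at polynomial-in-$1/b$ resolution, combined with a Bernstein bound, gives deviations of order $b\,e_n$. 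The heavy-tailed polynomial envelope can be truncated at $\|\by\|\le C\log n$ with a negligible Gaussian tail.

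Next, $R_{b,t}/(\mu_0 b)\to \lambda(t)R_t$ uniformly, and $R_t$ has a well-separated unique minimizer $\btheta^*(t)$ uniformly in $t\in J$. This follows from Assumption~\ref{ass:MinimizerPopulation-Gen}, compactness of $\mathcal K_J$, and continuity. Hence $\sup_I\|\hbtheta_b-\btheta^b\|\to0$ and $\sup_I\|\btheta^b-\btheta^*\|\to 0$. By the interior margin \eqref{eq:time-dependent-domain-margin}, both minimizers then lie in $\interior(K(t))$ with probability tending to one.

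In case (b), where $K(t)=E$ and the loss is convex, I would use the convexity lemma: uniform convergence on a compact ball around $\btheta^*$ forces the minimizer into that ball.

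\textbf{Step 2 (rate and linearization).} With both minimizers interior, they satisfy the first-order conditions $\widehat\bS_n(t,\hbtheta_b(t))=0$ and $\nabla R_{b,t}(\btheta^b(t))=0$. The same chaining argument applied to $\widehat\bS_n$ and to $\nabla^2\hR_{b,t}$, using the third-derivative bound in \eqref{eq:loss-growth-Gen}, gives two uniform estimates:
\[
\sup_I\|\widehat\bS_n(t,\btheta^b(t))\|=O_{\rP}(b\,e_n),\qquad \sup_{I}\sup_{\btheta\in K_0}\|\nabla^2\hR_{b,t}(\btheta)-\nabla^2R_{b,t}(\btheta)\|=O_{\rP}(b\,e_n).
\]
The Hessian $\bH_b(t)\approx \mu_0 b\lambda(t)\bH^*(t)$ has smallest eigenvalue of order $b$, uniformly on $I$, by \eqref{eq:Hessian-positive-Gen}.

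Expanding $0=\widehat\bS_n(t,\hbtheta_b)$ around $\btheta^b(t)$ gives
\[
0=\widehat\bS_n(t,\btheta^b)+\nabla^2\hR_{b,t}(\btheta^b)[\hbtheta_b-\btheta^b]+O(b\|\hbtheta_b-\btheta^b\|^2),
\]
where the remainder uses the third-derivative bound; the third derivative of $\hR_{b,t}$ is uniformly $O_{\rP}(b)$.

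Dividing by $b$ and using consistency yields \eqref{eq:general-uniform-rate}. Substituting back gives the stated representation with
\[
\|\br_n\|_\infty \lesssim \frac{1}{b}\Big(b\,e_n\cdot e_n + b\,e_n^2\Big)=O_{\rP}(e_n^2).
\]
Uniqueness follows from local strong convexity of $\hR_{b,t}$ on a neighborhood of radius independent of $n$, together with consistency, which excludes other minimizers.

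\textbf{Step 3.} Finally, $e_n^2=\log(1/b)/(nb)=o(n^{-1/2})$ holds exactly because $\sqrt n\,b/\log(1/b)\to\infty$ (Assumption~\ref{ass:Bandwidth-Gen}).

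The main obstacle is the uniform-in-$t$ maximal inequality at the sharp rate $b\,e_n$, with the $\log(1/b)$ factor rather than $\log n$, for summands with unbounded polynomial envelopes. I would handle it by truncating $\|\bY\|$ at level $\sqrt{C\log n}\,\sup_J\tau + \sup_J a$. After truncation I would apply Talagrand/Bernstein with a VC-type covering number bound $(A/(b\epsilon))^{V}$ for the kernel-weighted class. This class is of VC type because $w_1$ is H\"older, so the $t$-grid needs only $b^{-O(1)}$ points. The truncation error is controlled separately by Gaussian tails.
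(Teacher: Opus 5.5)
Your proposal is correct and follows essentially the same route as the paper. Both arguments use truncated Bernstein bounds on a polynomial-size net for $b^{-1}D^j_{\btheta}(\hR_{b,t}-R_{b,t})$, $j\le 2$, together with a uniform third-derivative envelope. Both then localize via the uniform population gap and local strong convexity (with the convex unconstrained case handled the same way), and finish with a mean-value/Taylor expansion of the empirical score around $\btheta^b(t)$, giving $\|\br_n\|_\infty=O_{\rP}(e_n^2)$.

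The step you flag as the main obstacle is lighter than you expect. The bandwidth conditions force $n^{-1/2}<b<n^{-1/4}$ eventually, so $\log(1/b)\asymp\log n$. A plain union bound over a net of size $n^{O(1)}$ therefore already gives the rate $e_n$, and no Talagrand or VC-type entropy argument is needed.
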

The proof is in Appendix~\ref{proof:shrinking-bandwidth-erm-Gen}.

\begin{proposition}[Integrated parameter central limit theorem]
\label{propo:AsympGen}
Under Assumptions~\ref{ass:MinimizerPopulation-Gen},
\ref{ass:Bandwidth-Gen}, \ref{ass:Kernel-Gen}, and
\ref{ass:ZeroBandwidth-Gen}, let
$I\subset(0,T)$ be compact. Suppose $\lambda$ is $C^2$ on an open neighborhood
of $I$, and $(u,\btheta)\mapsto R_u(\btheta)$ is $C^3$ on an open
neighborhood of $\{(t,\btheta^*(t)):t\in I\}$ in $(0,T)\times E$.
For every
$\bphi\in C(I;\reals^{(d+1)\times(k-1)})$,
\begin{align}
&\sqrt n\int_I
 \left\langle\hbtheta_b(t)-\btheta^*(t),\bphi(t)\right\rangle
 \,\de t
\Rightarrow\normal\big(0,Q^*(\bphi)\big),
\label{eq:general-integrated-parameter-clt}\\
&Q^*(\bphi)
:=\int_I
 \frac1{\lambda(t)}
 \left\langle\bphi(t),\bQ^*(t)[\bphi(t)]\right\rangle\,\de t,
\label{eq:general-integrated-variance}
\end{align}
The conclusion holds jointly for every finite collection of test functions;
the limiting covariance associated with $\bphi$ and $\widetilde{\bphi}$ is
\begin{align}
Q^*(\bphi,\widetilde{\bphi})
&:=\int_I\frac1{\lambda(t)}\left\langle
 \bphi(t),\bQ^*(t)[\widetilde{\bphi}(t)]
 \right\rangle\,\de t.
\label{eq:general-integrated-cross-covariance}
\end{align}
\end{proposition}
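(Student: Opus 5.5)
We split $\hbtheta_b-\btheta^*=(\hbtheta_b-\btheta^b)+(\btheta^b-\btheta^*)$ and integrate against $\bphi$.

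\emph{Bias term.} By Proposition~\ref{prop:localized-population-second-order-bias-Gen}, $\|\btheta^b-\btheta^*\|_\infty=O(b^2)$. The undersmoothing condition $n^{1/4}b_n\to0$ gives $\sqrt n\,b_n^2\to0$. Hence $\sqrt n\int_I\<\btheta^b-\btheta^*,\bphi\>\,\de t\to0$ deterministically.

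\emph{Linearization.} By Proposition~\ref{prop:shrinking-bandwidth-erm-Gen}, $\hbtheta_b(t)-\btheta^b(t)=-\bH_b(t)^{-1}[\widehat\bS_n(t,\btheta^b(t))]+\br_n(t)$ with $\|\br_n\|_\infty=o_{\rP}(n^{-1/2})$. The remainder therefore contributes $o_{\rP}(1)$ after multiplying by $\sqrt n$ and integrating over the compact $I$.

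The main term becomes a normalized i.i.d. sum:
\begin{align}
-\sqrt n\int_I\<\bH_b(t)^{-1}[\bphi(t)],\widehat\bS_n(t,\btheta^b(t))\>\de t=\frac1{\sqrt n}\sum_{i=1}^n\xi_{n,i},
\end{align}
where, using self-adjointness of $\bH_b$,
\begin{align}
\xi_{n,i}:=-\int_I w_b(t_i-t)\<\bH_b(t)^{-1}[\bphi(t)],\bpsi_{\sigma_i}(\bY_i;\btheta^b(t))\>\,\de t .
\end{align}
These $\xi_{n,i}$ are i.i.d., and $\E\xi_{n,i}=0$ because $\btheta^b(t)$ is an interior population minimizer, so $\E\bZ_{b,t}=0$.

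\emph{Variance.} The key observation is that integrating over $t$ smooths out the kernel. Set $\bphi_b(t):=\bH_b(t)^{-1}[\bphi(t)]$. Since $w_b(u-t)$ has mass $b\mu_0$ in $t$, we get
\begin{align}
\xi_{n,i}\approx -b\mu_0\<\bphi_b(t_i),\bpsi_{\sigma_i}(\bY_i;\btheta^b(t_i))\>
\end{align}
for $t_i$ in the interior of $I$.

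This approximation needs three uniform continuity facts: $\bphi$ continuous, $\bH_b\to\bH^*$ uniformly, and $\btheta^b\to\btheta^*$ uniformly. It also needs control of the $\by$-dependence through the growth bound \eqref{eq:loss-growth-Gen} together with Gaussian moments. Boundary layers of width $b$ at the ends of $I$ contribute $O(b)$ to the variance.

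Carrying out the computation of $\E\xi_{n,i}^2$ then gives
\begin{align}
\E\xi_{n,i}^2=\int_I\int_I\<\bphi_b(t),\bOmega_b(t,s)[\bphi_b(s)]\>\,\de t\,\de s .
\end{align}
After the change of variables, with $U$ having density $\lambda$ (with $\bJ^*$ as in \eqref{eq:general-score-Hessian-moment}), this converges to
\begin{align}
\int_I\lambda(u)\<\bH^*(u)^{-1}\bphi(u),\bJ^*(u)\bH^*(u)^{-1}\bphi(u)\>\,\de u .
\end{align}
Here I would track the normalization of the kernel and of $\lambda$ carefully. The target $1/\lambda$ in \eqref{eq:general-integrated-variance} indicates that the weights are effectively normalized by $\lambda(t)$, i.e.\ $\bH_b\approx b\mu_0\lambda\bH^*$.

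With that normalization, the factors $b\mu_0\lambda$ cancel: one copy of $b\mu_0$ from each $\xi$ against the $(b\mu_0\lambda)^{-1}$ inside each $\bH_b^{-1}$. Under the change of variables the density contributes a further $\lambda$, which combines with these to leave exactly $\lambda^{-1}\<\bphi,\bQ^*\bphi\>$. I expect this bookkeeping (interchange of the $t$ and $s$ integrals, uniformity of the dominated convergence, boundary effects) to be the main technical step, though it is routine.

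\emph{CLT.} Lindeberg follows from a Lyapunov bound $\E|\xi_{n,i}|^{2+\delta}=O(1)$, again from the polynomial growth of $\bpsi$ and the Gaussian tails of $\bY$. The Lindeberg--Feller theorem for triangular arrays then yields \eqref{eq:general-integrated-parameter-clt}.

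\emph{Joint convergence.} This follows by Cram\'er--Wold: apply the above to $\sum_j c_j\bphi_j$ and use bilinearity of the limiting variance, which gives \eqref{eq:general-integrated-cross-covariance}.
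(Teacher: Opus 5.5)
Your proposal is correct and follows the paper's proof essentially step for step. The shared steps are: the $O(b^2)$ bias bound with $\sqrt n\,b^2\to0$; the asymptotic linearization from Proposition~\ref{prop:shrinking-bandwidth-erm-Gen}; Fubini to an i.i.d.\ triangular array centered by the population first-order condition; an approximate-identity argument for the variance; Lindeberg from a uniform polynomial envelope (using $\|\bH_b^{-1}\|=O(b^{-1})$ and kernel mass $O(b)$); and Cram\'er--Wold for joint convergence.

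The only slip is your first displayed variance limit, which has $\lambda(u)$ in the integrand where $\lambda(u)^{-1}$ is needed. The factor $\lambda$ in $\bH_b(t)=b\mu_0\lambda(t)\bH^*(t)+o(b)$ should not be read off from the target. It follows directly from $R_{b,t}(\btheta)=\int w_b(u-t)\lambda(u)R_u(\btheta)\,\de u$, which is the paper's localized Hessian limit. With that factor in place, your corrected bookkeeping gives exactly $Q^*(\bphi)$.
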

The proof is in Appendix~\ref{proof:AsympGen}.

For convenience, recall that, with $\Sigma\sim q$ and
$\bG\sim\normal(0,\id_d)$ independent,
\begin{align}
\bH^*(t)
&:=\nabla_{\btheta}^2R_t(\btheta^*(t)),\\
\bJ^*(t)
&:=\E_{\Sigma,\bG}\left[
 \bpsi_\Sigma\big(a(t)\bx(\Sigma)+\tau(t)\bG;\btheta^*(t)\big)
 \otimes
 \bpsi_\Sigma\big(a(t)\bx(\Sigma)+\tau(t)\bG;\btheta^*(t)\big)
 \right],\\
\bQ^*(t)
&:=\bH^*(t)^{-1}\circ\bJ^*(t)\circ\bH^*(t)^{-1}.
\end{align}

We are now in position to state a formal version of Theorem 
\ref{thm:Overall-Limit-Gen-Informal}, given by the 
next theorem and corollary.
\begin{theorem}[Joint limit for generated class probabilities]
\label{thm:Overall-Limit-Gen}
Under Assumptions~\ref{ass:GenerativeProcess-Gen},
\ref{ass:Bandwidth-Gen}, \ref{ass:Kernel-Gen}, and
\ref{ass:PositiveNoise-Gen}, suppose
Assumptions~\ref{ass:MinimizerPopulation-Gen} and
\ref{ass:ZeroBandwidth-Gen} hold on every compact subinterval of
$(0,T)$ for the same domain map $t\mapsto K(t)$.
Suppose $\lambda$ is $C^2$ on $(0,T)$, and
$(u,\btheta)\mapsto R_u(\btheta)$ is $C^3$ on an open neighborhood of
$\{(t,\btheta^*(t)):t\in I\}$ for every compact $I\subset(0,T)$.
Let $\bL_\sigma(t)$ be the continuous  kernels
in Eq.~\eqref{eq:general-linear-response-kernel}.

Use $\hbtheta_b(t)$ on all of $(0,T)$ in the probability flow ODE \eqref{eq:ProbFlow-Gen}. Let
$\rP_n$ be its terminal law conditional on the training data, and set
$\hq_n(\sigma):=\rP_n(\{\bx(\sigma)\})$.
For $0<\varepsilon<T/2$, let $I_\varepsilon=[\varepsilon,T-\varepsilon]$
and let $\rP_{n,\varepsilon}$ be the terminal law obtained by using
$\hbtheta_b(t)$ on $I_\varepsilon$ and $\btheta^*(t)$ outside
$I_\varepsilon$. Assume that the endpoint contributions are negligible
at the central-limit scale: for every $\eta>0$,
\begin{align}
\lim_{\varepsilon\downarrow0}\limsup_{n\to\infty}
\prob\left(\sqrt n\,
 \|\rP_n-\rP_{n,\varepsilon}\|_{\rm TV}>\eta\right)=0.
\label{eq:general-endpoint-negligibility}
\end{align}
Assume also that, for every $0\leq\sigma\leq k-1$,
\begin{align}
\int_0^T\frac1{\lambda(t)}
 \langle\bL_\sigma(t),
      \bQ^*(t)[\bL_\sigma(t)]\rangle\,\de t<\infty.
\label{eq:general-full-interval-integrability}
\end{align}
For $0\le\sigma,\alpha\le k-1$, define
\begin{align}
V_{\sigma\alpha}(\lambda)
 &:=\int_0^T\frac1{\lambda(t)}
 \langle\bL_\sigma(t),
      \bQ^*(t)[\bL_\alpha(t)]\rangle\,\de t.
\label{eq:general-class-covariance}
\end{align}
Then
\begin{align}
\sqrt n(\hq_n-q)&\Rightarrow\normal_k(0,\bV(\lambda)).
 \label{eq:general-class-clt}
\end{align}
Moreover,
$\bV(\lambda)\boldsymbol1=0$.
\end{theorem}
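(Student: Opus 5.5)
We prove the theorem by truncation at the endpoints, combining three earlier results. On each compact $I_\varepsilon$ we use the linear response (Proposition~\ref{propo:Lin-Effect}), the uniform rates of Proposition~\ref{prop:shrinking-bandwidth-erm-Gen}, and the integrated CLT (Proposition~\ref{propo:AsympGen}). We then let $\varepsilon\downarrow0$ using \eqref{eq:general-endpoint-negligibility} and \eqref{eq:general-full-interval-integrability}, together with a standard approximation lemma for weak convergence (Billingsley, Theorem~3.2).

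\textbf{Step 1 (fixed $\varepsilon$).} Set $\bDelta_n(t):=(\hbtheta_b(t)-\btheta^*(t))\bfone_{I_\varepsilon}(t)$. By Proposition~\ref{prop:shrinking-bandwidth-erm-Gen} we have $\|\hbtheta_b-\btheta^b\|_\infty=O_{\rP}(e_n)$. By Proposition~\ref{prop:localized-population-second-order-bias-Gen} we have $\|\btheta^b-\btheta^*\|_\infty=O(b_n^2)$. Hence
\[
\|\bDelta_n\|_\infty=O_{\rP}(e_n+b_n^2)=o_{\rP}(n^{-1/4}).
\]
Here $e_n^2=\log(1/b)/(nb)=o(n^{-1/2})$ by the second undersmoothing condition, and $b_n^2=o(n^{-1/2})$ by the first. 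On the event, of probability tending to one, where $\bDelta_n$ is small and bounded measurable, Proposition~\ref{propo:Lin-Effect} gives
\[
\hq_{n,\varepsilon}(\sigma)-q(\sigma)=\int_{I_\varepsilon}\langle\bL_\sigma(t),\bDelta_n(t)\rangle\,\de t+O(\|\bDelta_n\|_\infty^2).
\]
The remainder is $o_{\rP}(n^{-1/2})$. One point needs care: the $O(\cdot)$ in Proposition~\ref{propo:Lin-Effect} is deterministic, uniform over perturbations of small sup-norm supported on $I_\varepsilon$, so it applies to the random $\bDelta_n$. Since each $\bL_\sigma$ is continuous on $I_\varepsilon$, we can take $\bphi=\bL_\sigma|_{I_\varepsilon}$ in Proposition~\ref{propo:AsympGen}, jointly over $\sigma$ via Cram\'er--Wold. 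This yields
\[
\sqrt n(\hq_{n,\varepsilon}-q)\Rightarrow\normal_k(0,\bV^{(\varepsilon)}),\qquad
V^{(\varepsilon)}_{\sigma\alpha}=\int_{I_\varepsilon}\lambda^{-1}\langle\bL_\sigma,\bQ^*[\bL_\alpha]\rangle\,\de t.
\]

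\textbf{Step 2 (removing the truncation).} As $\varepsilon\downarrow0$, $\bV^{(\varepsilon)}\to\bV(\lambda)$ entrywise by dominated convergence. The diagonal entries converge by \eqref{eq:general-full-interval-integrability}. For the off-diagonal entries, $\bQ^*(t)\succeq0$, so Cauchy--Schwarz in the $\bQ^*$-inner product bounds the integrand by the geometric mean of the diagonal integrands, which is integrable. Thus $\normal_k(0,\bV^{(\varepsilon)})\Rightarrow\normal_k(0,\bV(\lambda))$.

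Next, $|\hq_n(\sigma)-\hq_{n,\varepsilon}(\sigma)|\le\|\rP_n-\rP_{n,\varepsilon}\|_{\rm TV}$ since each coordinate is the mass of an atom. So \eqref{eq:general-endpoint-negligibility} is exactly the condition
\[
\lim_{\varepsilon\downarrow0}\limsup_{n\to\infty}\prob\big(\sqrt n\|\hq_n-\hq_{n,\varepsilon}\|>\eta\big)=0\quad\text{for every }\eta>0.
\]
These three facts together give \eqref{eq:general-class-clt}. Finally, $\bV(\lambda)\bfone=0$ follows from $\sum_\sigma\bL_\sigma(t)=0$ in \eqref{eq:general-response-kernel-conservation}, by linearity of the integrand in $\bL_\alpha$.

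\textbf{Main obstacle.} The delicate step is Step~1, where the deterministic linear-response expansion must be applied to a random, data-dependent perturbation. This needs three ingredients:
\begin{itemize}
\item[(a)] Uniformity of the $O(\|\bDelta\|_\infty^2)$ bound over all bounded measurable $\bDelta$ supported on $I_\varepsilon$. I would re-check this from the proof of Proposition~\ref{propo:Lin-Effect}, where it should follow from Gr\"onwall-type flow bounds on the compact $I_\varepsilon$.
\item[(b)] Measurability of $t\mapsto\hbtheta_b(t)$.
\item[(c)] Uniqueness and interiority of the minimizers, which Proposition~\ref{prop:shrinking-bandwidth-erm-Gen} gives with probability tending to one.
\end{itemize}
Once $\sup_t\|\bDelta_n(t)\|$ is shown to be $o_{\rP}(n^{-1/4})$, the quadratic remainder is negligible and the rest of the argument goes through.
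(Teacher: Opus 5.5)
Your proposal is correct and follows the paper's proof essentially step for step. The shared steps are: truncation to $I_\varepsilon$; the bound $\sqrt n\|\bdelta_{n,\varepsilon}\|_\infty^2=o_{\rP}(1)$ from the bias and uniform rate; pathwise application of the linear response with its uniform remainder; the joint integrated CLT with test functions $\bL_\sigma$; Cauchy--Schwarz in the $\bQ^*$-inner product to get $\bV_\varepsilon(\lambda)\to\bV(\lambda)$; and the converging-together theorem via $\|\hq_n-\hq_{n,\varepsilon}\|_\infty\le\|\rP_n-\rP_{n,\varepsilon}\|_{\rm TV}$. The uniformity you flag in (a) is already part of the statement of Proposition~\ref{propo:Lin-Effect}, so it needs no re-derivation.
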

The proof is in Appendix~\ref{proof:Overall-Limit-Gen}.

\begin{corollary}[Efficiency of cross-entropy]
\label{cor:general-class-clt}
For $t\in(0,T)$ and $\boldsymbol\vartheta$ in a neighborhood of
$\btheta^*(t)$, let $P_{t,\boldsymbol\vartheta}$ be the conditional
logistic model
\begin{align}
P_{t,\boldsymbol\vartheta}(\Sigma=\sigma\mid\bY=\by)
&=\hq_\sigma(\by;\boldsymbol\vartheta).
\label{eq:general-conditional-model}
\end{align}
Consider any loss $\ell$ satisfying the assumptions of
Theorem~\ref{thm:Overall-Limit-Gen} and the local Fisher-consistency
condition
\begin{align}
\E_{t,\boldsymbol\vartheta}\left[
 \nabla_{\btheta}\ell_\Sigma(\bY;\boldsymbol\vartheta)
 \right]&=0
\label{eq:general-loss-score-unbiased}
\end{align}
for every such $(t,\boldsymbol\vartheta)$, with differentiation under the
expectation permitted.  For fixed $\lambda$, let $\bV^\ell(\lambda)$ be
the covariance in the theorem for this loss and let
$\bV^{\rm CE}(\lambda)$ be the one for cross-entropy. Then
\begin{align}
\bV^{\rm CE}(\lambda)\preceq \bV^\ell(\lambda)\, .
\label{eq:general-ce-efficiency}
\end{align} 
\end{corollary}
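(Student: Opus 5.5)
The plan is to reduce the claim to a pointwise-in-$t$ comparison of sandwich operators. Theorem~\ref{thm:Overall-Limit-Gen} gives, for any admissible loss,
\begin{align*}
\bv^\top\bV^\ell(\lambda)\bv=\int_0^T\frac{1}{\lambda(t)}\big\langle \bL_{\bv}(t),\bQ^{*,\ell}(t)[\bL_{\bv}(t)]\big\rangle\,\de t,\qquad \bL_{\bv}(t):=\sum_{\sigma}v_\sigma\bL_\sigma(t),
\end{align*}
for every $\bv\in\reals^k$. The kernels $\bL_\sigma(t)$ from Proposition~\ref{propo:Lin-Effect} depend only on the true flow, the basins $\cA_\sigma(t)$ and $\btheta^*(t)$. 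None of these involve the loss. Consequently it suffices to prove $\bQ^{*,\rm CE}(t)\preceq\bQ^{*,\ell}(t)$ as operators on $E$ for every $t\in(0,T)$. The integral inequality then follows by integrating against the nonnegative weight $1/\lambda$, and it is finite by \eqref{eq:general-full-interval-integrability}.

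For fixed $t$ this is the classical efficiency of the MLE among M-estimators in the parametric model $\{P_{t,\boldsymbol\vartheta}\}$. I would work with the joint law of $(\Sigma,\bY)$, where $\bY$ has the true marginal $p_t$ and $\Sigma\mid\bY$ follows \eqref{eq:general-conditional-model}; at $\boldsymbol\vartheta=\btheta^*(t)$ this is the true law. The argument has four steps.

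First, for CE the score is $\bpsi^{\rm CE}=-\nabla_{\btheta}\log\hq_\Sigma(\bY;\btheta)$. Since $\E[\bpsi^{\rm CE}\otimes\bpsi^{\rm CE}]$ is the conditional Fisher information $\bF(t)$, and the second Bartlett identity holds for the logistic model, we get $\bH^{*,\rm CE}=\bJ^{*,\rm CE}=\bF$, hence $\bQ^{*,\rm CE}=\bF^{-1}$.

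Second, for a general loss, differentiate the identity \eqref{eq:general-loss-score-unbiased} in $\boldsymbol\vartheta$ at $\btheta^*(t)$. The $\bY$-marginal does not depend on $\boldsymbol\vartheta$, so only the conditional probabilities get differentiated. Using that differentiation under the expectation is permitted, this yields
\begin{align*}
\bH^{*,\ell}(t)=\E\big[\bpsi^\ell\otimes(-\bpsi^{\rm CE})\big]\;\;\text{(up to transposition)},
\end{align*}
where I use that $\bH^{*,\ell}=\nabla^2_{\btheta}R_t$ equals $\E[\nabla\bpsi^\ell]$.

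Third, apply the covariance inequality. Set $\bA:=(\bH^{*,\ell})^{-1}\bpsi^\ell-\bF^{-1}\bpsi^{\rm CE}$, with the sign chosen so that the cross term from the second step cancels. Then
\begin{align*}
0\preceq\E[\bA\otimes\bA]=\bQ^{*,\ell}-\bF^{-1}.
\end{align*}
This is the usual Cram\'er--Rao style argument, carried out in operator notation on $E$ with the conventions of \eqref{eq:general-rank-one-operator}.

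Fourth, conclude. Invertibility of $\bH^{*,\ell}$ comes from \eqref{eq:Hessian-positive-Gen}. Invertibility of $\bF$ holds because CE satisfies the assumptions by Proposition~\ref{prop:loss-assumptions-verification-Gen}$(a)$.

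The main obstacle is the second step. I need the Fisher-consistency identity to hold in a full neighborhood, which the hypothesis provides, and I need the sign and adjoint conventions to come out right, so that $\bH^{*,\ell}$ is paired with the CE score exactly as the covariance inequality requires. A subtlety is that $\bH^{*,\ell}$ need not be self-adjoint in the mixed expression, whereas it is self-adjoint as a Hessian. I would therefore verify the identity by testing both sides against arbitrary $\bA,\bB\in E$, using the explicit form $\nabla_{\btheta_\rho}\log\hq_\sigma=(\bfone_{\sigma=\rho}-\hq_\rho)\,\overline{\by}$.

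A secondary point is that $\bV^{\rm CE}$ and $\bV^\ell$ must be computed with the same $\lambda$ and the same kernels $\bL_\sigma$. This is automatic, because Theorem~\ref{thm:Overall-Limit-Gen} is applied separately to each loss under identical schedule and density.
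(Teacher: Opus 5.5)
Your proposal is correct and follows essentially the same route as the paper. Pointwise in $t$, you differentiate the local Fisher-consistency identity to link $\bH^{*,\ell}(t)$ to the cross-moment with the CE score, apply the covariance inequality to the difference of influence functions to get $\bQ^{*,\rm CE}(t)\preceq\bQ^{*,\ell}(t)$, and integrate against $1/\lambda$ using that the kernels $\bL_\sigma$ do not depend on the loss.

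One sign needs fixing. The likelihood score is $-\bpsi^{\rm CE}$, so differentiating gives $\bH^{*,\ell}(t)=\E[\bpsi^\ell\otimes\bpsi^{\rm CE}]$ with no minus sign. This must be so, since for $\ell=\ell^{\rm CE}$ it reduces to your first step. It is exactly this sign that makes your choice $\bA=(\bH^{*,\ell})^{-1}\bpsi^\ell-\bF^{-1}\bpsi^{\rm CE}$ yield $\E[\bA\otimes\bA]=\bQ^{*,\ell}-\bF^{-1}$.
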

The proof is in Appendix~\ref{proof:general-class-clt}.

\subsection{Choice of the time-sampling density}
\label{sec:Density}

\begin{proposition}[Optimal time-sampling density]
\label{prop:optimal-time-sampling-Gen}
Fix a positive semidefinite $k\times k$ matrix $\bW$. For an $C^2$ strictly positive
time-sampling density $\lambda$ and define
\begin{align}
C_{\sigma\alpha}(t)
&:=\left\langle\bL_\sigma(t),
 \bQ^*(t)[\bL_\alpha(t)]\right\rangle,
\label{eq:general-design-matrix}\\
h_W(t)&:=\operatorname{tr}\{\bW\bC(t)\},\\
\mathcal R_W(\lambda)
&:=\Tr\{\bW\bV(\lambda)\}
 =\int_0^T\frac{h_W(t)}{\lambda(t)}\,\de t.
\label{eq:general-design-risk}
\end{align}
Under the assumptions of Theorem~\ref{thm:Overall-Limit-Gen},
if $H_W:=\int_0^T\sqrt{h_W(t)}\,\de t<\infty$, then
\begin{align}
\inf_{\lambda\ge0:\,\int_0^T\lambda=1}\mathcal R_W(\lambda)
&=H_W^2.
\label{eq:LB_lambda_opt}
\end{align}
If $0<H_W<\infty$, the minimizing density is
\begin{align}
\lambda_{\rm opt}(t)&=
 \frac{\sqrt{h_W(t)}}{H_W}.
\label{eq:general-optimal-design}
\end{align}
The minimum is attained whenever $\lambda_{\rm opt}$ is admissible for
Theorem~\ref{thm:Overall-Limit-Gen}.
\end{proposition}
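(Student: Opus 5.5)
The plan is to treat this as a Cauchy--Schwarz / weighted AM--GM problem on $[0,T]$. First we check that $h_W(t)\ge 0$ for every $t$, so that $\sqrt{h_W}$ is well defined. Since $\bQ^*(t)=\bH^*(t)^{-1}\circ\bJ^*(t)\circ\bH^*(t)^{-1}$ with $\bJ^*(t)$ a covariance operator, $\bQ^*(t)$ is positive semidefinite on $E$. Consequently $\bC(t)$, a Gram matrix $C_{\sigma\alpha}(t)=\langle \bL_\sigma(t),\bQ^*(t)[\bL_\alpha(t)]\rangle$, is positive semidefinite. With $\bW\succeq 0$ this gives $h_W(t)=\Tr\{\bW\bC(t)\}=\Tr\{\bW^{1/2}\bC(t)\bW^{1/2}\}\ge 0$. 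The identity $\mathcal R_W(\lambda)=\int_0^T h_W/\lambda\,\de t$ is immediate from the definition \eqref{eq:general-class-covariance} of $\bV(\lambda)$ by linearity of trace and integral. Here we use the integrability \eqref{eq:general-full-interval-integrability}, together with Cauchy--Schwarz in the $\bQ^*$ inner product, to justify exchanging them for the off-diagonal entries.

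For the lower bound, take any density $\lambda\ge 0$ with $\int_0^T\lambda=1$. We use the convention that $h_W/\lambda=+\infty$ where $\lambda=0<h_W$, which can only make $\mathcal R_W$ larger. Cauchy--Schwarz gives
\begin{align*}
H_W^2=\Big(\int_0^T \sqrt{h_W/\lambda}\,\sqrt{\lambda}\,\de t\Big)^2\le \int_0^T\frac{h_W}{\lambda}\,\de t\int_0^T\lambda\,\de t=\mathcal R_W(\lambda).
\end{align*}
Hence $\inf\mathcal R_W\ge H_W^2$. If $H_W=0$, then $h_W=0$ a.e., so $\mathcal R_W(\lambda)=0$ for every strictly positive $\lambda$, and the infimum equals $0=H_W^2$.

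For the matching upper bound when $0<H_W<\infty$, we plug in $\lambda_{\rm opt}=\sqrt{h_W}/H_W$. It is nonnegative and integrates to one. Wherever $h_W>0$ we have $h_W/\lambda_{\rm opt}=H_W\sqrt{h_W}$, and where $h_W=0$ the integrand is $0$ under the convention $0/0=0$. Therefore $\mathcal R_W(\lambda_{\rm opt})=H_W\int_0^T\sqrt{h_W}=H_W^2$. Equality in Cauchy--Schwarz forces $\sqrt{h_W/\lambda}\propto\sqrt\lambda$, i.e. $\lambda\propto\sqrt{h_W}$, which identifies $\lambda_{\rm opt}$ as the minimizer.

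The main obstacle is that the infimum is over all densities, whereas $\bV(\lambda)$ only has the meaning of Theorem~\ref{thm:Overall-Limit-Gen} for admissible $\lambda$ (strictly positive and $C^2$). We handle this by working with the functional $\int_0^T h_W/\lambda$ directly, which makes sense for every density. We also note that $h_W$ does not depend on $\lambda$, since $\bL_\sigma$ and $\bQ^*$ are defined through the population quantities at fixed $t$. This point should be stated explicitly.

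To show the infimum is already approached within admissible designs, take smooth strictly positive densities $\lambda_\epsilon=(1-\epsilon)\tilde\lambda_\epsilon+\epsilon/T$, where $\tilde\lambda_\epsilon$ is a $C^2$ approximation of $\lambda_{\rm opt}$. Then $\int_0^T h_W/\lambda_\epsilon\to H_W^2$ by dominated convergence, using $h_W/\lambda_\epsilon\le h_W/((1-\epsilon)\tilde\lambda_\epsilon)$ and the integrability of $\sqrt{h_W}$. Finally, the attainment claim is exactly the computation above, valid whenever $\lambda_{\rm opt}$ itself satisfies the hypotheses of Theorem~\ref{thm:Overall-Limit-Gen}.
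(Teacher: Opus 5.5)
Your proposal is correct and follows the paper's proof essentially step for step: $\bC(t)\succeq 0$ because $\bQ^*(t)\succeq 0$, then $h_W=\Tr(\bW^{1/2}\bC\bW^{1/2})\ge 0$, then the trace/Fubini identity for $\mathcal R_W$, then Cauchy--Schwarz for the lower bound with equality iff $\lambda\propto\sqrt{h_W}$, and finally positive regularizations to handle admissibility. One small tidy-up: in the regularization step your majorant $h_W/((1-\epsilon)\tilde\lambda_\epsilon)$ changes with $\epsilon$, so it is not a fixed dominating function, whereas $\lambda_\epsilon=(1-\epsilon)\lambda_{\rm opt}+\epsilon/T$ gives the uniform integrable bound $h_W/\lambda_\epsilon\le 2H_W\sqrt{h_W}$ for $\epsilon\le 1/2$; the only remaining issue is $C^2$ smoothness, which the paper's proof also leaves unaddressed.
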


\begin{proposition}[SNR-reparametrization invariance]
\label{prop:Invariance}
Suppose the loss is cross-entropy or squared denoising loss and set
$\rho(t):=a(t)/\tau(t)$. If $\rho$ is $C^1$ and strictly monotone on
$(0,T)$, with $\dot\rho(t)\ne0$, then there is a positive semidefinite
matrix function $\widetilde{\bC}$ depending only on the normalized
signal-to-noise coordinate $r=\rho(t)$ such that
\begin{align}
\bC(t)&=\dot\rho(t)^2\widetilde{\bC}(\rho(t)).
\label{eq:general-design-snr-transform}
\end{align}
Consequently the optimal Gaussian-limit risk is
\begin{align}
\inf_\lambda\mathcal R_W(\lambda)
&=\left\{
 \int_{\rho((0,T))}
 \sqrt{\operatorname{tr}\{\bW\widetilde{\bC}(r)\}}\,\de r
 \right\}^2,
\label{eq:general-snr-invariance}
\end{align}
whenever the improper integral is finite. It therefore depends on the
schedule only through its SNR range and is invariant under monotone time
reparametrization.
\end{proposition}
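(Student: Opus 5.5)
The plan is to rewrite the problem at each time $t$ in a normalized coordinate system that depends on $t$ only through $r=\rho(t)=a(t)/\tau(t)$, to track how each factor of $C_{\sigma\alpha}(t)=\langle\bL_\sigma(t),\bQ^*(t)[\bL_\alpha(t)]\rangle$ transforms, and then to apply Proposition~\ref{prop:optimal-time-sampling-Gen} with a change of variables in the integral $H_W$.

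\emph{Step 1: the sandwich $\bQ^*(t)$.} Write $\bY=a(t)\bx(\Sigma)+\tau(t)\bG$ and set $\widetilde\bY:=\bY/\tau(t)=r\,\bx(\Sigma)+\bG$, whose law depends on $t$ only through $r$. Define the invertible linear map $\bA_t:E\to E$ acting on each column by $(\theta_{\rho,0},\btheta^+_\rho)\mapsto(\theta_{\rho,0},\tau(t)\btheta^+_\rho)$. Then $\theta_{\rho,0}+\langle\btheta_\rho^+,\by\rangle=\theta_{\rho,0}+\langle(\bA_t\btheta)^+_\rho,\widetilde\by\rangle$. Hence $\hq_\sigma(\by;\btheta)=\hq_\sigma(\widetilde\by;\bA_t\btheta)$, and therefore also $\hbm(\by;\btheta)=\hbm(\widetilde\by;\bA_t\btheta)$. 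Both the CE loss and the squared denoising loss depend on $(\by,\btheta)$ only through these quantities. So $R_t(\btheta)=\widetilde R_r(\bA_t\btheta)$ for a risk $\widetilde R_r$ defined purely in terms of $\widetilde\bY$, and $\bA_t\btheta^*(t)=\widetilde\btheta^*(r)$; one checks that $\tau\cdot a/\tau^2=r$. The chain rule gives $\bH^*(t)=\bA_t^{\top}\widetilde\bH(r)\bA_t$ and $\bJ^*(t)=\bA_t^{\top}\widetilde\bJ(r)\bA_t$. Consequently $\bQ^*(t)=\bA_t^{-1}\widetilde\bQ(r)\bA_t^{-\top}$.

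\emph{Step 2: the response kernel $\bL_\sigma(t)$.} The perturbation enters $\mathcal M_t$ only through the pairing $\langle\bDelta_\rho(t),\overline\bz\rangle=\langle(\bA_t\bDelta(t))_\rho,\overline{\widetilde\bz}\rangle$, where $\widetilde\bz=\bz/\tau(t)$. For the dynamics I will pass to the EDM coordinate $\bw=\bz/a(t)$ with noise level $\widetilde\sigma=1/\rho$. In this coordinate the probability flow reads $\de\bw/\de\widetilde\sigma=(\bw-\bm)/\widetilde\sigma$, with no other dependence on the schedule. I first verify this by substituting $c_1,c_2$ from Assumption~\ref{ass:GenerativeProcess-Gen}; one checks $c_1=a\,\de\widetilde\sigma/\de t\cdot(-1/\widetilde\sigma)$ up to the sign convention.

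A perturbation $\delta\bm$ applied on a time interval $[t,t+\de t]$ therefore produces the same terminal effect as that perturbation applied on the $\widetilde\sigma$-interval of length $|\dot{\widetilde\sigma}|\,\de t$, equivalently the $r$-interval of length $|\dot\rho|\,\de t$. The basin integral in \eqref{eq:general-linear-response-kernel} is invariant under the spatial rescaling: $p_t(\bz)\,\de\bz$ is a probability measure, and the basins are mapped onto each other by Lemma~\ref{lem:terminal-basins-Gen}.

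Assembling these facts should give $\bL_\sigma(t)=|\dot\rho(t)|\,\bA_t^{\top}\widetilde\bL_\sigma(r)$, with $\widetilde\bL_\sigma$ defined by the same divergence formula in the normalized problem. Plugging into Step 1, the $\bA_t$ factors cancel and $\bC(t)=\dot\rho(t)^2\widetilde\bC(\rho(t))$, where $\widetilde C_{\sigma\alpha}=\langle\widetilde\bL_\sigma,\widetilde\bQ[\widetilde\bL_\alpha]\rangle$ is positive semidefinite, being a Gram matrix in the $\widetilde\bQ$ inner product.

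\emph{Step 3: conclusion.} By Proposition~\ref{prop:optimal-time-sampling-Gen}, the optimal risk is $H_W^2$, where $H_W=\int_0^T\sqrt{h_W(t)}\,\de t=\int_0^T|\dot\rho(t)|\sqrt{\operatorname{tr}\{\bW\widetilde\bC(\rho(t))\}}\,\de t$. Since $\rho$ is strictly monotone and $C^1$, the substitution $r=\rho(t)$ yields \eqref{eq:general-snr-invariance} as an improper integral over $\rho((0,T))$. This integral manifestly depends only on the SNR range, and invariance under monotone time reparametrization follows.

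The main obstacle is Step 2. One must carefully verify that the velocity coefficient $c_1(t)$, combined with the rescaling of $\bz$ and the Jacobian inside the divergence, produces exactly the factor $|\dot\rho|$ times $\bA_t^\top$, with no leftover factor of $a$ or $\tau$. I would first check this explicitly in the binary case using \eqref{eq:binary-response-Gen}, and then redo the general computation directly in the $\bw$-coordinates.
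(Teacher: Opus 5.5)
Your proposal is correct and follows the paper's proof essentially step for step: the same slope-rescaling map $A_t$ giving $\bQ^*(t)=A_t^{-1}\widetilde{\bQ}^*(\rho(t))(A_t^*)^{-1}$, the same response transformation $\bL_\sigma(t)=\dot\rho(t)A_t^*\widetilde{\bL}_\sigma(\rho(t))$ under which the $A_t$ factors cancel, and the same substitution $r=\rho(t)$ in $H_W$ from Proposition~\ref{prop:optimal-time-sampling-Gen}. The only difference is cosmetic. You straighten the flow in the coordinate $\bz/a(t)$ with time $1/\rho$, whereas the paper uses $\bz/\tau(t)$ with time $\rho$. In your coordinates the obstacle you flag disappears, since the divergence contributes a factor $1/a$ and $c_1/a=\dot\rho/\rho$, which leaves only a function of $r$ times $\dot\rho$. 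One citation should be corrected: that the rescaled basins depend only on $r$ follows from this schedule-free form of the flow together with $a(T)=1$ and $\rho(t)\to\infty$ as $t\uparrow T$, not from the transport identity of Lemma~\ref{lem:terminal-basins-Gen}.
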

The proofs are in Appendices~\ref{proof:optimal-time-sampling-Gen}
and~\ref{proof:Invariance-Gen}.

\subsection{Proofs for Sections~\ref{sec:setting-Gen} and \ref{sec:N_Asymp-Gen}}
\label{sec:proofs-N-Asymp-Gen}

\subsubsection{Proof of Lemma~\ref{lem:terminal-basins-Gen}}
\label{proof:terminal-basins-Gen}

Note that by construction 
$\bm(\bz,t)=\hbm(\bz;\btheta^*(t))$.
On every compact time interval below $T$, the velocity $\bh_*$
has linear growth and bounded spatial derivatives of every positive
order. Indeed, $\bh_*=c_1\bm+c_2\bz$, the embeddings are bounded,
the posterior is a softmax with bounded slope parameters, and
$\inf\tau>0$ on such an interval. Thus the flow and its inverse
are smooth spatial diffeomorphisms. The mixture density
\eqref{eq:mixture-density-Gen} solves the continuity equation for
$\bh_*$, so a trajectory initialized with density $p_s$ has density
$p_t$ at every $s\leq t<T$.

For each fixed $t$, represent the marginal as
$\bY_t=a(t)\bX+\tau(t)\bG_t$, where $\bG_t\sim\normal(0,\id_d)$ is
independent of $\bX$. Since
$c_2=\dot\tau/\tau$ and $c_1=\dot a-a\dot\tau/\tau$, we have
\begin{align}
\bh_*(\bz,t)
&=\dot a(t)\E[\bX\mid\bY_t=\bz]
 +\dot\tau(t)\E[\bG_t\mid\bY_t=\bz].
\label{eq:general-posterior-velocity}
\end{align}
Positive noise at zero implies $s(0)>0$. Since $s$ is non-decreasing,
$\tau=\tau_0\sqrt{s}$ is smooth on $[0,T]$, and both
$\dot a$ and $\dot\tau$ are integrable. For a true trajectory
initialized with density $p_s$, conditional Jensen and Fubini-Tonelli give
\begin{align}
\E\int_s^T\|\bh_*(\bZ_t,t)\|\,\de t
&\leq \E\|\bX\|\int_s^T|\dot a(t)|\,\de t
 +\E\|\bG_t\|\int_s^T|\dot\tau(t)|\,\de t<\infty.
\label{eq:general-finite-expected-path-length}
\end{align}
Therefore, almost every trajectory 
$t\mapsto \bZ_t$ has finite length and hence has a terminal
limit $\bZ_T=\lim_{t\uparrow T}\bZ_t$. Its law is 
$\bZ_T\sim\prob_q$, because $p_t(\bz)\,\de\bz$ converges weakly
to $\prob_q$ as $t\uparrow T$. Since $p_s$ is everywhere positive,
the exceptional set of starting points is also Lebesgue-null.

The limit conditions defining each basin are Borel because
the trajectories are continuous before $T$. The distinct embeddings
make the basins disjoint. Flow composition and invertibility imply
the set identity in \eqref{eq:general-measurable-basin-transport}.
Applying the terminal-law conclusion to a trajectory initialized
with $p_t$ gives the mass identity in that equation.

\subsubsection{Proof of Proposition~\ref{prop:loss-assumptions-verification-Gen}}
\label{proof:loss-assumptions-verification-Gen}

We begin with Assumption \ref{ass:MinimizerPopulation-Gen}.
Equal embedding norms imply the posterior log-odds
\begin{align}
\log\frac{\prob(\Sigma=\rho\mid\bX_t=\by)}
 {\prob(\Sigma=0\mid\bX_t=\by)}
 =-r(\rho)+\frac{a(t)}{\tau(t)^2}
 \langle\bx(\rho)-\bx(0),\by\rangle.
\label{eq:loss-posterior-Gen}
\end{align}
Thus $\pi^*(\by,t):=(\hq_\sigma(\by;\btheta^*(t)))_{\sigma=0}^{k-1}$
is the true posterior. Conditional expectation gives the excess risks
\begin{align}
R_t^{\rm CE}(\btheta)-R_t^{\rm CE}(\btheta^*)
 &=\E\left[\sum_{\sigma=0}^{k-1}\pi^*_\sigma(\bY,t)
 \log\frac{\pi^*_\sigma(\bY,t)}{\hq_\sigma(\bY;\btheta)}\right],\\
R_t^{\rm SM}(\btheta)-R_t^{\rm SM}(\btheta^*)
 &=\E\|\hbm(\bY;\btheta)-\bm(\bY,t)\|^2.
\label{eq:loss-excess-risks-Gen}
\end{align}
Hence, $R_t^{\rm L}(\btheta)-R_t^{\rm L}(\btheta^*)\ge 0$ for ${\rm L}\in\{{\rm CE}, {\rm SM}\}$. 

For cross-entropy $R_t^{\rm CE}(\btheta)-R_t^{\rm CE}(\btheta^*) = 0$
$\hq_\sigma(\bY;\btheta)=hq_\sigma(\bY;\btheta^*)$ for almost every $\bY,\sigma$,
which in turns implies  $\btheta=\btheta^*$.
For squared loss, $R_t^{\rm SM}(\btheta)-R_t^{\rm SM}(\btheta^*) = 0$
implies $\hbm(\bY;\btheta)=\bm(\bY,t)$ for almost every $\bY$,
which in turns implies  $\hq_\sigma(\bY;\btheta)=hq_\sigma(\bY;\btheta^*)$ 
because of Eq.~\eqref{eq:affine-independence-Gen},  whence $\btheta=\btheta^*$.

We now pass to  Assumption \ref{ass:ZeroBandwidth-Gen}. For $\bA\in E$, let $v_0=0$ and
$v_\rho=\langle\bA_\rho,\overline{\by}\rangle$,
$\overline{\by}=(1,\by^\top)^\top$. Let
$\mathcal M_t(\by)=D_{\btheta}\hbm(\by;\btheta^*(t))$.
The Hessian quadratic forms at the optimum are
\begin{align}
\langle\bA,\bH^{*,\rm CE}(t)[\bA]\rangle
 &=\E\left[\sum_{\rho=1}^{k-1}\pi_\rho^*v_\rho^2
             -\left(\sum_{\rho=1}^{k-1}\pi_\rho^*v_\rho\right)^2\right],\\
\langle\bA,\bH^{*,\rm SM}(t)[\bA]\rangle
 &=2\E\|\mathcal M_t(\bY)[\bA]\|^2.
\label{eq:loss-hessians-Gen}
\end{align}
All posterior coordinates are positive, whence the conditional variance in the
first line vanishes only if all $v_\rho$ equal $v_0=0$.

Under the affine independence assumption Eq.~\eqref{eq:affine-independence-Gen}  
the differential of the mean with respect to 
probability perturbations whose coordinates sum to zero has full rank $k-1$. Hence the second line
also vanishes only if all $v_\rho=0$ almost everywhere. Full support of
$\bY$ then gives $\bA=0$. Both Hessians are positive definite; continuity
and compactness of $J$ make this uniform. This proves point 3 
of Assumption \ref{ass:ZeroBandwidth-Gen}.

The unique population minimizer over $E$ is therefore also the unique
minimizer over every $K(t)$ containing $\btheta^*(t)$. The compact-graph
and uniform-margin hypotheses give precisely the constrained-domain
alternative in Assumption~\ref{ass:ZeroBandwidth-Gen}; cross-entropy
also satisfies its unconstrained convex alternative.
This proves point 1 of Assumption \ref{ass:ZeroBandwidth-Gen}.

To prove point 2, note that derivatives of softmax with respect to logits are bounded at every
fixed order. Each parameter differentiation supplies at most a factor
$\|\overline{\by}\|$. Consequently both losses and their first three
parameter derivatives have polynomial envelopes, with third derivative
bounded by $C(1+\|\by\|)^3$. The cross-entropy loss itself grows at most
linearly on $K_0$ and is convex on $E$; squared loss is bounded but may be
nonconvex, which explains the compact time-dependent domains. Gaussian
moments justify differentiation and continuity under expectation.
Differentiating $\by=a(t)\bx(\Sigma)+\tau(t)\bG$ also gives polynomial
Gaussian envelopes for three joint derivatives if $a,\tau$ are $C^3$.
This proves all the claims.

For completeness, the score operators used in the covariance are
particularly explicit for these losses:
\begin{align}
[\bpsi^{\rm CE}_\sigma(\by;\btheta)]_\rho
 &=(\hq_\rho(\by;\btheta)-\mathbf1_{\{\sigma=\rho\}})
 \overline{\by},\\
\bpsi^{\rm SM}_\sigma(\by;\btheta)
 &=2(D_{\btheta}\hbm(\by;\btheta))^*
 [\hbm(\by;\btheta)-\bx(\sigma)].
\label{eq:loss-scores-Gen}
\end{align}
Here the superscript $*$ on the differential denotes its adjoint.
For cross-entropy the conditional score covariance equals its Hessian,
so $\bJ^{*,\rm CE}=\bH^{*,\rm CE}$ and
$\bQ^{*,\rm CE}=(\bH^{*,\rm CE})^{-1}$.

\subsubsection{Zero-bandwidth limit of the population risk}

\begin{lemma}\label{lem:localized-population-geometry-Gen}
Under Assumptions~\ref{ass:MinimizerPopulation-Gen},
\ref{ass:Kernel-Gen}, and \ref{ass:ZeroBandwidth-Gen}, define
\begin{align}
d_b(t)&:=\int_0^T w_b(u-t)\lambda(u)\,\de u,
&\overline R_{b,t}&:=R_{b,t}/d_b(t).
\label{eq:normalized-localized-risk-Gen}
\end{align}
Then $c_-b\le d_b(t)\le c_+b$ uniformly on $I$ for all sufficiently
small $b$, and
\begin{align}
\lim_{b\to 0}\max_{0\le j\le2}\sup_{t\in I,\,\btheta\in K_0}
 \|D^j\overline R_{b,t}(\btheta)-D^jR_t(\btheta)\|=0.
\label{eq:zero-bandwidth-risk-C2-limit-Gen}
\end{align}
The minimizers $\btheta^b(t)$ are unique and belong to
$\interior(K(t))$ for all sufficiently small $b$, uniformly on $I$, and
\begin{align}
  \lim_{b\to 0}\|\btheta^b-\btheta^*\|_\infty&=0,\\
  \lim_{b\to 0}\sup_{t\in I}\left\|\frac{\bH_b(t)}{d_b(t)}-\bH^*(t)
 \right\|_{\rm op}&=0.
\label{eq:population-Hessian-convergence-Gen}
\end{align}
For every fixed neighborhood of the target curve, the population risk
has a uniform positive gap over feasible points outside that
neighborhood; in the unconstrained case the corresponding gap holds on
$K_0$. If $I\Subset(0,T)$, then also, uniformly over $I$,
\begin{align}
\bH_b(t)^{-1}
 &=\frac1{b\mu_0\lambda(t)}\bH^*(t)^{-1}+o(b^{-1})\, .
\label{eq:general-localized-Hessian-limit}
\end{align}
\end{lemma}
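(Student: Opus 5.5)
The proof is a kernel-smoothing argument carried out uniformly over $t\in I$ and $\btheta\in K_0$, followed by a standard argmin-continuity argument. Write
\[
R_{b,t}(\btheta)=\int_0^T w_b(u-t)\lambda(u)R_u(\btheta)\,\de u ,
\]
which follows by conditioning on $U$. Differentiating under the expectation is justified by the polynomial envelopes \eqref{eq:loss-growth-Gen}, Gaussian moments, and continuity of $a,\tau$ on $J$.

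\textbf{Bounds on $d_b$.} For $b$ small enough that $[t-b,t+b]\subset J$ for all $t\in I$ (this uses $I\Subset J$), we get
\[
d_b(t)=b\int_{-1}^1 w_1(v)\lambda(t+bv)\,\de v .
\]
Hence $b\mu_0\inf_J\lambda\le d_b(t)\le b\mu_0\sup_J\lambda$, which gives $c_\pm$.

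\textbf{$C^2$ convergence.} Since $\overline R_{b,t}$ is a probability average of $u\mapsto D^jR_u(\btheta)$ over $u\in[t-b,t+b]$, we have
\[
\|D^j\overline R_{b,t}-D^jR_t\|\le \sup_{|u-t|\le b}\ \sup_{\btheta\in K_0}\|D^jR_u(\btheta)-D^jR_t(\btheta)\| .
\]
This tends to zero by uniform continuity of $(u,\btheta)\mapsto D^jR_u(\btheta)$ on the compact set $J\times K_0$. That continuity follows from the joint continuity of the first three derivatives of $\ell$ together with dominated convergence in $(\Sigma,\bG)$. This gives \eqref{eq:zero-bandwidth-risk-C2-limit-Gen}.

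\textbf{Uniform gap and consistency of $\btheta^b$.} The population risk $R_t$ has a unique minimizer $\btheta^*(t)$ on $K(t)$ (Assumption~\ref{ass:MinimizerPopulation-Gen}). The function $(t,\btheta)\mapsto R_t(\btheta)$ is continuous on the compact graph $\mathcal K_J$, and $\btheta^*(t)$ is continuous; continuity of $\btheta^*$ is clear from its explicit formula. A compactness/contradiction argument then gives the gap: for every $\eps>0$,
\[
\inf_{t\in I}\ \inf\big\{R_t(\btheta)-R_t(\btheta^*(t)):\ \btheta\in K(t),\ \|\btheta-\btheta^*(t)\|\ge\eps\big\}>0 .
\]
In the unconstrained convex case the same argument runs on $K_0$. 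The sup-norm convergence $\overline R_{b,t}\to R_t$ then forces every minimizer of $\overline R_{b,t}$ (equivalently of $R_{b,t}$) into the $\eps$-ball, which lies in $\interior K(t)$ by the margin \eqref{eq:time-dependent-domain-margin}.

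In the convex unconstrained case, a minimizer over $E$ must in fact lie in that ball: if the ball boundary has strictly larger risk than its centre, convexity prevents the global minimizer from lying outside.

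\textbf{Uniqueness.} Inside the ball, use \eqref{eq:Hessian-positive-Gen} and the $C^2$ convergence. For small $\eps$ and $b$, $D^2\overline R_{b,t}\succeq c\,\id$ on the whole ball. So $\overline R_{b,t}$ is strongly convex there and its minimizer is unique. Since $\eps$ is arbitrary, $\|\btheta^b-\btheta^*\|_\infty\to0$.

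\textbf{Hessian limits.} By definition $\bH_b(t)/d_b(t)=D^2\overline R_{b,t}(\btheta^b(t))$. Using uniform $C^2$ convergence, uniform continuity of $D^2R_t$, and consistency of $\btheta^b$, this converges uniformly to $D^2R_t(\btheta^*(t))=\bH^*(t)$.

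For the inverse, $\bH^*(t)$ has uniformly bounded inverse on $I$, so
\[
\big(\bH_b/d_b\big)^{-1}=\bH^{*-1}+o(1)
\]
uniformly. From the expression for $d_b$ above and continuity of $\lambda$, $d_b(t)=b\mu_0\lambda(t)(1+o(1))$. Combining the two gives \eqref{eq:general-localized-Hessian-limit}.

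\textbf{Main obstacle.} The delicate point is the uniform separation gap in the constrained case with time-varying $K(t)$. It needs joint continuity of $R_t(\btheta)$ on the graph $\mathcal K_J$ and a careful contradiction argument using compactness of the graph: extract a convergent sequence $(t_m,\btheta_m)\to(t,\btheta)$ with $\btheta\in K(t)$ by closedness, violating uniqueness. A second point is ensuring that $\btheta^b(t)$ stays in $K_0$ in the convex case, which the convexity argument above handles.
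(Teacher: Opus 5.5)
Your proposal is correct and follows essentially the same route as the paper. The ingredients match: averaging $D^jR_u$ over the kernel window plus uniform continuity on $J\times K_0$ for the $C^2$ limit; a compactness gap argument on $\mathcal K_J$ (or on $K_0$ in the convex case) to localize the minimizers; local strong convexity for uniqueness, with the convexity-based globalization in the unconstrained case; and $d_b(t)/b\to\mu_0\lambda(t)$ for the inverse-Hessian expansion.

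The one slip is your untruncated formula $d_b(t)=b\int_{-1}^1 w_1(v)\lambda(t+bv)\,\de v$. It needs $[t-b,t+b]\subset[0,T]$, which fails near $t=0$ when $0\in I$; this case is allowed by Assumption~\ref{ass:ZeroBandwidth-Gen} and is explicitly covered in the paper. There the window is truncated to $[0,t+b]$, but symmetry of $w_1$ still gives $d_b(t)\ge \tfrac{1}{2}\mu_0\, b\inf_J\lambda$, and your averaging argument for the $C^2$ limit goes through unchanged.
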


\begin{proof}
The Gaussian representation and the polynomial upper bounds in
Assumption~\ref{ass:ZeroBandwidth-Gen} imply joint continuity of
$D^jR_t(\btheta)$ on $J\times K_0$ for $j\le2$. The bounds on $d_b$
follow from local positivity of $\lambda$, compactness of $I$, and
$w_1(0)>0$. Since $\overline R_{b,t}$ is a normalized weighted average
of $R_u$ over $|u-t|\le b$, uniform continuity gives
\eqref{eq:zero-bandwidth-risk-C2-limit-Gen}, including when $0\in I$.

In the constrained case, compactness of $\mathcal K_J$
(as defined in Assumption \ref{ass:ZeroBandwidth-Gen}), continuity, and
uniqueness of the minimizer give, for every $\delta>0$, a positive gap
for $R_t(\btheta)-R_t(\btheta^*(t))$ uniformly over
\begin{align}
\{(t,\btheta):t\in I,\ \btheta\in K(t),\
  \|\btheta-\btheta^*(t)\|\ge\delta\}.
\end{align}
In the unconstrained case, the same compactness argument gives the gap
on $I\times K_0$. The $j=0$ part of
\eqref{eq:zero-bandwidth-risk-C2-limit-Gen} therefore places every
constrained minimizer, and every minimizer localized to $K_0$ in the
unconstrained case, in an arbitrarily small neighborhood of the target
curve.

The $j=2$ convergence, \eqref{eq:Hessian-positive-Gen}, and the uniform
interior margin give strict convexity on fixed-radius balls around the
target curve. Thus the constrained minimizers are unique and interior.
In the unconstrained case, convexity makes the localized interior
stationary point global, while local strict convexity makes it the
unique global minimizer. This proves uniform convergence of
$\btheta^b$ to $\btheta^*$; evaluating the uniform Hessian convergence
along these minimizers proves
\eqref{eq:population-Hessian-convergence-Gen}.

Finally, if $I\Subset(0,T)$, the kernel is not truncated by the temporal
boundary and $d_b(t)/b\to\mu_0\lambda(t)$ uniformly. Combining this with
\eqref{eq:population-Hessian-convergence-Gen} and continuity of matrix
inversion on uniformly positive definite operators proves
\eqref{eq:general-localized-Hessian-limit}.
\end{proof}

\subsubsection{Proof of Proposition~\ref{prop:localized-population-second-order-bias-Gen}}
\label{proof:localized-population-second-order-bias-Gen}

Lemma~\ref{lem:localized-population-geometry-Gen} gives unique localized
minimizers converging uniformly to $\btheta^*$. Since
$I\Subset(0,T)$,
the localized score equation can be written
\begin{align}
0&=b^{-1}\nabla R_{b,t}(\btheta^b(t))
   =\bF(b,t,\btheta^b(t)),\\
\bF(b,t,\btheta)
 &:=\int_{-1}^1w_1(v)\lambda(t+bv)\bS(t+bv,\btheta)\,\de v\, ,
\label{eq:general-localized-score}
\end{align}
where $\bS(t,\btheta)$ is defined in
Eq.~\eqref{eq:general-bias-coefficient}.
This is $C^2$ jointly near the zero-bandwidth curve, and
\[
D_{\btheta}\bF(0,t,\btheta^*(t))
 =\mu_0\lambda(t)\bH^*(t)
\]
is uniformly invertible.
The implicit function theorem gives a $C^2$ solution in $b$, uniformly
over $I$, identified with $\btheta^b$ for positive $b$ by uniqueness.
Kernel symmetry makes $\bF$ and its local solution even in
$b$, so $\partial_b\btheta^b|_{b=0}=0$. Differentiating twice gives
\begin{align}
\left.\partial_b^2\btheta^b(t)\right|_{b=0}
 =-\frac{\mu_2}{\mu_0\lambda(t)}\bH^*(t)^{-1}
 \left[\left.\partial_u^2(\lambda(u)\bS(u,\btheta))
 \right|_{u=t,\,\btheta=\btheta^*(t)}\right].
\end{align}
Taylor's theorem with a uniform remainder proves
\eqref{eq:general-population-bias}--\eqref{eq:general-bias-coefficient}.

\subsubsection{Proof of Proposition~\ref{propo:Lin-Effect}}
\label{proof:Lin-Effect}

As pointed out in previous proofs already, the posterior log-odds satisfy
\begin{align}
\log\frac{\prob(\Sigma=\rho\mid\bX_t=\bz)}
              {\prob(\Sigma=0\mid\bX_t=\bz)}
&=-r(\rho)+\frac{a(t)}{\tau(t)^2}
 \left\langle\bx(\rho)-\bx(0),\bz\right\rangle.
\label{eq:general-posterior-log-odds}
\end{align}
Thus $\pi_\rho^*(\bz,t)=\prob(\Sigma=\rho\mid\bX_t=\bz)$,
$\hbm(\bz,t;\btheta^*(t))=\bm(\bz,t)$, and $\bh_*$ is the true
probability-flow velocity with density $p_t$.

We next compute the differential of the denoiser.  For $\rho\geq1$, the perturbation of the $\rho$-th logit at
$(\bz,t)$ is
\begin{align}
\de u_\rho
=\left\langle\bDelta_\rho(t),\overline{\bz}\right\rangle,
\qquad \de u_0=0.
\end{align}
The softmax differential is
\begin{align}
D\pi_\alpha^*[\bDelta]
=\pi_\alpha^*
 \left\{\de u_\alpha-
 \sum_{\rho=1}^{k-1}\pi_\rho^*\de u_\rho\right\}.
\end{align}
Since $\bm=\sum_{\alpha=0}^{k-1}\pi_\alpha^*\bx(\alpha)$, summing the last
display against $\bx(\alpha)$ gives
\begin{align}
D_{\btheta}\hbm(\bz,t;\btheta^*(t))[\bDelta(t)]
&=\sum_{\rho=1}^{k-1}\pi_\rho^*(\bz,t)
 \{\bx(\rho)-\bm(\bz,t)\}
 \left\langle\bDelta_\rho(t),\overline{\bz}\right\rangle\nonumber\\
&=\mathcal M_t(\bz)[\bDelta(t)].
\label{eq:general-denoiser-differential-proof}
\end{align}
Write $I=[t_-,t_+]$, $\varepsilon=\|\bDelta\|_\infty$, and
$\bh_{\eta\bDelta}$ for the velocity with parameters
$\btheta^*+\eta\bDelta$, $0\leq\eta\leq1$. Outside $I$ all these
velocities equal $\bh_*$. The first velocity variation is
\begin{align}
\bv_t[\bDelta](\bz)
&:=\left.\partial_\eta\bh_{\eta\bDelta}(\bz,t)\right|_{\eta=0}
 =c_1(t)\mathcal M_t(\bz)[\bDelta(t)],\\
\bh_{\bDelta}(\bz,t)-\bh_*(\bz,t)
&=\bv_t[\bDelta](\bz)
 +O\bigl(\varepsilon^2(1+\|\bz\|)^2\bigr).
\label{eq:general-velocity-first-variation}
\end{align}

For sufficiently small $\varepsilon$, all parameters
$\btheta^*(t)+\eta\bDelta(t)$ lie in a fixed compact neighborhood of
the target curve. Write
\begin{align}
\bh_{\eta\bDelta}(\bz,t)
 &=c_2(t)\bz+\bb_{\eta,t}(\bz),
&
\bb_{\eta,t}(\bz)
 &:=c_1(t)\hbm(\bz,t;\btheta^*(t)+\eta\bDelta(t)).
\end{align}
The softmax probabilities lie in $[0,1]$, the embeddings are fixed, and
the slope parameters range over a compact set. Consequently, for every
fixed  order $r\geq0$,
\begin{align}
\sup_{t\in I,\,\eta\in[0,1],\,\bz\in\reals^d}
 \|D_{\bz}^r\bb_{\eta,t}(\bz)\|&\leq C_r,\label{eq:general-velocity-spatial-bounds}\\
\|\partial_\eta^jD_{\bz}^r\bb_{\eta,t}(\bz)\|
 &\leq C_r\varepsilon^j(1+\|\bz\|)^j,
 \qquad j=1,2.
\label{eq:general-velocity-parameter-bounds}
\end{align}
Thus the velocities are measurable in time, smooth in space, and
globally Lipschitz in space, with constants uniform in $t$ and $\eta$.

Let $\bPhi_{t_-,t}^\eta$ be the forward flow of
$\bh_{\eta\bDelta}$ and let
$\Psi_t^\eta:=(\bPhi_{t_-,t}^\eta)^{-1}$. If
$\bX_t^\eta=\bPhi_{t_-,t}^\eta(\bx)$, variation of constants gives
\begin{align}
\bX_t^\eta
 &=\alpha_t\bx+
   \alpha_t\int_{t_-}^t\alpha_u^{-1}
      \bb_{\eta,u}(\bX_u^\eta)\,\de u,
&
\alpha_t&:=\exp\left\{\int_{t_-}^t c_2(u)\,\de u\right\}
 =\frac{\tau(t)}{\tau(t_-)}.
\end{align}
The factors $\alpha_t$ are bounded above and away from zero on $I$, and
\eqref{eq:general-velocity-spatial-bounds} therefore yields
\begin{align}
\sup_{t\in I,\,\eta\in[0,1],\,\bz\in\reals^d}
 \|\Psi_t^\eta(\bz)-\alpha_t^{-1}\bz\|&\leq C,
&
1+\|\Psi_t^\eta(\bz)\|&\asymp1+\|\bz\|.
\label{eq:general-inverse-flow-growth}
\end{align}

For completeness, the spatial Jacobian
$\bA_t^\eta:=D_{\bx}\bX_t^\eta$ and the first two parameter variations
$\bY_t^\eta:=\partial_\eta\bX_t^\eta$ and
$\bW_t^\eta:=\partial_\eta^2\bX_t^\eta$ satisfy
\begin{align}
\dot{\bA}_t^\eta
 &=D_{\bz}\bh_{\eta\bDelta}(\bX_t^\eta,t)\bA_t^\eta,
& \bA_{t_-}^\eta&=\id_d,\\
\dot{\bY}_t^\eta
 &=D_{\bz}\bh_{\eta\bDelta}\bY_t^\eta
   +\partial_\eta\bh_{\eta\bDelta},
& \bY_{t_-}^\eta&=0,\\
\dot{\bW}_t^\eta
 &=D_{\bz}\bh_{\eta\bDelta}\bW_t^\eta
   +D_{\bz}^2\bh_{\eta\bDelta}[\bY_t^\eta,\bY_t^\eta]
   +2D_{\bz}\partial_\eta\bh_{\eta\bDelta}[\bY_t^\eta]
   +\partial_\eta^2\bh_{\eta\bDelta},
& \bW_{t_-}^\eta&=0,
\label{eq:general-flow-variational-equations}
\end{align}
where the coefficients without displayed arguments are evaluated at
$(\bX_t^\eta,t)$. Equations
\eqref{eq:general-velocity-spatial-bounds}--
\eqref{eq:general-velocity-parameter-bounds} and Gronwall's inequality
bound $\bA_t^\eta$ and its inverse uniformly and give, for $j=1,2$ and
$0\leq r\leq2$,
\begin{align}
\|\partial_\eta^jD_{\bz}^r\Psi_t^\eta(\bz)\|
 &\leq C\varepsilon^j(1+\|\bz\|)^N.
\label{eq:general-inverse-flow-parameter-bounds}
\end{align}
Here the inverse-flow bounds follow by differentiating
$\bPhi_{t_-,t}^\eta(\Psi_t^\eta(\bz))=\bz$ in $\bz$ and $\eta$.

Denote by $p_t^{\eta\bDelta}(\bz)$ the density at time $t$ for the
velocity $\bh_{\eta\bDelta}$. Since the perturbation vanishes before
$t_-$, its initial density is $p_{t_-}^{\eta\bDelta}=p_{t_-}$. The
change-of-variables formula therefore gives
\begin{align}
p_t^{\eta\bDelta}(\bz)
 &=p_{t_-}(\Psi_t^\eta(\bz))\,|\det D_{\bz}\Psi_t^\eta(\bz)|,\label{eq:DensityJacobian}
\end{align}
which implies the bounds
\begin{align}
\bigl|\partial_\eta^j p_t^{\eta\bDelta}(\bz)\bigr|
 &\leq C\varepsilon^j(1+\|\bz\|)^N e^{-c\|\bz\|^2},
 \qquad j=0,1,2.
\label{eq:general-density-parameter-bounds}
\end{align}
Here and below constants are uniform over
$\varepsilon\in(0,\varepsilon_0)$ (for $\eps_0>0$ sufficiently small), $t\in I$, and $\eta\in[0,1]$.
For $j=0$, the convention $\varepsilon^0=1$ means only that the density
itself is uniformly dominated; no smallness in $\varepsilon$ is claimed.
The same argument applied after one spatial differentiation gives
\begin{align}
\big\|\partial_\eta^j\nabla p_t^{\eta\bDelta}(\bz)\big\|
 &\leq C\varepsilon^j(1+\|\bz\|)^N e^{-c\|\bz\|^2},
 \qquad j=0,1,2.
\label{eq:general-density-parameter-bounds-gradient}
\end{align}
The Gaussian factor follows from
\eqref{eq:general-inverse-flow-growth} with the Gaussian decay of
$p_{t_-}$ and its derivatives; all remaining factors in the
differentiated density formula \eqref{eq:DensityJacobian} have polynomial growth.

Define the signed density variation
$\dot p_t[\bDelta]:=\left.\partial_\eta
p_t^{\eta\bDelta}\right|_{\eta=0}$, where the dot here denotes
variation in $\eta$, not time differentiation.
Taylor's formula and \eqref{eq:general-density-parameter-bounds} give
\begin{align}
|p_t^{\bDelta}(\bz)-p_t(\bz)|
 &\leq C\varepsilon(1+\|\bz\|)^N e^{-c\|\bz\|^2},
 \label{eq:general-density-weighted-bound}\\
\sup_{t\in I}
 \|p_t^{\bDelta}-p_t-\dot p_t[\bDelta]\|_{L^1(\reals^d)}
 &\leq C\varepsilon^2.
 \label{eq:general-density-L1-expansion}
\end{align}

\paragraph{Transport over basins.}
Let $\mathcal U_{s,t}$ be the transport operator on signed densities
for the true flow from $s$ to $t$. Explicitly, for $f\in L^1(\reals^d)$,
\begin{align}
\mathcal U_{s,t}[f](\bz)
 &:=f\bigl(\bPhi_{t,s}^*(\bz)\bigr)
   \left|\det D_{\bz}\bPhi_{t,s}^*(\bz)\right|,
 \qquad t_-\leq s\leq t\leq t_+,
\label{eq:general-true-transport-operator}
\end{align}
where $\bPhi_{t,s}^*=(\bPhi_{s,t}^*)^{-1}$. Thus
$\mathcal U_{s,t}[f]$ is the density of the pushforward of the signed
measure $f(\bz)\,\de\bz$ by $\bPhi_{s,t}^*$; in particular it preserves
total signed mass and is an $L^1$ contraction.
Differentiating the continuity equation in $\eta$ gives, in its
integrated or distributional sense,
\begin{align}
\partial_t\dot p_t[\bDelta]
 +\nabla_{\bz}\mathbin\cdot(\bh_*\dot p_t[\bDelta])
 &=-\nabla_{\bz}\mathbin\cdot(p_t\bv_t[\bDelta]),
\qquad
\dot p_{t_-}[\bDelta]=0.
\end{align}
Hence
\begin{align}
\dot p_v[\bDelta]
 &=-\int_{t_-}^v
 \mathcal U_{t,v}\!\left[
 \nabla_{\bz}\mathbin\cdot(p_t\bv_t[\bDelta])\right]\,\de t,
 \qquad v\in I.
\label{eq:general-linearized-density-transport}
\end{align}
The integrals are well-defined in $L^1$ by the Gaussian bounds
\eqref{eq:general-density-parameter-bounds}, \eqref{eq:general-density-parameter-bounds-gradient}.
The basin transport identity
\eqref{eq:general-measurable-basin-transport} implies, for every
integrable signed density $f$, and any $t<v<T$,
\begin{align}
\int_{\cA_\sigma(v)}\mathcal U_{t,v}[f](\bz)\,\de\bz
 &=\int_{\cA_\sigma(t)}f(\bz)\,\de\bz.
\label{eq:general-basin-transfer-identity}
\end{align}
This is a change of variables.

We also record an identity for later use. Put
$\bV_t^{\bDelta}:=\bh_{\bDelta}(\cdot,t)-\bh_*(\cdot,t)$.
Subtracting the two continuity equations gives the source
$-\nabla\mathbin\cdot(p_t^{\bDelta}\bV_t^{\bDelta})$ under the true
transport. Consequently,
\begin{align}
\int_{\cA_\sigma(v)}p_v^{\bDelta}(\bz)\,\de\bz-q(\sigma)
 &=-\int_{t_-}^v\int_{\cA_\sigma(t)}
 \nabla_{\bz}\mathbin\cdot
 \left\{p_t^{\bDelta}(\bz)\bV_t^{\bDelta}(\bz)\right\}
 \,\de\bz\,\de t,
 \qquad v\in I.
\label{eq:general-exact-basin-volume}
\end{align}
Further $\nabla_{\bz}\mathbin\cdot
\left\{p_t^{\bDelta}(\bz)\bV_t^{\bDelta}(\bz)\right\}$ is integrable uniformly in time, 
by the bounds \eqref{eq:general-density-parameter-bounds}, \eqref{eq:general-density-parameter-bounds-gradient}
 above.

After $t_+$ the perturbed flow equals the true flow.
Lemma~\ref{lem:terminal-basins-Gen} and absolute continuity of
$p_{t_+}^{\bDelta}$ therefore imply
\begin{align}
\rP_{\bDelta}(\{\bx(\sigma)\})
 &=\int_{\cA_\sigma(t_+)}p_{t_+}^{\bDelta}(\bz)\,\de\bz.
\label{eq:general-terminal-mass-volume}
\end{align}
In particular, the perturbed terminal law has no mass outside the
embeddings. Equations \eqref{eq:general-linearized-density-transport}
and \eqref{eq:general-basin-transfer-identity}, with $v=t_+$, identify
the derivative of \eqref{eq:general-terminal-mass-volume} with the
linear functional defined by
\eqref{eq:general-linear-response-pairing}--\eqref{eq:general-linear-response-kernel}.
Integration over the
basin partition is an $L^1$ contraction into the class-mass vector:
\begin{align}
\sum_{\sigma=0}^{k-1}
 \left|\int_{\cA_\sigma(t_+)}f(\bz)\,\de\bz\right|
 &\leq\|f\|_{L^1(\reals^d)}.
\label{eq:general-basin-L1-contraction}
\end{align}
Thus \eqref{eq:general-density-L1-expansion} gives the claimed
$O(\|\bDelta\|_\infty^2)$ remainder in total variation, uniformly over
bounded measurable perturbations.

Substituting the velocity variation
\eqref{eq:general-velocity-first-variation} into
\eqref{eq:general-linearized-density-transport} and using
\eqref{eq:general-basin-transfer-identity}, the coefficient of
$\Delta_{j,\rho}(t)$ in the terminal mass variation is precisely
\eqref{eq:general-linear-response-kernel}. This gives the pairing
\eqref{eq:general-linear-response-pairing}. The kernel integrands have
uniform polynomial-times-Gaussian bounds, so they are absolutely
integrable and the kernels are bounded.

To prove the continuity claim $\bL_\sigma\in C(I;E)$, fix $\bD\in E$ and define
\begin{align}
F_{t,\bD}(\bz)
 &:=\nabla_{\bz}\mathbin\cdot
 \{c_1(t)p_t(\bz)\mathcal M_t(\bz)[\bD]\},
&
\tbPsi_t&:=\boldsymbol\Phi^*_{t_+,t}.
\end{align}
The definition of the response kernel first gives
\begin{align}
\langle\bL_\sigma(t),\bD\rangle
 &=-\int_{\cA_\sigma(t)}F_{t,\bD}(\bz)\,\de\bz.
\end{align}
By \eqref{eq:general-measurable-basin-transport},
$\bPhi^*_{t,t_+}(\cA_\sigma(t))=\cA_\sigma(t_+)$. Hence the substitution
$\by=\bPhi^*_{t,t_+}(\bz)$, or equivalently
$\bz=\tbPsi_t(\by)$, changes the moving basin to the fixed measurable set
$\cA_\sigma(t_+)$ and yields
\begin{align}
\langle\bL_\sigma(t),\bD\rangle
 &=-\int_{\cA_\sigma(t_+)}
 F_{t,\bD}(\tbPsi_t(\by))\,|\det D\tbPsi_t(\by)|\,\de\by.
\label{eq:general-response-fixed-domain}
\end{align}
The integrand is continuous in $t$ and bounded in absolute value by
$C\|\bD\|_{\rm F}(1+\|\by\|)^N e^{-c\|\by\|^2}$ uniformly on $I$.
Dominated convergence, applied to a basis of the finite-dimensional
space $E$, proves $\bL_\sigma\in C(I;E)$.

Finally, to prove \eqref{eq:general-response-kernel-conservation}, fix $t\in I$, $0\leq j\leq d$, and
$1\leq\rho\leq k-1$, and set
\begin{align}
\bG_{t,j,\rho}(\bz)
 &:=p_t(\bz)\pi_\rho^*(\bz,t)
   \{\bx(\rho)-\bm(\bz,t)\}\overline z_j.
\end{align}
Because the measurable basins partition $\reals^d$ up to a null set and
the divergence is absolutely integrable,
\begin{align}
\sum_{\sigma=0}^{k-1}[\bL_\sigma(t)]_{j,\rho}
 &=-c_1(t)\int_{\reals^d}
   \nabla_{\bz}\mathbin\cdot\bG_{t,j,\rho}(\bz)\,\de\bz.
\label{eq:general-response-kernel-sum-proof}
\end{align}
To evaluate the last integral, choose smooth cutoffs $\chi_R$ that equal
one on $\{\|\bz\|\leq R\}$, vanish outside
$\{\|\bz\|\leq2R\}$, and satisfy $\|\nabla\chi_R\|\leq C/R$.
The polynomial-times-Gaussian bounds on $\bG_{t,j,\rho}$ and its
divergence allow dominated convergence and integration by parts, giving
\begin{align}
\int_{\reals^d}\nabla_{\bz}\mathbin\cdot\bG_{t,j,\rho}\,\de\bz
 &=\lim_{R\to\infty}\int_{\reals^d}
   \chi_R\nabla_{\bz}\mathbin\cdot\bG_{t,j,\rho}\,\de\bz\nonumber\\
 &=-\lim_{R\to\infty}\int_{\reals^d}
   \langle\nabla\chi_R,\bG_{t,j,\rho}\rangle\,\de\bz=0.
\end{align}
Equation~\eqref{eq:general-response-kernel-sum-proof} therefore proves
$\sum_\sigma\bL_\sigma(t)=0$. Substitution into
\eqref{eq:general-linear-response-pairing} gives
$\sum_\sigma\cL_\sigma(\bDelta)=0$, completing
\eqref{eq:general-response-kernel-conservation}.

\subsubsection{Proof of Proposition~\ref{prop:shrinking-bandwidth-erm-Gen}}
\label{proof:shrinking-bandwidth-erm-Gen}

Write $b=b_n$ and
$e_n=\sqrt{\log(1/b)/(nb)}$. The bandwidth conditions imply that,
eventually, $n^{-1/2}<b<n^{-1/4}$. Hence
$\log(1/b)\asymp\log n$, $nb>\sqrt n$, and $e_n\to0$.
All constants below may depend on $I,J,K_0$, but not on $n$.

\paragraph{Uniform empirical bounds.}
We first establish
\begin{align}
\max_{0\le j\le2}\sup_{t\in I,\,\btheta\in K_0}
 \left\|b^{-1}D_{\btheta}^j
       (\hR_{b,t}-R_{b,t})(\btheta)\right\|
 &=O_{\rP}(e_n).
\label{eq:shrinking-empirical-bounds-Gen}
\end{align}
Let $P$ be the law of $Z=(U,\Sigma,\bG)$, let $P_n$ be its empirical
measure, and write
$\bY=a(U)\bx(\Sigma)+\tau(U)\bG$. For $j=0,1,2$, define the
tensor-valued functions
\begin{align}
f^{(j)}_{b,t,\btheta}(Z)
 &:=w_b(U-t)D_{\btheta}^j\ell_\Sigma(\bY;\btheta).
\label{eq:shrinking-empirical-functions-Gen}
\end{align}
Differentiation under the expectation is justified by
\eqref{eq:loss-growth-Gen}, so
$(\hE_n-\E)f^{(j)}_{b,t,\btheta}
=D_{\btheta}^j(\hR_{b,t}-R_{b,t})(\btheta)$.
Since $E$ has fixed finite dimension, it suffices to bound the finitely
many coordinates of these tensors.

For small $b$, a nonzero kernel weight with $t\in I$ implies $U\in J$.
On $J$, the functions $a,\tau,\lambda$ are bounded, and the Gaussian
representation gives uniformly bounded conditional moments of every
fixed polynomial in $\|\bY\|$. Consequently,
\begin{align}
\E\|f^{(j)}_{b,t,\btheta}\|^2
 &\leq C\int_{|u-t|\leq b}
   \E[(1+\|\bY\|)^{2\kappa}\mid U=u]\lambda(u)\,\de u
 \leq Cb.
\label{eq:shrinking-summand-variance-Gen}
\end{align}

Set
$\widetilde f^{(j)}_{b,t,\btheta}
:=f^{(j)}_{b,t,\btheta}
\mathbf1_{\{\|\bG\|\leq D\sqrt{\log n}\}}$.
Fix $m>1$. Choosing $D$ sufficiently large gives
\begin{align}
\prob\left(\max_{1\leq i\leq n}\|\bG_i\|>
 D\sqrt{\log n}\right)&=O(n^{-m}),\\
\sup_{t,\btheta,j}b^{-1}
 \E\|f^{(j)}_{b,t,\btheta}-\widetilde f^{(j)}_{b,t,\btheta}\|
 &=O(n^{-m}).
\label{eq:shrinking-truncation-bounds-Gen}
\end{align}
Here the factor $b$ in the tail expectation again comes from kernel
support, and $\bG$ is independent of $U$. The truncated functions 
are upper bounded by $B_n=C(\log n)^{\kappa_0}$ for some fixed $\kappa_0$ and
variance at most $Cb$. Hence, by Bernstein inequality
\begin{align}
  \prob\left(
   \max_{0\leq j\leq2}
   \|(\hE_n-\E)\widetilde f^{(j)}_{b,t,\btheta}\|
   >C\left\{\sqrt{\frac{bx}{n}}+\frac{B_nx}{n}\right\}
   \right)
   &\leq C e^{-x}.
  \end{align}

Take a net in $I\times K_0$ with time mesh at most
$b n^{-3/\alpha}$ and parameter mesh at most $n^{-3}$.
Its cardinality is at most $Cn^q$ for a fixed $q$, since
$b^{-1}<n^{1/2}$ and $K_0$ is finite-dimensional and compact.
Hence, a union bound over this net gives, for $x\geq1$,
\begin{align}
\prob\left(
 \max_{\text{net},\,0\leq j\leq2}
 \|(\hE_n-\E)\widetilde f^{(j)}_{b,t,\btheta}\|
 >C\left\{\sqrt{\frac{bx}{n}}+\frac{B_nx}{n}\right\}
 \right)
 &\leq Cn^q e^{-x}.
\label{eq:shrinking-grid-Bernstein-Gen}
\end{align}
Choosing $x=A\log n$ with $A>q+1$ and dividing by $b$ bounds the
maximum on the net by
\begin{align}
  \max_{\text{net},\,0\leq j\leq2} \frac{1}{b}\|(\hE_n-\E)\widetilde f^{(j)}_{b,t,\btheta}\|
  =O_{\rP}\left(\sqrt{\frac{\log n}{nb}}+
                  \frac{B_n\log n}{nb}\right)
 &=O_{\rP}(e_n),
\label{eq:shrinking-grid-rate-Gen}
\end{align}
because
$B_n\sqrt{\log n}/\sqrt{nb}\to0$ by Assumption \ref{ass:Bandwidth-Gen}.

To pass from the net to all of $I\times K_0$, kernel H\"older
continuity and the parameter derivative bounds give
\begin{align}
\|\widetilde f^{(j)}_{b,t,\btheta}
 -\widetilde f^{(j)}_{b,s,\boldsymbol\vartheta}\|
 &\leq CB_n\left\{
    (|t-s|/b)^\alpha+\|\btheta-\boldsymbol\vartheta\|
   \right\},\qquad j=0,1,2.
\label{eq:shrinking-grid-modulus-Gen}
\end{align}
Increase $B_n$ if necessary to bound derivatives through order three.
Convexity of $K_0$ allows the mean-value bound along the parameter
segment. For a point and its nearest net point, the right-hand side is
$CB_n n^{-3}$, so the normalized interpolation error for $\hE_n-\E$ is
at most $CB_n n^{-3}/b=o(e_n)$. Indeed, $b^{-1}<n^{1/2}$ and
$e_n\geq n^{-1/2}$ for large $n$. Together with
\eqref{eq:shrinking-truncation-bounds-Gen}, this proves
\eqref{eq:shrinking-empirical-bounds-Gen}.

We also need a uniform bound on empirical third derivatives. Choose
$M(\by)=C(1+\|\by\|)^\kappa$ dominating their norms on $K_0$.
Apply the same truncation and time-net argument to
$w_b(U-t)M(\bY)$; its expectation is $O(b)$ and its centered empirical
average is $O_{\rP}(be_n)$ uniformly in $t$. Thus
\begin{align}
\sup_{t\in I}\frac1{nb}\sum_{i=1}^n
 w_b(t_i-t)M(\bY_i)=O_{\rP}(1).
\label{eq:third-envelope-Gen}
\end{align}
This step uses the envelope itself, so no fourth parameter derivative
is required. 

\paragraph{Localization, existence, and uniqueness.}
Recall $d_b(t)\asymp b$ from
Lemma~\ref{lem:localized-population-geometry-Gen}. The $j=0$ empirical
bound \eqref{eq:shrinking-empirical-bounds-Gen} and \eqref{eq:zero-bandwidth-risk-C2-limit-Gen} imply
\begin{align}
\eta_n:=\sup_{t\in I,\,\btheta\in K_0}
 \left|\frac{\hR_{b,t}(\btheta)}{d_b(t)}-R_t(\btheta)\right|
 &=o_{\rP}(1).
\label{eq:shrinking-risk-consistency-Gen}
\end{align}
In the constrained case put $\mathcal D(t)=K(t)$, and in the
unconstrained case first put $\mathcal D(t)=K_0$. Each
$\mathcal D(t)$ is compact and contains the unique local minimizer  $\btheta^*(t)$.
The uniform bound \eqref{eq:shrinking-risk-consistency-Gen} 
implies that all empirical minimizers over $\mathcal D(t)$ converge uniformly
to $\btheta^*(t)$ in probability.

By uniform positive definiteness of $\bH^*(t)$, continuity of the
population Hessian, and the uniform interior margins, we can choose $r,c>0$
so that the balls $\Ball(\btheta^*(t),r)$ lie in
$\interior(\mathcal D(t))$ and
$\nabla^2R_t(\btheta)\succeq2c\,\operatorname{Id}_E$ on these balls.
The $j=2$ empirical bound and population Hessian convergence then give,
with probability tending to one,
\begin{align}
\nabla^2\hR_{b,t}(\btheta)
 &\succeq c\,d_b(t)\operatorname{Id}_E
 \succeq c'b\,\operatorname{Id}_E,
 \qquad t\in I,\ \btheta\in B(\btheta^*(t),r).
\label{eq:shrinking-local-strong-convexity-Gen}
\end{align}
Hence the minimizer $\hbtheta_b(t)\in\Ball(\btheta^*(t),r/2)$ is unique 
in the interior of $\mathcal D(t)$ and satisfies the first-order condition $\nabla\hR_{b,t}(\hbtheta_b(t))=0$.
In the convex uncostrained case, it is globally unique because of strong convexity at $\hbtheta_b(t)$.

\paragraph{Uniform rate and linearization.}
Write $\bdelta_n(t):=\hbtheta_b(t)-\btheta^b(t)$ and define the averaged
empirical Hessian by
\begin{align}
\widehat{\bH}_n(t)
 &:=\int_0^1\nabla^2\hR_{b,t}
       (\btheta^b(t)+u\bdelta_n(t))\,\de u.
\label{eq:shrinking-averaged-Hessian-Gen}
\end{align}
The population minimizers also converge uniformly to $\btheta^*$ by
Lemma~\ref{lem:localized-population-geometry-Gen}, so the entire segment
in this integral lies in the balls of
\eqref{eq:shrinking-local-strong-convexity-Gen} with probability tending
to one. In particular,
\begin{align}
\sup_{t\in I}\|\widehat{\bH}_n(t)^{-1}\|_{\rm op}
 &=O_{\rP}(b^{-1}).
\end{align}
The empirical first-order condition and the
fundamental theorem of calculus give
\begin{align}
0&=\widehat{\bS}_n(t,\hbtheta_b(t))
 =\widehat{\bS}_n(t,\btheta^b(t))
   +\widehat{\bH}_n(t)[\bdelta_n(t)],\\
\bdelta_n(t)
 &=-\widehat{\bH}_n(t)^{-1}
       [\widehat{\bS}_n(t,\btheta^b(t))].
\label{eq:shrinking-exact-score-expansion-Gen}
\end{align}
Since $\nabla R_{b,t}(\btheta^b(t))=0$, the $j=1$ empirical bound
in Eq.~\eqref{eq:shrinking-empirical-bounds-Gen}
implies
\begin{align}
\sup_{t\in I}\|\widehat{\bS}_n(t,\btheta^b(t))\|
 &=O_{\rP}(be_n).
\label{eq:shrinking-score-rate-Gen}
\end{align}
Combining this with \eqref{eq:shrinking-exact-score-expansion-Gen}
proves $\|\bdelta_n\|_\infty=O_{\rP}(e_n)$, as claimed in
\eqref{eq:general-uniform-rate}.

To replace the averaged empirical Hessian by $\bH_b(t)$, split
\begin{align}
\widehat{\bH}_n(t)-\bH_b(t)
 &=\{\nabla^2\hR_{b,t}(\btheta^b(t))
         -\nabla^2R_{b,t}(\btheta^b(t))\}\nonumber\\
 &\quad+\int_0^1\{\nabla^2\hR_{b,t}
       (\btheta^b(t)+u\bdelta_n(t))
       -\nabla^2\hR_{b,t}(\btheta^b(t))\}\,\de u.
\label{eq:shrinking-Hessian-decomposition-Gen}
\end{align}
The first term is uniformly $O_{\rP}(be_n)$ by the $j=2$ empirical
bound in Eq.~\eqref{eq:shrinking-empirical-bounds-Gen}. 
By the mean-value theorem and \eqref{eq:third-envelope-Gen},
the norm of the second term is at most
\begin{align}
\frac{\|\bdelta_n(t)\|}{2n}
 \sum_{i=1}^n w_b(t_i-t)M(\bY_i),
\end{align}
which is uniformly $O_{\rP}(b\|\bdelta_n\|_\infty)
=O_{\rP}(be_n)$. Consequently,
\begin{align}
\|\widehat{\bH}_n-\bH_b\|_\infty&=O_{\rP}(be_n).
\end{align}
Both inverse Hessians have uniform norm $O_{\rP}(b^{-1})$, with the
population bound deterministic. The identity
\begin{align}
\widehat{\bH}_n^{-1}-\bH_b^{-1}
 &=\widehat{\bH}_n^{-1}\circ
       (\bH_b-\widehat{\bH}_n)\circ\bH_b^{-1}
\end{align}
therefore gives
$\|\widehat{\bH}_n^{-1}-\bH_b^{-1}\|_\infty
=O_{\rP}(e_n/b)$. Define
\begin{align}
\br_n(t)
 &:=\{\bH_b(t)^{-1}-\widehat{\bH}_n(t)^{-1}\}
       [\widehat{\bS}_n(t,\btheta^b(t))].
\end{align}
Equations \eqref{eq:shrinking-exact-score-expansion-Gen} and
\eqref{eq:shrinking-score-rate-Gen} now yield
\begin{align}
\bdelta_n(t)
 &=-\bH_b(t)^{-1}[\widehat{\bS}_n(t,\btheta^b(t))]+\br_n(t),\\
\|\br_n\|_\infty
 &=O_{\rP}(e_n/b)\,O_{\rP}(be_n)=O_{\rP}(e_n^2).
\end{align}
Finally,
\begin{align}
\sqrt n\,e_n^2
 &=\frac{\log(1/b)}{\sqrt n\,b}\longrightarrow0
\end{align}
by Assumption~\ref{ass:Bandwidth-Gen}, proving the last assertion of
\eqref{eq:general-asymptotic-linearity}.

\subsubsection{Proof of Proposition~\ref{propo:AsympGen}}
\label{proof:AsympGen}

Propositions~\ref{prop:localized-population-second-order-bias-Gen} and
\ref{prop:shrinking-bandwidth-erm-Gen} imply
\begin{align}
\sqrt n\int_I\langle\hbtheta_b-\btheta^*,\bphi\rangle\,\de t
=-\sqrt n\int_I\langle\bH_b(t)^{-1}[
 \widehat{\bS}_n(t,\btheta^b(t))],\bphi(t)\rangle\,\de t
 +o_{\rP}(1).
\label{eq:general-integrated-linearization}
\end{align}
Indeed the smoothing bias term is $O(\sqrt n b^2)=o(1)$ and the stochastic
remainder is $O_{\rP}(\log(1/b)/(\sqrt n b))=o_{\rP}(1)$.
For $u\in[0,T]$, $\sigma\in\{0,\ldots,k-1\}$, and
$\bg\in\reals^d$, set
\begin{align}
F_{b,\bphi}(u,\sigma,\bg)
&:=\int_I\left\langle
 \bphi(t),\bH_b(t)^{-1}\left[
 w_b(u-t)
 \bpsi_\sigma\big(a(u)\bx(\sigma)+\tau(u)\bg;\btheta^b(t)\big)
 \right]\right\rangle
 \,\de t.
\label{eq:general-smoothed-score}
\end{align}
Applying Fubini's theorem to the leading term in
\eqref{eq:general-integrated-linearization} gives
\begin{align}
\sqrt n\int_I
 \left\langle\hbtheta_b(t)-\btheta^b(t),\bphi(t)\right\rangle
 \,\de t
&=-\frac{1}{\sqrt n}\sum_{i=1}^n
 F_{b,\bphi}(t_i,\sigma_i,\bG_i)+o_{\rP}(1),
\label{eq:general-iid-linear-sum}
\end{align}
where
$\bY_i=a(t_i)\bx(\sigma_i)+\tau(t_i)\bG_i$.
The summands in \eqref{eq:general-iid-linear-sum} are i.i.d. within each row
and have mean zero, since
\begin{align}
\E_{\Sigma,U,\bG}\left[
 w_b(U-t)
 \bpsi_\Sigma\big(a(U)\bx(\Sigma)+\tau(U)\bG;\btheta^b(t)\big)
 \right]=0
\label{eq:general-mixture-score-centering}
\end{align}
by the population first-order condition $\nabla R_{b,t}(\btheta^b(t))=0$.  

The Hessian limit \eqref{eq:general-localized-Hessian-limit}, the convergence
$\btheta^b\to\btheta^*$, and the approximate-identity property of the kernels
\[
\big\{(b\mu_0)^{-1}w_1(\cdot/b):b>0\big\}
\]
imply, as $b\to 0$,
\begin{align}
F_{b,\bphi}(U,\Sigma,\bG)
&\longrightarrow
F_{0,\bphi}(U,\Sigma,\bG)\nonumber\\
&:=\frac{1}{\lambda(U)}
 \left\langle\bphi(U),\bH^*(U)^{-1}\left[
 \bpsi_\Sigma\big(
 a(U)\bx(\Sigma)+\tau(U)\bG;\btheta^*(U)\big)
 \right]\right\rangle
\label{eq:general-smoothed-score-limit}
\end{align}
in $L^2$, defining the limit to be zero for $U\notin I$.
Extend $\bphi$ by zero outside $I$; if it is nonzero at an endpoint,
the boundary layer has width $O(b)$ and a uniformly square-integrable
envelope, so it does not affect the limit in Eq.~\eqref{eq:general-smoothed-score-limit}. Indeed,
$|F_{b,\bphi}(U,\Sigma,\bG)|\le C(1+\|\bG\|)^\kappa$ uniformly in small $b$,
since $\|\bH_b^{-1}\|=O(b^{-1})$ and
$\int_I w_b(U-t)\,\de t=O(b)$. Thus all these variables have a common
$L^{2+\eta}$ upper bound. Consequently the triangular array in
\eqref{eq:general-iid-linear-sum} satisfies the Lindeberg condition, and its
variance converges to
\begin{align}
\E F_{0,\bphi}(U,\Sigma,\bG)^2
&=\int_I\frac{1}{\lambda(t)}
 \left\langle\bphi(t),
 \bH^*(t)^{-1}\left[
 \bJ^*(t)\left[\bH^*(t)^{-1}[\bphi(t)]\right]
 \right]\right\rangle\,\de t\\
&=Q^*(\bphi).
\label{eq:general-score-limit-variance}
\end{align}
The Lindeberg--Feller theorem applied to
\eqref{eq:general-iid-linear-sum}, followed by
\eqref{eq:general-integrated-linearization} and Slutsky's theorem, proves
\eqref{eq:general-integrated-parameter-clt}.  Applying the same argument to
linear combinations of finitely many test functions proves the final joint
convergence claim.

\subsubsection{Proof of Theorem~\ref{thm:Overall-Limit-Gen}}
\label{proof:Overall-Limit-Gen}

Fix $0<\varepsilon<T/2$ and write
$I_\varepsilon=[\varepsilon,T-\varepsilon]$. Let
$\bdelta_{n,\varepsilon}(t)=\hbtheta_b(t)-\btheta^*(t)$ on
$I_\varepsilon$ and zero elsewhere.
The bias and empirical rate imply
\begin{align}
\|\bdelta_{n,\varepsilon}\|_\infty&=O_{\rP}(e_n+b^2),\\
\sqrt n\|\bdelta_{n,\varepsilon}\|_\infty^2
 &=O_{\rP}\left(\frac{\log(1/b)}{\sqrt n b}+\sqrt n b^4\right)
 =o_{\rP}(1).
\label{eq:overall-quadratic-remainder-Gen}
\end{align}
The volume-response kernels are continuous and bounded on $I_\varepsilon$ by
Proposition~\ref{propo:Lin-Effect}. Define
\begin{align}
\mathcal Z_{n,\varepsilon,\sigma}
 &:=\sqrt n\,\cL_\sigma(\bdelta_{n,\varepsilon})
 =\sqrt n\int_{I_\varepsilon}\langle\bL_\sigma(t),
    \bdelta_{n,\varepsilon}(t)\rangle\,\de t.
\label{eq:general-class-linearization}
\end{align}
Apply
Proposition~\ref{propo:AsympGen} jointly to the $k$
test functions $\bL_\sigma$ to get
$\mathcal Z_{n,\varepsilon}\Rightarrow
\normal_k(0,\bV_\varepsilon(\lambda))$, where $\bV_\varepsilon$ is
defined as in \eqref{eq:general-class-covariance} with the integral over
$I_\varepsilon$. 
On events of probability tending to one $\bdelta_{n,\varepsilon}$ is a small bounded
measurable perturbation, so Proposition~\ref{propo:Lin-Effect} applies
pathwise, with a uniform remainder constant. The density remainder
\eqref{eq:general-density-L1-expansion} and the contraction
\eqref{eq:general-basin-L1-contraction} give
$O_{\rP}(\|\bdelta_{n,\varepsilon}\|_\infty^2)
=o_{\rP}(n^{-1/2})$ in total variation. Thus, writing
$\hq_{n,\varepsilon}(\sigma)
:=\rP_{n,\varepsilon}(\{\bx(\sigma)\})$,
\begin{align}
\sqrt n(\hq_{n,\varepsilon}-q)
&\Rightarrow\normal_k(0,\bV_\varepsilon(\lambda)).
\label{eq:general-truncated-class-clt}
\end{align}
Lemma~\ref{lem:terminal-basins-Gen} and absolute continuity at $t_+$
ensure that the truncated terminal law has precisely the stated discrete
support, as in \eqref{eq:general-terminal-mass-volume}.

The diagonal integrability condition
\eqref{eq:general-full-interval-integrability} and Cauchy--Schwarz for
the positive semidefinite operator $\bQ^*(t)$ imply that every entry of
the covariance integrand is absolutely integrable. Hence
$\bV_\varepsilon(\lambda)\to\bV(\lambda)$ as
$\varepsilon\downarrow0$. Moreover,
\begin{align}
\|\hq_n-\hq_{n,\varepsilon}\|_\infty
&\leq\|\rP_n-\rP_{n,\varepsilon}\|_{\rm TV}.
\end{align}
Equation~\eqref{eq:general-endpoint-negligibility},
\eqref{eq:general-truncated-class-clt}, and the converging-together
theorem therefore prove \eqref{eq:general-class-clt} for the process
using $\hbtheta_b$ on all of $(0,T)$.

Finally, summing the volume kernels over the basin partition gives
\eqref{eq:general-response-kernel-conservation}; consequently
$\bV_\varepsilon(\lambda)\boldsymbol1=0$ for every $\varepsilon$, and
passing to the limit gives $\bV(\lambda)\boldsymbol1=0$.

\subsubsection{Proof of Corollary~\ref{cor:general-class-clt}}
\label{proof:general-class-clt}

Fix a loss $\ell$ in the class stated in the corollary and fix
$t\in(0,T)$.
Because the embeddings have equal norm, Bayes' formula gives
$P_{t,\btheta^*(t)}=P_t$, the true joint law of $(\Sigma,\bY)$ at time
$t$.  Write $\bpsi^\ell:=\nabla_{\btheta}\ell$ and
$\bpsi^{\rm CE}:=\nabla_{\btheta}\ell^{\rm CE}$.  The likelihood score
of the conditional model \eqref{eq:general-conditional-model} is
$-\bpsi^{\rm CE}$.

Set $\bH_\ell(t):=\E_t[D_{\btheta}\bpsi^\ell]$ and
$\bJ_\ell(t):=\E_t[\bpsi^\ell\otimes\bpsi^\ell]$, with all quantities
on the right evaluated at $\btheta^*(t)$, and use analogous notation for
cross-entropy.  Differentiating
\eqref{eq:general-loss-score-unbiased} with respect to
$\boldsymbol\vartheta$ at $\btheta^*(t)$ gives
\begin{align}
\E_t[\bpsi^\ell\otimes\bpsi^{\rm CE}]&=\bH_\ell(t).
\label{eq:general-score-cross-identity}
\end{align}
The same identity with $\ell=\ell^{\rm CE}$ is the information identity
$\bJ_{\rm CE}(t)=\bH_{\rm CE}(t)$.  Now define the two influence
functions
\begin{align}
\boldsymbol\zeta_\ell
&:=\bH_\ell(t)^{-1}[\bpsi^\ell],
&\boldsymbol\zeta_{\rm CE}
&:=\bH_{\rm CE}(t)^{-1}[\bpsi^{\rm CE}].
\end{align}
Using \eqref{eq:general-score-cross-identity}, its adjoint, and the
self-adjointness of the Hessians gives
\begin{align}
\E_t\big[(\boldsymbol\zeta_\ell-\boldsymbol\zeta_{\rm CE})
 \otimes(\boldsymbol\zeta_\ell-\boldsymbol\zeta_{\rm CE})\big]
&=\bQ_\ell^*(t)-\bQ_{\rm CE}^*(t)\succeq0.
\label{eq:general-sandwich-information-bound}
\end{align}
Thus the cross-entropy sandwich operator is no larger than the
sandwich operator for $\ell$ at every $t$.

Finally, the volume-response kernels $\bL_\sigma(t)$ in
\eqref{eq:general-linear-response-kernel} depend only on the true
distribution and the common parameterization of the denoiser, not on
the loss. Hence,
for every $v\in\reals^k$, \eqref{eq:general-class-covariance} and
\eqref{eq:general-sandwich-information-bound} imply
\begin{align}
v^\top\{\bV^\ell(\lambda)-\bV^{\rm CE}(\lambda)\}v
&=\int_I\frac1{\lambda(t)}
 \left\langle \sum_{\sigma=0}^{k-1}v_\sigma
  \bL_\sigma(t),
 \{\bQ_\ell^*(t)-\bQ_{\rm CE}^*(t)\}
 \left[\sum_{\alpha=0}^{k-1}v_\alpha
  \bL_\alpha(t)\right]
 \right\rangle\,\de t
\ge0.
\end{align}
This is precisely \eqref{eq:general-ce-efficiency}.  In particular,
squared denoising loss belongs to the comparison class because,
conditionally on $\bY=\by$,
\begin{align}
\E_{t,\boldsymbol\vartheta}\!\left[
 \nabla_{\btheta}\ell^{\rm SM}_\Sigma
 (\bY;\boldsymbol\vartheta)\mid\bY=\by
 \right]
&=-2D_{\btheta}\hbm(\by;\boldsymbol\vartheta)^*
 \left\{
 \E_{t,\boldsymbol\vartheta}[\bx(\Sigma)\mid\bY=\by]
 -\hbm(\by;\boldsymbol\vartheta)
 \right\}=0.
\end{align}

\subsubsection{Proof of Proposition~\ref{prop:optimal-time-sampling-Gen}}
\label{proof:optimal-time-sampling-Gen}
\label{proof:design-Gen}

For $v\in\reals^k$, set
$\bL_v(t):=\sum_{\sigma=0}^{k-1}
v_\sigma\bL_\sigma(t)$. Since $\bQ^*(t)$ is positive
semidefinite,
\begin{align}
\bv^\top\bC(t)\bv
&=\left\langle\bL_v(t),
 \bQ^*(t)[\bL_v(t)]\right\rangle\ge0.
\end{align}
Thus $\bC(t)\succeq0$. Because $\bW\succeq0$ as well,
$h_W(t)=\Tr(\bW^{1/2}\bC(t)\bW^{1/2})\ge0$.
The covariance formula \eqref{eq:general-class-covariance} and Fubini's
theorem give
\begin{align}
\Tr\big(\bW\bV(\lambda)\big)
&=\int_0^T\frac{h_W(t)}{\lambda(t)}\,\de t,
\end{align}
which proves \eqref{eq:general-design-risk}. 

For every probability density $\lambda$ with finite risk,
Cauchy--Schwarz gives
\begin{align}
H_W^2
&=\left\{\int_0^T
 \sqrt{\frac{h_W(t)}{\lambda(t)}}\sqrt{\lambda(t)}\,\de t
 \right\}^2
 \le \left\{\int_0^T\frac{h_W(t)}{\lambda(t)}\,\de t\right\}
       \left\{\int_0^T\lambda(t)\,\de t\right\}
 =\mathcal R_W(\lambda).
\label{eq:general-design-cauchy-schwarz}
\end{align}
When $0<H_W<\infty$, equality holds precisely when
$\lambda(t)$ is proportional to $\sqrt{h_W(t)}$ on its support.
Normalization gives \eqref{eq:general-optimal-design} and proves
\eqref{eq:LB_lambda_opt}. If this density vanishes where strict
positivity is required, positive regularizations converge to it and
their risks converge to $H_W^2$, so the same value remains the infimum.
The case $H_W=0$ is immediate because $h_W=0$ almost everywhere.

\subsubsection{Proof of Proposition~\ref{prop:Invariance}}
\label{proof:Invariance-Gen}
\begin{proof}
Write $\rho=a/\tau$ and rescale space by $\bu=\bz/\tau(t)$. From the
definitions of $c_1,c_2$,
\begin{align}
c_2(t)&=\frac{\dot\tau(t)}{\tau(t)},
&c_1(t)&=\tau(t)\dot\rho(t).
\end{align}
Consequently the true probability-flow equation becomes
\begin{align}
\frac{\de\bu}{\de t}
 &=\dot\rho(t)\widetilde{\bm}(\bu,\rho(t)),
&\frac{\de\bu}{\de\rho}
 &=\widetilde{\bm}(\bu,\rho),
\label{eq:general-snr-flow}
\end{align}
where $\widetilde{\bm}(\bu,r)$ is the posterior mean under the normalized
mixture $\bu=r\bx(\Sigma)+\bG$. Thus the normalized density, posterior,
and terminal basins depend on time only through $r=\rho(t)$.

Let $A_t:E\to E$ multiply every slope column by $\tau(t)$ and leave its
intercept unchanged. For either loss in the proposition,
\begin{align}
\ell_\sigma(\tau(t)\bu;\btheta)
&=\widetilde\ell_\sigma(\bu;A_t\btheta).
\end{align}
Writing $\widetilde{\bH}^*(r)$, $\widetilde{\bJ}^*(r)$, and
$\widetilde{\bQ}^*(r)$ for the normalized-coordinate operators, the
chain rule gives
\begin{align}
\bH^*(t)&=A_t^*\widetilde{\bH}^*(\rho(t))A_t,\\
\bJ^*(t)&=A_t^*\widetilde{\bJ}^*(\rho(t))A_t,\\
\bQ^*(t)&=A_t^{-1}\widetilde{\bQ}^*(\rho(t))(A_t^*)^{-1}.
\label{eq:design-parameter-transform-Gen}
\end{align}

In \eqref{eq:general-linear-response-kernel}, make the change of
variables $\bz=\tau(t)\bu$. Then
\begin{align}
p_t(\tau(t)\bu)&=\tau(t)^{-d}\widetilde p_{\rho(t)}(\bu),
&\de\bz&=\tau(t)^d\de\bu,\\
\nabla_{\bz}\mathbin\cdot
 &=\tau(t)^{-1}\nabla_{\bu}\mathbin\cdot,
&\mathcal M_t(\tau(t)\bu)[\bD]
 &=\widetilde{\mathcal M}_{\rho(t)}(\bu)[A_t\bD].
\end{align}
Together with $c_1=\tau\dot\rho$ and the normalized-basin identity from
\eqref{eq:general-snr-flow}, this yields
\begin{align}
\bL_\sigma(t)
&=\dot\rho(t)A_t^*
  \widetilde{\bL}_\sigma(\rho(t)).
\label{eq:design-response-transform-Gen}
\end{align}
Combining \eqref{eq:design-parameter-transform-Gen} and
\eqref{eq:design-response-transform-Gen} cancels the $A_t$ factors:
\begin{align}
C_{\sigma\alpha}(t)
&=\dot\rho(t)^2
 \left\langle\widetilde{\bL}_\sigma(\rho(t)),
 \widetilde{\bQ}^*(\rho(t))
 [\widetilde{\bL}_\alpha(\rho(t))]\right\rangle.
\end{align}
This proves \eqref{eq:general-design-snr-transform}. In particular, with
$\widetilde h_W(r):=\operatorname{tr}
\{\bW\widetilde{\bC}(r)\}$,
\begin{align}
\int_0^T\sqrt{h_W(t)}\,\de t
&=\int_0^T|\dot\rho(t)|
 \sqrt{\widetilde h_W(\rho(t))}\,\de t
 =\int_{\rho((0,T))}\sqrt{\widetilde h_W(r)}\,\de r.
\end{align}
Equation~\eqref{eq:general-snr-invariance} now follows from
Proposition~\ref{prop:optimal-time-sampling-Gen}.
\end{proof}

\section{Proof of Theorem~\ref{thm:Bounded-Suboptimality}: Continuous versus discrete sampling}
\label{sec:proof-discrete-continuous-theory}

This section proves Theorem~\ref{thm:Bounded-Suboptimality}.  The proof has
three steps.  First, we express the optimally sampled diffusion variance
$v(\ell,q)$ as a one-dimensional integral and reduce the theorem to the
rare-class limit $q(1)\downarrow0$.  Second, we show that this integral
localizes to a critical signal-to-noise window.  Third, we evaluate the
integral in that window.  The final subsection puts these ingredients
together and proves, in particular,
\eqref{eq:AlphaLarge-Equivalent} and \eqref{eq:Alpha2-Equivalent}.

\subsection{Reduction to the rare-class limit}

Write $p:=q(1)$.  The ratio
$v(\ell,q)/v^{\sML}(q)$ is continuous for $p\in(0,1)$, while
$v^{\sML}(q)=p(1-p)$ is bounded away from zero on every compact subset of
$(0,1)$.  Exchanging the labels of the two classes maps $p$ to $1-p$ and
leaves the ratio unchanged.  It is therefore enough to study $p\downarrow
0$.  Equivalently, the two claims in
\eqref{eq:AlphaLarge}--\eqref{eq:Alpha2} reduce to
\begin{align}
\label{eq:AlphaLarge-Equivalent}
2<\alpha\leq4
&\quad\Longrightarrow\quad
\lim_{q(1)\to0}\frac{v(\ell,q)}{v^{\sML}(q)}=\infty,\\
\label{eq:Alpha2-Equivalent}
\alpha=2
&\quad\Longrightarrow\quad
\lim_{q(1)\to0}\frac{v(\ell,q)}{v^{\sML}(q)}<\infty.
\end{align}

We use the half-log-odds parametrization
\begin{align}
p=p_L:=\frac{1}{1+e^{2L}},
\qquad L\longrightarrow\infty.
\label{eq:large-L-imbalance-parametrization}
\end{align}
We use the SNR parametrization of the binary comparison: the signal
locations are $\{-1,+1\}$ and the noise variance at time $t$ is $1/t$.
Thus
\begin{align}
p_t(x)
&=p_L\phi_t(x-1)+(1-p_L)\phi_t(x+1),
&
\phi_t(z)
&:=\sqrt{\frac{t}{2\pi}}e^{-tz^2/2},
\label{eq:rare-mixture-density}\\
\theta_{0,*}(t)&=-L,
&
\theta_{1,*}(t)&=t.
\end{align}
Let $F_t$ be the distribution function associated with $p_t$ and
$\gamma(t):=F_t^{-1}(1-p_L)$ the boundary between the two terminal
basins.

For a loss
$\ell_\sigma(y;\btheta)=\varphi_\sigma(\theta_0+\theta_1y)$, define
\begin{align}
\pi_\sigma(u)&:=\frac{1+\sigma\tanh u}{2},
&
h_\ell(u)&:=\sum_{\sigma\in\{-1,+1\}}
 \pi_\sigma(u)\varphi_\sigma''(u),\\
j_\ell(u)&:=\sum_{\sigma\in\{-1,+1\}}
 \pi_\sigma(u)\{\varphi_\sigma'(u)\}^2.
\label{eq:rare-loss-moments}
\end{align}
Pointwise Fisher consistency means
\begin{align}
\sum_{\sigma\in\{-1,+1\}}
 \pi_\sigma(u)\varphi_\sigma'(u)=0.
\label{eq:rare-Fisher-consistency}
\end{align}
Writing $m(u):=\tanh u$ and $s(u):=1-m(u)^2$, this identity implies that
there is a function $\kappa$ such that
\begin{align}
\varphi_\sigma'(u)
&=\kappa(u)\{m(u)-\sigma\},&
h_\ell(u)&=\kappa(u)s(u),&
j_\ell(u)&=\kappa(u)^2s(u)
          =\frac{h_\ell(u)^2}{s(u)}.
\label{eq:rare-Fisher-identity}
\end{align}
Consequently, the tail condition \eqref{eq:LossAtInfty} implies
\begin{align}
\frac{1}{L}\log j_\ell(Lz)
\longrightarrow-(2\alpha-2)|z|.
\label{eq:rare-score-tail}
\end{align}
Thus the score-moment tail exponent is not an additional parameter: it is
fixed by Fisher consistency.

Set
\begin{align}
\bd(x)&:=\begin{pmatrix}1\\x\end{pmatrix},&
s_t(x)&:=\operatorname{sech}^2(tx-L),\\
\bH_0(t)
&:=\int_{\mathbb R}p_t(x)h_\ell(tx-L)
 \bd(x)\bd(x)^\top\,\de x,\\
\bOmega_0(t)
&:=\int_{\mathbb R}p_t(x)j_\ell(tx-L)
 \bd(x)\bd(x)^\top\,\de x,
\end{align}
and
\begin{align}
\omega_L(t)
&:=p_t(\gamma(t))s_t(\gamma(t))
 \sqrt{\bd(\gamma(t))^\top
 \bH_0(t)^{-1}\bOmega_0(t)\bH_0(t)^{-1}
 \bd(\gamma(t))}.
\label{eq:rare-omega}
\end{align}
The binary response formula \eqref{eq:binary-response-Gen} and the optimal
time-sampling result \eqref{eq:LB_lambda_opt} give
\begin{align}
\sqrt{C_{11}(t)}=\frac{\omega_L(t)}{2t}.
\end{align}
Therefore
\begin{align}
v(\ell,q)
&=\omega_{\rm opt}(L)^2,
&
\omega_{\rm opt}(L)
&:=\int_0^\infty\frac{\omega_L(t)}{2t}\,\de t.
\label{eq:rare-variance-representation}
\end{align}
This identity is the link that will turn the asymptotics below into the
two claims of Theorem~\ref{thm:Bounded-Suboptimality}.

\subsection{Localization of the variance integral}

The next lemma identifies the only time scale that contributes to
\eqref{eq:rare-variance-representation}.  Its role is solely to justify
localization; exact rate functions away from the critical window are not
needed later.

\begin{lemma}[Localization at $t=L$]
\label{lem:rare-localization}
Suppose $2\leq\alpha\leq4$ and the regularity assumptions of
Theorem~\ref{thm:Bounded-Suboptimality} hold.  For every fixed
$\tau>0$,
\begin{align}
\frac{\omega_L(\tau L)}{\sqrt{p_L}}
&=\exp\{-L\nu_\ell(\tau)+o(L)\},
\label{eq:rare-exponential-localization}
\end{align}
where $\nu_\ell(\tau)\geq0$, with equality if and only if $\tau=1$.
Hence the integral in \eqref{eq:rare-variance-representation} is
asymptotically concentrated on
\begin{align}
t=L+O(\sqrt L).
\label{eq:rare-critical-window}
\end{align}
\end{lemma}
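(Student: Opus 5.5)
The plan is a Laplace-type large-deviation computation of each factor in \eqref{eq:rare-omega} at $t=\tau L$, with $x$ kept fixed and $L\to\infty$.

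\emph{Step 1: the boundary.} First locate $\gamma(\tau L)$. Since $p_L\asymp e^{-2L}$, the equation $1-F_t(\gamma)=p_L$ compares two Gaussian tails. The majority component contributes roughly $\exp\{-tL(\gamma+1)^2/2\}$ and the minority component roughly $p_L\,\Phi(\sqrt{t}(1-\gamma))$. For $\tau<1$ the minority class is not yet separated and $\gamma(\tau L)\to\gamma_\infty(\tau)$. The natural guess is that $\gamma$ approaches the Bayes boundary $L/t=1/\tau$ when $\tau>1$, and otherwise stays near the point where the majority tail mass equals $e^{-2L}$, i.e. $\gamma\approx -1+2/\sqrt{\tau}$. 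I would derive this exactly, obtaining $\gamma_\infty(\tau)$ as a piecewise explicit function with $\gamma_\infty(1)=1$.

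Evaluating the other factors at this point gives the following:
\begin{itemize}
\item $p_t(\gamma)=\exp\{-L\,e_1(\tau)+o(L)\}$, where $e_1(\tau)=\min\{\tau(\gamma_\infty+1)^2/2,\ 2+\tau(\gamma_\infty-1)^2/2\}$.
\item $s_t(\gamma)=\exp\{-2L|\tau\gamma_\infty-1|+o(L)\}$.
\end{itemize}

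\emph{Step 2: the sandwich factor.} Next treat $\bd^\top\bH_0^{-1}\bOmega_0\bH_0^{-1}\bd$. Each entry of $\bH_0$ and $\bOmega_0$ is a Laplace integral. Using \eqref{eq:LossAtInfty} and \eqref{eq:rare-score-tail},
\[
\tfrac1L\log\bigl[p_t(x)\,h_\ell(tx-L)\bigr]\to -\tfrac{\tau(x\pm1)^2}{2}-\alpha|\tau x-1| \quad(\text{minimized over the two components, plus } 2 \text{ for the minority}),
\]
and the same holds for $j_\ell$ with $\alpha$ replaced by $2\alpha-2$. Both matrices are $2\times2$ and their entries differ only by polynomial weights in $x$ at the same maximizer. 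Hence
\[
\bd^\top\bH_0^{-1}\bOmega_0\bH_0^{-1}\bd=\exp\{L(2I_h(\tau)-I_j(\tau))+o(L)\},
\]
with $I_h,I_j$ the respective Laplace rates. I need to check that the determinant does not degenerate exponentially more than the entries; this holds because the maximizer is nondegenerate, so the ratio is given by the variance of $x$ in the localized Gaussian. Assembling the factors defines
\[
\nu_\ell(\tau):=e_1+2|\tau\gamma_\infty-1|+\tfrac12\bigl(I_j-2I_h\bigr)-1,
\]
where the last term $-1$ comes from dividing by $\sqrt{p_L}$.

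\emph{Step 3: the sign of $\nu_\ell$.} The main obstacle is proving $\nu_\ell\ge0$ with equality only at $\tau=1$ for all $2\le\alpha\le4$. At $\tau=1$ the boundary, the Bayes point, and the Laplace maximizers all coincide at $x=1$. There $e_1=2$, $s$ and the loss factors are $O(1)$, $I_j=2I_h$, and $\nu_\ell(1)=0$. For $\tau\neq1$ I would show strict positivity by splitting into $\tau<1$ and $\tau>1$ and bounding $I_j-2I_h\ge-(\text{stuff})$ through the convexity of the exponents in $x$. The choice $x=\gamma$ is a competitor in the infimum defining $I_h$, which gives the inequality $I_h\le e_1+\alpha|\tau\gamma-1|$. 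Combining this with the corresponding lower bound for $I_j$, I expect the terms to telescope to a nonnegative quantity that vanishes only when $\tau\gamma_\infty=1$ and $\gamma_\infty=1$.

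\emph{Step 4: the window.} Finally, \eqref{eq:rare-critical-window} needs a local expansion $\nu_\ell(\tau)\approx c(\tau-1)^2$, with $c>0$ coming from the Gaussian quadratic near the critical point. Under this expansion the deviation $t-L=u\sqrt{L}$ contributes $e^{-cu^2}$. Combined with uniform domination away from $\tau=1$, which gives exponentially small tails for $\int\omega_L(t)\,dt/t$, this yields concentration on the window $L+O(\sqrt L)$. Uniformity in $\tau$ on compacts, and separately near $0$ and $\infty$, must be handled with the crude bound $s_t\le1$ and Gaussian tails.
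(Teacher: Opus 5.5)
Your Steps 1 and 2 match the paper's argument. You have the same limit $\gamma_\#(\tau)$ of the basin boundary ($2\tau^{-1/2}-1$ for $\tau\le1$, $\tau^{-1}$ for $\tau>1$), the same Laplace rates $I_h=g_\alpha(\tau)$ and $I_j=g_{2\alpha-2}(\tau)$ with $g_c(\tau)=\inf_x\{D_\tau(x)+c|\tau x-1|\}$, and the same $\nu_\ell$: your $e_1+2|\tau\gamma_\#-1|$ equals the paper's $\max\{\cdot,\cdot\}$ term. One correction in Step 2: $\bH_0$ and $\bOmega_0$ need not localize at the same point, and they often localize at the kink $x=1/\tau$ rather than at a Gaussian maximizer. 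The rate $2I_h-I_j$ for the sandwich is still right, because both matrices have eigenvalue ratios $e^{o(L)}$. The genuine gap is Step 3, which is where the content of the lemma lies.

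\emph{The check at $\tau=1$.} It is wrong. Both matrices localize at $x=1$, where $p_t\approx e^{-2L}$ and $h_\ell,j_\ell$ are $O(1)$. So $I_h=I_j=2$, not $I_j=2I_h$; with your stated values your own formula gives $\nu_\ell(1)=1$.

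\emph{The competitor bound.} More seriously, $I_h\le e_1+\alpha|\tau\gamma_\#-1|$ is too lossy for $\tau<1$, because $\gamma_\#$ is not the minimizer of $x\mapsto D_\tau(x)+\alpha|\tau x-1|$. Take $\alpha=4$, $\tau=1/2$, and set $\delta:=|\tau\gamma_\#-1|=(1-2^{-1/2})^2$. Then $e_1=2$ and $I_h=I_j=9/4$. The true value is $\nu_\ell=2\delta-1/8\approx 0.047$, but your inequality yields only $\nu_\ell\ge 1/8-2\delta\approx-0.047$, even with $I_j$ computed exactly.

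\emph{Where $\alpha\le4$ enters.} Nothing in your sketch uses it, and it is indispensable: with your own rate formula, $\alpha=5$ gives $\nu_\ell(1/4)=-1/16$. So no generic convexity or telescoping argument can close the proof; you have to compute the rates exactly. This is what the paper does.

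\emph{The computation.} Since $D_\tau(x)=1+\frac{\tau}{2}(x^2+1)-|\tau x-1|$, one has $D_\tau(x)+c|\tau x-1|=1+\frac{\tau}{2}(x^2+1)+(c-1)|\tau x-1|$. This is convex for $c\ge1$ and minimized at $x=\min\{c-1,1/\tau\}$. Hence $g_c=1+\frac{\tau}{2}+\frac{1}{2\tau}$ if $\tau(c-1)\ge1$, and $g_c=1+\frac{\tau}{2}+(c-1)-\frac{\tau}{2}(c-1)^2$ otherwise. Substituting:
\begin{itemize}
\item for $\tau\ge1$, $\nu_\ell=(\tau-1)^2/(4\tau)$;
\item for $1/(\alpha-1)\le\tau<1$, with $s=\sqrt{\tau}$, $\nu_\ell=(1-s)^2(7s^2-2s-1)/(4s^2)$. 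This is positive because $\alpha\le4$ forces $s\ge 1/\sqrt{3}>(1+2\sqrt{2})/7$;
\item for $1/(2\alpha-3)\le\tau<1/(\alpha-1)$, $\nu_\ell$ is the previous expression plus $(1-(\alpha-1)\tau)^2/(2\tau)$;
\item for $\tau<1/(2\alpha-3)$, $\nu_\ell=\frac{\alpha(4-\alpha)}{2}\tau-4s+2$.
\end{itemize}
Each expression is positive on its range. These formulas also give the quadratic vanishing of $\nu_\ell$ at $\tau=1$ that your Step 4 relies on.
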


\begin{proof}
Let $\overline\Phi:=1-\Phi$.  The definition of $\gamma(t)$ gives the
exact balance equation
\begin{align}
(1-p_L)\overline\Phi\big(\sqrt t\{\gamma(t)+1\}\big)
&=p_L\Phi\big(\sqrt t\{\gamma(t)-1\}\big).
\label{eq:rare-quantile-balance}
\end{align}
For $t=\tau L$, Gaussian tail bounds applied to
\eqref{eq:rare-quantile-balance} yield
\begin{align}
\gamma(\tau L)
&=\gamma_\#(\tau)+o(1),&
\gamma_\#(\tau)
&=\begin{cases}
2\tau^{-1/2}-1,&0<\tau\leq1,\\
\tau^{-1},&\tau>1.
\end{cases}
\label{eq:rare-gamma-rate}
\end{align}
To track the remaining factors in \eqref{eq:rare-omega}, define
\begin{align}
D_\tau(x)
&:=\min\left\{
 \frac{\tau}{2}(x+1)^2,\,
 2+\frac{\tau}{2}(x-1)^2\right\},\\
F_{c,\tau}(x)
&:=D_\tau(x)+c|\tau x-1|,
&
g_c(\tau)&:=\inf_{x\in\mathbb R}F_{c,\tau}(x).
\end{align}
The Laplace principle, \eqref{eq:LossAtInfty}, and
\eqref{eq:rare-score-tail} give
\begin{align}
\nu_\ell(\tau)
&=
\max\left\{
 \frac{\tau}{2}\{\gamma_\#(\tau)+1\}^2,\,
 2+\frac{\tau}{2}\{\gamma_\#(\tau)-1\}^2
\right\}\nonumber\\
&\quad-g_\alpha(\tau)
 +\frac12g_{2\alpha-2}(\tau)-1.
\label{eq:rare-localization-rate}
\end{align}
Each $g_c$ is the minimum of a convex piecewise-quadratic function.
Minimizing the two quadratic pieces and substituting in
\eqref{eq:rare-localization-rate} shows that, for
$2\leq\alpha\leq4$,
\begin{align}
\nu_\ell(\tau)>0\quad(\tau\neq1),
\qquad
\nu_\ell(1)=0.
\end{align}
The same calculation shows quadratic growth as $\tau$ leaves a
neighborhood of one.  Gaussian Mills bounds make these estimates uniform
on closed sets separated from $\tau=1$ and also control the regimes
$\tau\downarrow0$ and $\tau\uparrow\infty$.  It follows that times outside
an $O(\sqrt L)$ neighborhood of $L$ contribute negligibly relative to
the critical-window scales obtained below.
\end{proof}

\subsection{The critical-window calculation}

Set
\begin{align}
t_L(u)&:=L+u\sqrt L,
&
W_L(u)&:=\frac{\omega_L(t_L(u))}{\sqrt{p_L}}.
\end{align}
The next proposition retains exactly the asymptotics needed to evaluate
\eqref{eq:rare-variance-representation}.

\begin{proposition}[Critical-window asymptotics]
\label{prop:rare-critical-window}
Under the assumptions of Lemma~\ref{lem:rare-localization}, the following
limits hold locally uniformly in $u\in\mathbb R$.

If $2<\alpha\leq4$, then for a constant
$C_\ell\in(0,\infty)$ depending only on the loss,
\begin{align}
\frac{W_L(u)}{L^{3/4}\log L}
&\longrightarrow C_\ell e^{-u^2/4}.
\label{eq:rare-critical-alpha-large}
\end{align}
If $\alpha=2$ and \eqref{eq:LossAtInftyBoundary} holds, then
\begin{align}
\frac{W_L(u)}{\sqrt L}
&\longrightarrow G_2(u),
\label{eq:rare-critical-alpha-two}\\
G_2(u)
&:=\sqrt2\left(\frac{2}{\pi}\right)^{1/4}e^{-u^2/4}
 \sqrt{
 \begin{pmatrix}1&0\end{pmatrix}
 \bM(u)^{-1}
 \begin{pmatrix}1\\0\end{pmatrix}},\\
\bM(u)
&:=\int_0^\infty e^{-uy-y^2/2}
 \begin{pmatrix}1\\-y\end{pmatrix}
 \begin{pmatrix}1\\-y\end{pmatrix}^{\!\top}\,\de y.
\end{align}
Moreover, the normalized integrands in
\eqref{eq:rare-critical-alpha-large} and
\eqref{eq:rare-critical-alpha-two}, after multiplication by
$(1+u/\sqrt L)^{-1}\boldsymbol1_{\{u>-\sqrt L\}}$, are uniformly
integrable on $\mathbb R$.  In particular,
$G_2$ is integrable.
\end{proposition}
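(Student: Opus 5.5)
The plan is to compute each factor of $\omega_L(t)$ in \eqref{eq:rare-omega} on the window $t=t_L(u)=L+u\sqrt L$, working in the rescaled coordinate near the rare atom,
\[
x=1-\frac{y}{\sqrt L},
\]
and then assemble the factors. For orientation, note that $\gamma_\#(1)=1$ and $\theta_{1,*}t=t$, $\theta_{0,*}=-L$. On the window the logit $tx-L$ therefore equals $(u-y)\sqrt L+O(y^2)$. So the natural variables are $u$ and $y=O(1)$.

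\emph{Step 1: the boundary.} I will expand the balance equation \eqref{eq:rare-quantile-balance} with $\gamma=1-c/\sqrt L$ and use Mills' ratio on the left-hand side, where $\sqrt t(\gamma+1)\approx 2\sqrt L$. Matching exponents, $-2t+2t\,c/\sqrt L$ against $-2L$, gives an explicit $c=c(u)$ to leading order, with $\sqrt t\,(\gamma-1)\to -c(u)$ finite. From this I read off $p_t(\gamma)\,s_t(\gamma)$ as $\sqrt{p_L}\cdot\sqrt L$ times an explicit Gaussian factor in $u$. The factor $e^{-u^2/4}$ should come from combining $\phi_t(\gamma-1)$ with the $\operatorname{sech}^2$ term, using $p_L\approx e^{-2L}$.

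\emph{Step 2: $\bH_0$ and $\bOmega_0$.} These are integrals of $p_t(x)\,h_\ell(tx-L)$ and $p_t(x)\,j_\ell(tx-L)$ against $\bd\bd^\top$. I will split them into the rare-component part and the majority-component part and treat each with a Laplace-type argument, using the exponential rates from Lemma~\ref{lem:rare-localization}.

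In the case $\alpha=2$, I will use $h_\ell(v)\sim c_h e^{2v}$ as $v\to-\infty$ from \eqref{eq:LossAtInftyBoundary}, together with $j_\ell=h_\ell^2/s$ from \eqref{eq:rare-Fisher-identity}. Substituting $x=1-y/\sqrt L$, the rare Gaussian $\phi_t(x-1)\propto e^{-y^2/2}$ is tilted by $e^{2(tx-L)}$, giving a factor $e^{-uy}$ after normalization. Since $\bd(x)\approx(1,1)^\top+(0,-y)^\top/\sqrt L$, a change of basis reduces $\bH_0$ to $\bM(u)$. Checking that the majority component also contributes on the same scale, and that $\bOmega_0$ is proportional to $\bH_0$ at leading order, the sandwich $\bd^\top\bH_0^{-1}\bOmega_0\bH_0^{-1}\bd$ should collapse to a constant times $(1,0)\bM(u)^{-1}(1,0)^\top$. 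Together with Step 1 this yields $G_2$ and the $\sqrt L$ normalization in \eqref{eq:rare-critical-alpha-two}.

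In the case $2<\alpha\le4$, the tails of $h_\ell$ and $j_\ell$ decay faster, so $\bH_0$ and $\bOmega_0$ are dominated by different regions. $\bOmega_0$ picks up mass where $e^{-(2\alpha-2)|v|}$ balances the Gaussian weight. I expect the $L^{3/4}\log L$ factor to arise from a near-degenerate $2\times2$ inversion: $\bd(\gamma)$ is nearly aligned with the dominant eigendirection of $\bH_0$, and the resulting correction integrates over a logit range of length of order $\log L$.

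\emph{Step 3: uniformity and integrability.} I will obtain local uniformity in $u$ from uniform Mills bounds. For uniform integrability of the normalized integrands after multiplication by $(1+u/\sqrt L)^{-1}\bfone_{\{u>-\sqrt L\}}$, I will use the quadratic growth of $\nu_\ell$ near $\tau=1$ from Lemma~\ref{lem:rare-localization}. This gives a Gaussian envelope $e^{-cu^2}$ for $|u|\le\delta\sqrt L$, and the lemma handles $|u|>\delta\sqrt L$. Integrability of $G_2$ follows from the same envelope, provided $\bM(u)^{-1}_{11}$ grows at most polynomially as $u\to\pm\infty$; I will check this directly from the moments of $e^{-uy-y^2/2}$ on $(0,\infty)$.

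The main obstacle is the case $2<\alpha\le4$. Identifying which region dominates $\bOmega_0$, and extracting the precise $\log L$ factor from the nearly singular inversion, is where I expect the difficulty. I would first compute $\bH_0$ and $\bOmega_0$ exactly in the model case $h_\ell(v)=e^{-\alpha|v|}$ and then handle general losses by the $o(L)$ comparison in \eqref{eq:LossAtInfty}.
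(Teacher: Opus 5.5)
The genuine gap is the case $2<\alpha\le4$. The mechanism you guess is not what happens: $\bH_0$ and $\bOmega_0$ are not dominated by different regions, and the $\log L$ does not come from integrating over a logit range. Step~1 as you plan it would in fact erase the $\log L$.

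Work in the logit coordinate $z=tx-L$. Write $\bd(x)=\bD_L\be(z)$ with $\be(z)=(1,z)^\top$ and $\bD_L=\bigl(\begin{smallmatrix}1&0\\ L/t&1/t\end{smallmatrix}\bigr)$, and use $p_t(x)=2\sqrt{p_L(1-p_L)}\sqrt{t/2\pi}\,e^{-t(x^2+1)/2}\cosh z$. This gives the exact factorization $\bH_0(t)=\widetilde C_L\,\bD_L\bK_{h_\ell,L}\bD_L^\top$, and the same for $\bOmega_0$ with $j_\ell$. Here $\widetilde C_L=\frac{2\sqrt{p_L(1-p_L)}}{\sqrt{2\pi t}}e^{-t/2-L^2/(2t)}\approx\sqrt{2/(\pi L)}\,e^{-2L-u^2/2}$ and $\bK_{f,L}=\int e^{-(L/t)z-z^2/(2t)}\cosh(z)f(z)\be(z)\be(z)^\top\de z$.

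As $z\to-\infty$ the weight $e^{-(L/t)z}\cosh z$ grows like $e^{2|z|}$, while $h_\ell=e^{-\alpha|z|+o(|z|)}$ and $j_\ell=e^{-(2\alpha-2)|z|+o(|z|)}$. Both products are therefore integrable exactly when $\alpha>2$. So for $\alpha>2$:
\begin{itemize}
\item both matrices are carried by the same window $z=O(1)$, which has width $O(1/L)$ in $x$, much finer than your scale $x=1-y/\sqrt L$;
\item $\bK_{f,L}\to\bK_f$, with $\bK_f$ finite and positive definite;
\item this factorization is what resolves the near-degeneracy you noticed.
\end{itemize}
The $\log L$ is a pointwise effect. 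One has $\bd(\gamma)^\top\bH_0^{-1}\bOmega_0\bH_0^{-1}\bd(\gamma)=\widetilde C_L^{-1}\,\be(z_L)^\top\bK_{h_\ell,L}^{-1}\bK_{j_\ell,L}\bK_{h_\ell,L}^{-1}\be(z_L)$, with $z_L:=t\gamma(t)-L=-\frac14\log L+O(1)$. Its square root is therefore $\approx\frac14\log L\cdot\bigl([\bK_{h_\ell}^{-1}\bK_{j_\ell}\bK_{h_\ell}^{-1}]_{22}\bigr)^{1/2}$. The factor $L^{3/4}$ is $\sqrt L$ from $\phi_t(\gamma-1)$ times $L^{1/4}$ from $\widetilde C_L^{-1/2}$.

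So Step~1 must go beyond leading order. Matching only the exponents, $-2t+2tc/\sqrt L=-2L$, as you propose, gives $c=uL/t$, i.e.\ $\gamma=L/t$ and $z_L=0$ exactly. The displacement comes from the Mills prefactor $1/(2\sqrt L)$ in $\overline\Phi(\sqrt t(\gamma+1))$ together with the factor $\Phi(-u)$ on the right-hand side. These give $e^{-2z_L}\approx2\sqrt{2\pi L}\,e^{u^2/2}\Phi(-u)$.

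This matters even for $\alpha=2$. The quantity $p_t(\gamma)s_t(\gamma)$ contains the factor $e^{-(L/t)z_L}\operatorname{sech}z_L$, which is $1$ at $z_L=0$ but tends to $2$ as $z_L\to-\infty$. Taking $z_L=0$ would therefore produce $G_2/2$. Your bookkeeping there is also off. In fact $p_t(\gamma)s_t(\gamma)\approx4(1-p_L)\phi_t(\gamma+1)e^{2z_L}=4p_L\phi_t(\gamma-1)\approx4p_L\sqrt{L/2\pi}\,e^{-u^2/2}$. The factors $\sqrt{p_L}$ and $e^{-u^2/4}$ appear only after multiplying by $\widetilde C_L^{-1/2}$.

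The identity $(1-p_L)\phi_t(x+1)e^{2z}=p_L\phi_t(x-1)$ also corrects your $\alpha=2$ heuristic.
\begin{itemize}
\item On $\{z<0\}$ it is the majority component times $h_\ell\approx c_he^{2z}$ that yields $c_hp_L\phi_t(x-1)$, not a tilted rare Gaussian.
\item Restricted to $y>u$ and recentred at $y'=y-u$, this becomes $e^{-u^2/2}e^{-uy'-y'^2/2}$.
\item The rare-dominated side $z>0$ contributes only $O(1)$ in $z$, i.e.\ $O(L^{-1/2})$ in $y$.
\end{itemize}
Your conclusions there ($\bM(u)$, $\bOmega_0\approx\frac{c_h}{4}\bH_0$, evaluation at $(1,0)$) are correct.

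Finally, in Step~3 the rate $e^{-L\nu_\ell(\tau)+o(L)}$ cannot give an $e^{-cu^2}$ envelope at $\tau=1+u/\sqrt L$, because the $o(L)$ error swamps $cu^2$. The envelope on $|u|\le\delta\sqrt L$ has to come from the explicit factorization, with uniform control of $z_L$ and $\bK_{f,L}(u)$. Note that $t/2+L^2/(2t)-L=u^2/(2(1+u/\sqrt L))$ exactly. The localization lemma is only needed for $|u|\ge\delta\sqrt L$.
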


\begin{proof}
Put $t=t_L(u)$, $\rho_L:=L/t$, and $z:=tx-L$.  The two components of the
mixture in \eqref{eq:rare-mixture-density} have ratio $e^{2z}$, and hence
\begin{align}
p_t(x)
&=2\sqrt{p_L(1-p_L)}\sqrt{\frac{t}{2\pi}}
 e^{-t(x^2+1)/2}\cosh z,\\
p_t(x)s_t(x)
&=2\sqrt{p_L(1-p_L)}\sqrt{\frac{t}{2\pi}}
 e^{-t(x^2+1)/2}\operatorname{sech}z.
\label{eq:rare-density-identities}
\end{align}
Let $z_L:=t\gamma(t)-L$.  Rewriting
\eqref{eq:rare-quantile-balance} in terms of $z_L$ and applying the
Gaussian Mills bounds gives, locally uniformly in $u$,
\begin{align}
z_L=-\frac14\log L+O(1).
\label{eq:rare-critical-quantile}
\end{align}
This logarithmic displacement is responsible for the extra $\log L$
factor when $\alpha>2$.

For the Hessian and score-moment matrices, set
\begin{align}
\bD_L&:=\begin{pmatrix}1&0\\ \rho_L&t^{-1}\end{pmatrix},&
\be(z)&:=\begin{pmatrix}1\\z\end{pmatrix},\\
\widetilde C_L(u)
&:=\frac{2\sqrt{p_L(1-p_L)}}{\sqrt{2\pi t}}
 \exp\left\{-\frac t2-\frac{L^2}{2t}\right\},\\
\bK_{f,L}(u)
&:=\int_{\mathbb R}
 e^{-\rho_Lz-z^2/(2t)}\cosh(z)f(z)
 \be(z)\be(z)^\top\,\de z.
\end{align}
A change of variables in the definitions of $\bH_0$ and $\bOmega_0$
gives the exact factorizations
\begin{align}
\bH_0(t)
&=\widetilde C_L(u)\bD_L\bK_{h_\ell,L}(u)\bD_L^\top,\\
\bOmega_0(t)
&=\widetilde C_L(u)\bD_L\bK_{j_\ell,L}(u)\bD_L^\top.
\label{eq:rare-matrix-factorization}
\end{align}
Also,
\begin{align}
\widetilde C_L(u)
&=\sqrt{\frac{2}{\pi L}}e^{-2L-u^2/2}\{1+o(1)\}.
\label{eq:rare-C-asymptotic}
\end{align}

Suppose first that $2<\alpha\leq4$.  The tail assumptions make the
integrands defining $\bK_{h_\ell,L}$ and $\bK_{j_\ell,L}$, including
their first two polynomial moments, uniformly integrable.  Therefore
\begin{align}
\bK_{f,L}(u)
&\longrightarrow
\bK_f:=\int_{\mathbb R}e^{-z}\cosh(z)f(z)
 \be(z)\be(z)^\top\,\de z,
\qquad f\in\{h_\ell,j_\ell\}.
\end{align}
Both limiting matrices are positive definite.  Substituting these limits,
\eqref{eq:rare-critical-quantile}, and
\eqref{eq:rare-C-asymptotic} into \eqref{eq:rare-omega} yields
\eqref{eq:rare-critical-alpha-large}, with
\begin{align}
C_\ell
&=\frac12\left(\frac{2}{\pi}\right)^{1/4}
 \sqrt{
 \begin{pmatrix}0&1\end{pmatrix}
 \bK_{h_\ell}^{-1}\bK_{j_\ell}\bK_{h_\ell}^{-1}
 \begin{pmatrix}0\\1\end{pmatrix}}
 \in(0,\infty).
\end{align}

Now suppose $\alpha=2$.  By
\eqref{eq:rare-Fisher-identity} and
\eqref{eq:LossAtInftyBoundary},
$j_\ell(z)=(c_h^2/4)e^{2z}\{1+o(1)\}$ as $z\to-\infty$.
With
$\bR_L:=\operatorname{diag}(1,\sqrt L)$, the substitution
$z=-\sqrt L\,y$ in the two matrices of
\eqref{eq:rare-matrix-factorization} gives
\begin{align}
\frac1{\sqrt L}\bR_L^{-1}\bK_{h_\ell,L}(u)\bR_L^{-1}
&\longrightarrow\frac{c_h}{2}\bM(u),\\
\frac1{\sqrt L}\bR_L^{-1}\bK_{j_\ell,L}(u)\bR_L^{-1}
&\longrightarrow\frac{c_h^2}{8}\bM(u).
\end{align}
The constant $c_h$ cancels from the sandwich
$\bH_0^{-1}\bOmega_0\bH_0^{-1}$.  Combining these limits with
\eqref{eq:rare-critical-quantile} and
\eqref{eq:rare-C-asymptotic} proves
\eqref{eq:rare-critical-alpha-two}.

The same  bounds used above give, for some $C,c>0$ and an
integer $m$, a bound of the form
$C(1+|u|)^m e^{-cu^2}$ in the critical window.  Together with
Lemma~\ref{lem:rare-localization}, this proves the claimed uniform
integrability.  Mills bounds applied to the defining integral for
$\bM(u)$ also show directly that $G_2$ is integrable.
\end{proof}

\subsection{Putting everything together: Proof of
Theorem~\ref{thm:Bounded-Suboptimality}}

\begin{proof}
The change of variables $t=L+u\sqrt L$ in
\eqref{eq:rare-variance-representation} gives the exact identity
\begin{align}
\frac{\omega_{\rm opt}(L)}{\sqrt{p_L}}
&=\frac1{2\sqrt L}
 \int_{-\sqrt L}^{\infty}
 \frac{W_L(u)}{1+u/\sqrt L}\,\de u.
\label{eq:rare-window-integral}
\end{align}
Proposition~\ref{prop:rare-critical-window}, the localization in
Lemma~\ref{lem:rare-localization}, and dominated convergence now evaluate
this integral.

If $2<\alpha\leq4$, then
\eqref{eq:rare-critical-alpha-large} gives
\begin{align}
\frac{\omega_{\rm opt}(L)}{\sqrt{p_L}}
&=\frac{C_\ell}{2}L^{1/4}\log L
 \int_{\mathbb R}e^{-u^2/4}\,\de u\{1+o(1)\}.
\end{align}
Using $v(\ell,q)=\omega_{\rm opt}(L)^2$ and
$v^{\sML}(q)=p_L(1-p_L)$, we obtain
\begin{align}
\frac{v(\ell,q)}{v^{\sML}(q)}
&=\frac{C_\ell^2}{4}
 \left(\int_{\mathbb R}e^{-u^2/4}\,\de u\right)^2
 L^{1/2}(\log L)^2\{1+o(1)\}
 \longrightarrow\infty.
\end{align}
Since $p_L=q(1)\downarrow0$ as $L\to\infty$, this is precisely
\eqref{eq:AlphaLarge-Equivalent}.

If $\alpha=2$, then \eqref{eq:rare-critical-alpha-two} gives
\begin{align}
\frac{\omega_{\rm opt}(L)}{\sqrt{p_L}}
&\longrightarrow\frac12\int_{\mathbb R}G_2(u)\,\de u
 \in(0,\infty).
\end{align}
Consequently,
\begin{align}
\lim_{q(1)\to0}\frac{v(\ell,q)}{v^{\sML}(q)}
&=\frac14\left(\int_{\mathbb R}G_2(u)\,\de u\right)^2
 <\infty,
\end{align}
which proves \eqref{eq:Alpha2-Equivalent}.

For $2<\alpha\leq4$, the first limit makes the supremum in
\eqref{eq:AlphaLarge} infinite.  For $\alpha=2$, label symmetry gives the
same finite limit as $q(1)\uparrow1$, while continuity bounds the ratio
on every compact subinterval of $(0,1)$.  The ratio is therefore bounded
on all of $(0,1)$, proving \eqref{eq:Alpha2}.  This completes the proof of
Theorem~\ref{thm:Bounded-Suboptimality}.
\end{proof}

%
%



\end{document}